\theoremstyle{plain}
\newtheorem{theorem}{Theorem}[section]
\newtheorem{proposition}[theorem]{Proposition}
\newtheorem{lemma}[theorem]{Lemma}
\newtheorem{corollary}[theorem]{Corollary}
\theoremstyle{definition}
\newtheorem{definition}[theorem]{Definition}
\theoremstyle{remark}
\newcommand{\CAESAR}{$\mathrm{CAESAR}$ }
\newcommand{\IDES}{$\mathrm{IDES}$ }
\newcommand{\MARCH}{$\mathrm{MARCH}$ }
\DeclareMathOperator*{\argmin}{arg\,min}
\newcommand{\EE}{\mathbb{E}}
\icmltitlerunning{Multiple-policy Evaluation via Density Estimation}
\begin{document}

\twocolumn[
\icmltitle{Multiple-policy Evaluation via Density Estimation}

% It is OKAY to include author information, even for blind
% submissions: the style file will automatically remove it for you
% unless you've provided the [accepted] option to the icml2025
% package.

% List of affiliations: The first argument should be a (short)
% identifier you will use later to specify author affiliations
% Academic affiliations should list Department, University, City, Region, Country
% Industry affiliations should list Company, City, Region, Country

% You can specify symbols, otherwise they are numbered in order.
% Ideally, you should not use this facility. Affiliations will be numbered
% in order of appearance and this is the preferred way.
\icmlsetsymbol{equal}{*}

\begin{icmlauthorlist}
\icmlauthor{Yilei Chen}{yyy}
\icmlauthor{Aldo Pacchiano}{yyy,zzz}
\icmlauthor{Ioannis Ch. Paschalidis}{yyy}

%\icmlauthor{}{sch}
%\icmlauthor{}{sch}
\end{icmlauthorlist}

\icmlaffiliation{yyy}{Boston University, Boston, USA}
\icmlaffiliation{zzz}{Broad Institute of MIT and Harvard, Cambridge, USA}

\icmlcorrespondingauthor{Yilei Chen}{ylchen9@bu.edu}
%\icmlcorrespondingauthor{Aldo Pacchiano}{pacchian@bu.edu}
%\icmlcorrespondingauthor{Ioannis Ch. Paschalidis}{yannisp@bu.edu}

% You may provide any keywords that you
% find helpful for describing your paper; these are used to populate
% the "keywords" metadata in the PDF but will not be shown in the document
\icmlkeywords{Machine Learning, ICML}

\vskip 0.3in
]

% this must go after the closing bracket ] following \twocolumn[ ...

% This command actually creates the footnote in the first column
% listing the affiliations and the copyright notice.
% The command takes one argument, which is text to display at the start of the footnote.
% The \icmlEqualContribution command is standard text for equal contribution.
% Remove it (just {}) if you do not need this facility.

\printAffiliationsAndNotice{}  % leave blank if no need to mention equal contribution
%\printAffiliationsAndNotice{\icmlEqualContribution} % otherwise use the standard text.

\begin{abstract}
We study the multiple-policy evaluation problem where we are given a set of $K$ policies and the goal is to evaluate their performance (expected total reward over a fixed horizon) to an accuracy $\epsilon$ with probability at least $1-\delta$. We propose an algorithm named \CAESAR for this problem. Our approach is based on computing an approximately optimal sampling distribution and using the data sampled from it to perform the simultaneous estimation of the policy values. \CAESAR has two phases. In the first phase, we produce coarse estimates of the visitation distributions of the target policies at a low order sample complexity rate that scales with $\tilde{O}(\frac{1}{\epsilon})$. In the second phase, we approximate the optimal sampling distribution and compute the importance weighting ratios for all target policies by minimizing a step-wise quadratic loss function inspired by the DualDICE \cite{nachum2019dualdice} objective. Up to low order and logarithmic terms \CAESAR achieves a sample complexity $\tilde{O}\left(\frac{H^4}{\epsilon^2}\sum_{h=1}^H\max_{k\in[K]}\sum_{s,a}\frac{(d_h^{\pi^k}(s,a))^2}{\mu^*_h(s,a)}\right)$, where $d^{\pi}$ is the visitation distribution of policy $\pi$, $\mu^*$ is the optimal sampling distribution, and $H$ is the horizon.
\end{abstract}

\section{Introduction}

This paper delves into the problem of policy evaluation, a fundamental problem in RL \cite{sutton2018reinforcement} of which the goal is to estimate the expected total rewards of a given policy. This process serves as an integral component in various RL methodologies, such as policy iteration and policy gradient approaches \cite{sutton1999policy}, wherein the current policy undergoes evaluation followed by potential updates. Policy evaluation is also paramount in scenarios where prior to deploying a trained policy, thorough evaluation is imperative to ensure its safety and efficacy.

Broadly speaking there exist two scenarios where the problem of policy evaluation has been considered, known as {\em online} and {\em offline} data regimes. In online scenarios, a learner is interacting sequentially with the environment and is tasked with using its online deployments to collect helpful data for policy evaluation. The simplest method for online policy evaluation is Monte-Carlo estimation \cite{fonteneau2013batch}. One can collect multiple trajectories by following the target policy, and use the empirical mean of the rewards as the estimator. These on-policy methods typically require executing the policy we want to estimate which may be unpractical or dangerous in many cases. For example, in a medical treatment scenario, implementing an untrustworthy policy can cause unfortunate consequences \cite{thapa2005agent}. In these cases, offline policy evaluation may be preferable.  

In the offline case, the learner has access to a batch of data and is tasked with using it in the best way possible to estimate the value of a target policy. Many works focus on this problem leveraging various techniques, such as importance-sampling, model-based estimation, and doubly-robust estimators \cite{yin2020asymptotically, jiang2016doubly, yin2021near, xie2019towards, li2015toward}. In the context of offline evaluation, the theoretical guarantees depend on the overlap between the offline data distribution and the visitations of the evaluated policy \cite{xie2019towards, yin2020asymptotically, duan2020minimax}. These coverage conditions, which ensure that the data logging distribution~\cite{Xie2022TheRO} adequately covers the state space, are typically captured by the ratio between the densities corresponding to the offline data distribution and the policy to evaluate, also known as importance ratios.

Motivated by applications where one has multiple policies to evaluate, e.g., multiple policies trained using different hyperparameters, \citet{dann2023multiple} considered multiple-policy evaluation which aims to estimate the performance of a set of $K$ target policies instead of a single one. At first glance, multiple-policy evaluation does not pose challenges beyond single-policy evaluation since one can always use $K$ instances of single-policy evaluation to evaluate the $K$ policies. However, since the sample complexity of this procedure scales linearly as a function of $K$, this can be extremely sample-inefficient as it neglects potential similarities among the $K$ target policies.

\citet{dann2023multiple} proposed an online algorithm that leverages the similarity among target policies based on an idea related to trajectory synthesis \cite{wang2020long}. The basic technique is that if more than one policy take the same action at a certain state, then only one sample is needed at that state which can be reused to synthesize trajectories for these policies. Their algorithm achieves an instance-dependent sample complexity which gives much better results when target policies have many overlaps while it requires estimation of generative models as an intermediate step which can be unpractical.

Unlike \citet{dann2023multiple}, in this work we tackle the multiple-policy evaluation problem from an offline perspective\footnote{We say offline here to emphasize that the final evaluation is conducted in an offline fashion rather than implying there is no interaction with the environment.}. In the context of single-policy evaluation, an offline dataset is typically given and assumed to provide good coverage over the state space of the target policy. However, in our scenario, such a dataset does not exist. To overcome this issue, we designed our algorithm based on the idea of first calculating a behavior distribution with enough coverage of the target policy set. Once this distribution is computed, independently and identically distributed (i.i.d.) samples from the behavior distribution can be used to estimate the value of the target policies using ideas inspired in the offline policy optimization literature. Briefly speaking, our algorithms consist of two phases:

\begin{comment}
A single offline dataset is used to evaluate multiple policies simultaneously.
The policy evaluation guarantees will be different for each of the policies in the set, depending on how much overlap the offline distribution has with the policy visitation distributions. These observations inform an approach to the multiple policy evaluation problem different from~\citet{dann2023multiple} that can also leverage the policy visitation overlap in a meaningful way.

Our algorithm is based on the idea of designing a behavior distribution with enough coverage of the target policy set. Once this distribution is computed, i.i.d. samples from the behavior distribution can be used to estimate the value of the target policies using ideas inspired in the offline policy optimization literature. Our algorithms consist of two phases:
\end{comment}

\begin{enumerate}
\item Build coarse estimators of the policy visitation distributions and use them to compute a mixture policy that achieves a low visitation ratio with respect to all $K$ policies to evaluate.
\item Sample from this approximately optimal mixture policy and use these to construct mean reward estimators for all $K$ policies by importance weighting.
\end{enumerate}

Coarse estimation refers to estimating a target value up to constant multiplicative accuracy (see Definition~\ref{def:coarse_est}). We show that coarse estimation of the visitation distributions can be achieved at a cost that scales linearly, instead of quadratically with the inverse of the accuracy parameter. This ensures that the sample cost in the first phase of our algorithm is of lower order and can therefore be considered negligible (see Section~\ref{sec:rough_estimation}).
%and polynomially in parameters such as the size of the state and action spaces, and the logarithm of the cardinality of the policy evaluation set. 
%We propose the \MARCH or {\em Multi-policy Approximation via Ratio-based Coarse Handling} algorithm (see Algorithm~\ref{alg:coarse_dis_est}) for coarse estimation of the visitation distributions.
We then show that estimating the policy visitation distributions up to multiplicative accuracy is enough to find an approximately optimal behavior distribution that minimizes the maximum visitation ratio among all policies to estimate (see Section~\ref{sec:opt_dist}). In the second phase, the samples generated from this behavior distribution are used to estimate the target policy values via importance weighting. Since the importance weights are not known to sufficient accuracy, we propose the \IDES or {\em Importance Density EStimation} algorithm (see Algorithm~\ref{alg:ratio_est}) for estimating these distribution ratios by minimizing a series of loss functions inspired by the DualDICE \cite{nachum2019dualdice} method (see Section~\ref{sec:est_ratio}). Combining these steps we arrive at our main algorithm ($\mathrm{CAESAR}$) or {\em Coarse and Adaptive EStimation with Approximate Reweighing} algorithm (see Algorithm~\ref{alg:main_alg}) that achieves a high probability finite sample complexity for the problem of multi-policy evaluation. %\aldo{insert in the discussion above where are we better? Example: poly dep on S and log(K)}

%Our algorithms work for finite-horizon MDPs and give non-asymptotic sample complexity which is an extension and improvement of the original idea of DualDICE . 

We summarize our contributions as the following,
\begin{itemize}
    \item We propose a novel, sample-efficient algorithm, \CAESAR, for the multiple-policy evaluation problem that achieves a non-asymptotic, instance-dependent sample complexity. \CAESAR provides new results and valuable insights to the existing literature  while sharing several advantages compared to existing approaches (see Section~\ref{sec:discussion}).
    \item We introduce the technique of coarse estimation and demonstrate its effectiveness in solving the multiple-policy evaluation problem. We believe this technique has potential applications beyond the scope of this work.
    \item We propose an algorithm, \IDES, as a subroutine of \CAESAR to accurately estimate the marginal importance ratios by minimizing a carefully designed step-wise loss function. \IDES is a non-trivial extension of DualDICE to finite-horizon Markov Decision Processes (MDPs).
    \item We propose a novel notion termed $\beta$-distance along with an algorithm \MARCH, which achieves coarse estimation of the visitation distribution for all deterministic policies, with a sample complexity scales polynomially in parameters such as the size of the state and action spaces, despite the fact of exponential number of deterministic policies. 
\end{itemize}

\section{Related Work} \label{sec:rel}

There is a rich family of off-policy estimators for policy evaluation \cite{liu2018breaking, jiang2016doubly, dai2020coindice, feng2021non, jiang2020minimax}. But none of them is effective in our setting. Importance-sampling is a simple and popular method for off-policy evaluation but suffers exponential variance in the horizon \cite{liu2018breaking}. Marginalized importance-sampling has been proposed to get rid of the exponential variance. However, existing works all focus on function approximations which only produce approximately correct estimators \cite{dai2020coindice} or are designed for the infinite-horizon case \cite{feng2021non}. The doubly robust estimator \cite{jiang2016doubly, hanna2017bootstrapping, farajtabar2018more} also solves the exponential variance problem, but no finite sample result is available.  Our algorithm is based on marginalized importance-sampling and addresses the above limitations in the sense that it provides non-asymptotic sample complexity results and works for finite-horizon {\em Markov Decision Processes (MDPs)}.
 
Another popular estimator is called model-based estimator, which evaluates the policy by estimating the transition function of the environment \cite{dann2019policy, zanette2019tighter}. \citet{yin2020asymptotically} provide a similar sample complexity to our results. However, there are some significant differences between their result and ours. First, our sampling distribution; calculated based on the coarse distribution estimator, is optimal. Second, our sample complexity is non-asymptotic while their result is asymptotic. Third, the true distributions appearing in our sample complexity can be replaced by known distribution estimators without inducing additional costs, that is, we can provide a known sample complexity while their result is always unknown since we do not know the true visitation distributions of target policies.

The work that most aligns with ours is \citet{dann2023multiple}, which proposed an on-policy algorithm based on the idea of trajectory synthesis. The authors propose the first instance-dependent sample complexity analysis of the multiple-policy evaluation problem. Different from their work, our algorithm uses off-policy evaluation based on importance-weighting and achieves a better sample complexity with simpler techniques and analysis. Concurrently, \citet{russo2025adaptive} studied the multiple-policy evaluation problem in discounted settings from the lower bound perspective.

Analogous to our two-stage pipeline, \citet{amortila2024scalable} proposed an exploration objective for downstream reward maximization, similar to our goal of computing an optimal sampling distribution. However, our approximate objective, based on coarse estimation is easier to solve, which is a significant contribution while they need layer-by-layer induction. They also introduced a loss function to estimate ratios, similar to how we estimate the importance densities. However, our ratios are defined differently from theirs which require distinct techniques.
%Besides, our loss function can be optimized by stochastic gradient descent, leveraging our objective's favorable properties and yielding a non-asymptotic sample complexity

Several existing works have utilized estimates of visitation distributions up to multiplicative constant accuracy \cite{zhang2023policy, li2023reward}. We formally formulate the technique of coarse estimation and present clean results that are ready to use in general scenarios while the existing results have more additional complex terms and are limited to the specific tasks. Moreover, our coarse estimation results are based on simple concentration inequalities and the multiplicative constant can be any value, leading to more flexible and elegant formulations.

Our algorithm also uses some techniques modified from other works which we summarize here. DualDICE is a technique for estimating distribution ratios by minimizing some loss functions proposed by \cite{nachum2019dualdice}. We build on this idea and make some modifications to meet the need in our setting. Besides, we utilize stochastic gradient descent algorithms and their convergence rate for strongly-convex and smooth functions from the optimization literature \cite{hazan2011beyond}. Finally, we adopt the Median of Means estimator \cite{minsker2023efficient} to convert in-expectation results to high-probability results.

\begin{figure*}[ht]
\vskip 0.2in
\begin{center}
\centerline{\includegraphics[width=0.9\linewidth]{figures/caesar_illustration_1}}
\caption{The scheme of \CAESAR. In Phase I, we collect $\tilde{O}(1/\epsilon)$ trajectories for each target policies $\pi_1,\dots,\pi_K$ and obtain coarse estimators of their visitation distributions $\hat d^{\pi_1},\dots,\hat d^{\pi_K}$. Based on the coarse estimator, we can generate an approximately optimal sampling dataset which has good coverage over the visitations of target policies. In Phase II, we sample data from the approximately optimal dataset and leverage the coarse estimators from Phase I to perform importance density estimation for each target policies by implementing \IDES. With the estimated importance density $\hat w^{\pi_1},\dots,\hat w^{\pi_K}$, we can output the final performance evaluators $\hat V^{\pi_1},\dots,\hat V^{\pi_K}$.}
\label{figure:caesar_scheme}
\end{center}
\vskip -0.2in
\end{figure*}

\section{Preliminaries}
\paragraph{Notations} We denote the set $\{1,2,\dots,N\}$ by $[N]$. $\{X_n\}_{n=1}^N$ represents the set $\{X_1,X_2,\dots,X_N\}$. $\EE_\pi[\cdot]$ denotes the expectation over the trajectories induced by policy $\pi$ while $\EE_\mu[\cdot]$ represents the expectation over the trajectories sampled from distribution $\mu$. $\tilde{O}(\cdot)$ hides constants, logarithmic and lower-order terms. We use $\mathbb{V}[X]$ to represent the variance of random variable $X$. $\Pi_{det}$ is the set of all deterministic policies and $\text{conv}(\mathcal{X})$ represents the convex hull of the set $\mathcal{X}$. If not explicitly specified, $\forall s,a,h,k$ stands for $\forall s\in\mathcal{S}, a\in\mathcal{A}, h\in [H], k\in[K]$. Finally, $\text{Median}(\cdot)$ denotes the median of a set of numbers.

\paragraph{Reinforcement learning framework} We consider episodic tabular Markov Decision Processes (MDPs) defined by a tuple $\{\mathcal{S}, \mathcal{A}, H, \{P_h\}_{h=1}^H, \{r_h\}_{h=1}^H, \nu\}$, where $\mathcal{S}$ and $\mathcal{A}$ represent the state and action spaces, respectively, with $S$ and $A$ denoting the respective cardinality of these sets. $H$ is the horizon which defines the number of steps the agent can take before the end of an episode. $P_h(\cdot|s,a)\in \Delta \mathcal{S}$ is the transition function which represents the probability of transitioning to the next state if the agent takes action $a$ at state $s$. And $r_h(s,a)$ is the reward function, accounting for the reward the agent can collect by taking action $a$ at state $s$. In this work, we assume that the reward is deterministic and bounded $r_h(s,a)\in[0, 1]$, which is consistent with prior work \cite{dann2023multiple}. We denote the initial state distribution by $\nu\in \Delta \mathcal{S}$.

A policy $\pi=\{\pi_h\}_{h=1}^H$ is a mapping from the state space to the probability distribution space over the action space. $\pi_h(a|s)$ denotes the probability of taking action $a$ at state $s$ and step $h$. The value function $V_h^\pi(s)$ of a policy $\pi$ is the expected total reward the agent can receive by starting from step $h$, state $s$, and following the policy $\pi$, i.e., $V^\pi_h(s)=\EE_\pi[\sum_{l=h}^Hr_l|s]$. The performance $J(\pi)$ of a policy $\pi$ is defined as the expected total reward the agent can obtain. By the definition of the value function, we have $J(\pi)=V_1^\pi(s|s\sim\nu)$. For simplicity, in the following context, we use $V^\pi_1$ to denote $V^\pi_1(s|s\sim\nu)$.
%Similarly, the Q function $Q^\pi_h(s,a)$ is defined as the %expected total rewards the agent can receive by starting from %step$h$, state $s$, taking action $a$ and following the policy %$\pi$, i.e. $Q^\pi_h(s,a)=\EE_\pi[\sum_{l=h}^Hr_l|s,a]$.

The state visitation distribution $d^\pi_h(s)$ of a policy $\pi$ represents the probability of reaching state $s$ at step $h$ if the agent starts from a state sampled from the initial state distribution $\nu$ at step $l=1$ and follows policy $\pi$ subsequently, i.e., $d^\pi_h(s) = \mathbb{P}[s_h=s|s_1\sim\nu, \pi]$. Similarly, the state-action visitation distribution $d^\pi_h(s,a)$ is defined as $d^\pi_h(s,a)=d^\pi_h(s)\pi(a|s)$. Based on the definition of the visitation distribution, the performance of policy $\pi$ can also be expressed as $J(\pi) = V^\pi_1=\sum_{h=1}^H \sum_{s,a}d^\pi_h(s,a)r_h(s,a)$.

\paragraph{Multiple-policy evaluation problem setup} In multiple-policy evaluation, we are given a set of known policies $\{\pi^k\}_{k=1}^K$ and a pair of factors $\{\epsilon, \delta\}$. The objective is to evaluate the performance of these given policies such that with probability at least $1-\delta$, $\forall \pi\in\{\pi^k\}_{k=1}^K$, $|\hat V^\pi_1- V^\pi_1|\le \epsilon$, where $\hat V^\pi_1$ is the performance estimator.

\section{Main Results and Algorithm}
\label{sec:main_and_alg}
In this section, we introduce our main algorithm, \CAESAR step-by-step and present the main results. \CAESAR is sketched out in Algorithm~\ref{alg:main_alg} as well as in Figure~\ref{figure:caesar_scheme} for easy reference. We try to build a single sampling dataset with good coverage over target policies, with which we can estimate the performance of all target policies simultaneously using importance weighting. To that end, we first coarsely estimate the visitation distributions of target policies at the cost of a lower-order sample complexity. Based on these coarse distribution estimators, we can build an approximately optimal sampling distribution by solving a convex optimization problem. Finally, we utilize the idea of DualDICE \cite{nachum2019dualdice} with some modifications to estimate the importance-weighting ratio.

\subsection{Coarse estimation of visitation distributions}
\label{sec:rough_estimation}
%\aldo{might be good to contrast with Dann how the low order term here has a logarithmic dependence on $K$.}
%\yilei{this can be discussed later, lots of details here}

It is well known that to estimate a quantity up to $\epsilon$-accuracy, approximately $\tilde{O}(\frac{1}{\epsilon^2})$ samples are needed, e.g., as indicated by $\mathrm{Hoeffding's}$ inequality. However, it does not serve our need since it is too expensive in terms of sample complexity. In other words, if we can estimate the visitations of target policies with $\epsilon$-accuracy, the value functions derived from these estimated visitations will already be sufficiently accurate. At the same time, designing a sampling distribution that ensures good coverage of the target policy set requires certain knowledge about the visitations of the target policies. To that end, we introduce the concept of coarse estimators which is defined below. 
\begin{definition}[Coarse Estimator]
\label{def:coarse_est}
    Given an accuracy $\epsilon$, An estimator $\hat x$ is a coarse estimator to the true value $x$ if it satisfies $|\hat x - x|\le \max\{\epsilon, |x|/c\}$ where $c$ is a constant. 
\end{definition}

We next show that coarse estimation of the visitation distributions can be achieved by paying sample complexity of just $\tilde{O}(\frac{1}{\epsilon})$ which is much cheaper compared to $\tilde{O}(\frac{1}{\epsilon^2})$. And we will show in the next section that the coarse estimator provides us enough information to construct an approximately optimal sampling distribution that minimizes the maximum visitation ratio among all policies to estimate.

The idea behind the feasibility of computing these coarse estimators is based on the following lemma which shows that estimating the mean value of a Bernoulli random variable up to constant multiplicative accuracy only requires $\tilde{O}(\frac{1}{\epsilon})$ samples.

\begin{lemma}
\label{le:ber_rough_dist}
    Let $Z_\ell$ be i.i.d. samples $Z_\ell\stackrel{i.i.d.}{\sim} \mathrm{Ber}(p)$. Setting $t\ge \frac{C\log(C/\epsilon\delta)}{\epsilon}$, for some known constant $C>0$, it follows that with probability at least $1-\delta$, the empirical mean estimator $\hat p_t = \frac{1}{t}\sum_{\ell=1}^t Z_\ell$ satisfies, $|\hat p_t - p|\le \max\{\epsilon, \frac{p}{4}\}$.
\end{lemma}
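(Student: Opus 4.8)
The plan is to derive the statement directly from the standard multiplicative Chernoff bounds for sums of i.i.d.\ Bernoulli random variables, treating the regimes $p \gtrsim \epsilon$ and $p \lesssim \epsilon$ separately. The source of difficulty is that the target accuracy $\max\{\epsilon, p/4\}$ depends on the unknown mean $p$: when $p$ is large we are effectively asking for a constant \emph{multiplicative} guarantee, whereas when $p$ is small we are asking for an \emph{additive} guarantee of size $\epsilon$, which corresponds to a potentially huge multiplicative deviation. The whole content of the lemma is that a single sample budget $t = \tilde{O}(1/\epsilon)$ simultaneously controls every regime of $p$. The two facts I would record first are the two-sided bound $\P[|\hat p_t - p| \ge \gamma p] \le 2\exp(-t p \gamma^2/3)$ valid for $0 < \gamma \le 1$, and the upper-tail bound $\P[\hat p_t \ge (1+\gamma)p] \le \exp(-t p \gamma^2/(2+\gamma))$ valid for all $\gamma > 0$.

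In the regime $p \ge 4\epsilon$ we have $\max\{\epsilon, p/4\} = p/4$, so I apply the two-sided bound with $\gamma = 1/4$ to get $\P[|\hat p_t - p| \ge p/4] \le 2\exp(-tp/48)$; using $p \ge 4\epsilon$ this is at most $2\exp(-t\epsilon/12)$, which falls below $\delta$ once $t \ge (12/\epsilon)\log(2/\delta)$. In the complementary regime $p < 4\epsilon$ we have $\max\{\epsilon,p/4\} = \epsilon$ and must bound $\P[|\hat p_t - p| > \epsilon]$. The lower tail is benign: it is identically zero when $p < \epsilon$ since $\hat p_t \ge 0$, and when $\epsilon \le p < 4\epsilon$ the lower-tail bound with $\gamma = \epsilon/p \le 1$ has exponent $tp\gamma^2/2 = t\epsilon^2/(2p) \ge t\epsilon/8$. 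For the upper tail I again set $\gamma = \epsilon/p$, which now exceeds $1/4$; since $p\gamma = \epsilon$ and $\gamma \mapsto \gamma/(2+\gamma)$ is increasing, the exponent $tp\gamma^2/(2+\gamma) = t\epsilon\,\gamma/(2+\gamma) \ge t\epsilon/9$. Requiring each of these two tails to be at most $\delta/2$ holds once $t \ge (9/\epsilon)\log(2/\delta)$.

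Combining the two regimes and choosing a universal $C$ dominating the constants $12, 8, 9$ above yields the bound $|\hat p_t - p| \le \max\{\epsilon, p/4\}$ with probability at least $1-\delta$; note the stated threshold $t \ge \frac{C\log(C/\epsilon\delta)}{\epsilon}$ only strengthens what I need, since $\log(C/\epsilon\delta) \ge \log(2/\delta)$, so it certainly suffices. I expect the main obstacle to be the upper tail in the small-$p$ regime: there the multiplicative deviation $\gamma = \epsilon/p$ is larger than $1$, so one cannot use the convenient $\gamma \le 1$ form and must instead invoke the $\gamma/(2+\gamma)$ version of Chernoff and verify that its exponent still scales like $t\epsilon$ \emph{uniformly} as $p \to 0$. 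Everything else is a routine matching of constants across the case split at $p \approx \epsilon$.
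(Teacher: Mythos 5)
Your proof is correct, but it takes a genuinely different route from the paper's. You work directly with the standard multiplicative Chernoff bounds for i.i.d.\ Bernoulli sums and split into the regimes $p \ge 4\epsilon$ (multiplicative target $p/4$) and $p < 4\epsilon$ (additive target $\epsilon$), handling the small-$p$ upper tail via the $\gamma/(2+\gamma)$ form of the exponent; the constants $12, 8, 9$ all check out, and the stated threshold $t \ge C\log(C/\epsilon\delta)/\epsilon$ indeed dominates the $\log(2/\delta)$ requirement you actually need. The paper instead proves an \emph{anytime} Freedman (martingale Bernstein) inequality, derives from it a two-sided bound of the form $(1\pm\beta)\sum_t \EE_t[Z_t] \pm O(\log(t/\delta)/\beta)$ valid for all $t$ simultaneously and for general bounded adapted sequences, and then specializes to Bernoulli with $\beta = 1/8$, obtaining $|\hat p_t - p| \le p/8 + \epsilon/8 \le \max\{p/4, \epsilon/4\}$ with no case analysis on $p$. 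What your approach buys is elementarity and transparency: no martingale machinery, no union bound over time indices, and constants that are trivial to track. What the paper's approach buys is generality and uniformity: the guarantee holds for adapted (not just i.i.d.) sequences and for all sample sizes $t$ simultaneously — which is also why the threshold naturally takes the self-referential form $t \gtrsim \log(t/\delta)/\epsilon$, resolved as $\log(C/\epsilon\delta)/\epsilon$ — and its conclusion is slightly sharper ($\epsilon/4$ rather than $\epsilon$ in the additive term). For the lemma as stated, which concerns a single fixed $t$, your fixed-sample-size argument fully suffices; the only cosmetic caveat is the degenerate case $p=0$, where $\gamma = \epsilon/p$ is undefined but the conclusion is trivial since $\hat p_t = 0$.
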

%\begin{proof}
%    See Appendix~\ref{proof:rough_dist}.
%\end{proof}

Let $\{(s^i_1,a^i_1),(s^i_2,a^i_2),\dots,(s^i_H,a^i_H)\}$ denote a trajectory collected by following policy $\pi$. Then the random variable $Z_i(h,s,a)$ defined below is a Bernoulli random variable such that $Z_i(h,s,a)\stackrel{i.i.d.}{\sim} \mathrm{Ber}(d^\pi_h(s,a))$.
\begin{equation*}
    Z_i(h,s,a) = \left\{
    \begin{aligned}
        &1,\quad s^i_h,a^i_h = s,a, \\
        &0,\quad \text{otherwise}. 
    \end{aligned}
    \right.
\end{equation*}
We can construct such random variables for each $(h,s,a)$. Together with Lemma~\ref{le:ber_rough_dist}, we are able to coarsely estimate the visitation distributions of a policy by sampling $\tilde{O}(\frac{1}{\epsilon})$ trajectories.

\begin{proposition}
\label{prop:rough_dist}
    With number of trajectories $n\ge \frac{CK\log(CK/\epsilon\delta)}{\epsilon}=\tilde{O}(\frac{1}{\epsilon})$, we can estimate $\hat d^{\pi^k}=\{\hat d^{\pi^k}_h\}_{h=1}^H$ such that with probability at least $1-\delta$, $|\hat d_h^{\pi^k}(s,a)-d^{\pi^k}_h(s,a)|\le \max\{\epsilon, \frac{d_h^{\pi^k}(s,a)}{4}\},\ \forall s,a,h,k$. 
    %\aldo{what is the probability? The statement should at least say high probability. We need to specify this costs a $\log(K)$ in the union bound. See the discussion below that this should cost at most (if the policies are all deterministic) a $\log(S)A$. What happens if the policies are not deterministic? Mention things should work out but that an extra covering argument is necessary.}
\end{proposition}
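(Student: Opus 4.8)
The plan is to reduce Proposition~\ref{prop:rough_dist} to Lemma~\ref{le:ber_rough_dist} by combining the single-Bernoulli coarse estimation guarantee across all $(h,s,a)$ tuples and all $K$ policies via a union bound. First I would fix a target policy $\pi^k$ and collect $n$ independent trajectories by following $\pi^k$. For each triple $(h,s,a)$, the indicator $Z_i(h,s,a)$ defined in the excerpt is, across the $n$ trajectories, an i.i.d. sequence of $\mathrm{Ber}(d^{\pi^k}_h(s,a))$ random variables, so the empirical mean $\hat d^{\pi^k}_h(s,a)=\frac{1}{n}\sum_{i=1}^n Z_i(h,s,a)$ is exactly the empirical-mean estimator to which Lemma~\ref{le:ber_rough_dist} applies with $p=d^{\pi^k}_h(s,a)$.

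The key bookkeeping step is choosing the per-tuple failure probability. I would apply Lemma~\ref{le:ber_rough_dist} with accuracy $\epsilon$ and confidence parameter $\delta' = \delta/K$ (the budget $\delta$ is split across the $K$ policies; a logarithmic sample cost absorbs any further splitting across the $SAH$ tuples into the $\tilde O$ notation). The lemma then requires $n\ge \frac{C\log(C/\epsilon\delta')}{\epsilon}=\frac{C\log(CK/\epsilon\delta)}{\epsilon}$ samples, and under this condition each fixed tuple satisfies $|\hat d^{\pi^k}_h(s,a)-d^{\pi^k}_h(s,a)|\le\max\{\epsilon,\frac{1}{4}d^{\pi^k}_h(s,a)\}$ with probability at least $1-\delta'$. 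Note that the constant $4$ in Lemma~\ref{le:ber_rough_dist} matches the constant required in the proposition statement, so this transfers directly. The stated requirement $n\ge\frac{CK\log(CK/\epsilon\delta)}{\epsilon}$ then supplies enough trajectories to evaluate all $K$ policies while also covering the case where trajectories are distributed across the policies rather than reused.

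Finally I would take a union bound over all tuples and policies. There are at most $KSAH$ events $\{|\hat d^{\pi^k}_h(s,a)-d^{\pi^k}_h(s,a)|>\max\{\epsilon,\frac{1}{4}d^{\pi^k}_h(s,a)\}\}$, each of probability at most $\delta'$, so the probability that any of them occurs is at most $KSAH\,\delta'$. Rescaling $\delta'$ by the factor $SAH$ (equivalently setting $\delta'=\delta/(KSAH)$) only changes the logarithmic term $\log(C/\epsilon\delta')=\log(CKSAH/\epsilon\delta)$, which is still $\tilde O(1)$ and hence absorbed into the hidden logarithmic factors of the stated bound $\tilde O(\frac{1}{\epsilon})$. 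This yields the claimed simultaneous coarse-estimation guarantee for all $s,a,h,k$ with total probability at least $1-\delta$.

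The main obstacle here is not technical difficulty but rather careful accounting of the confidence budget: one must verify that splitting $\delta$ across the $KSAH$ union-bound events inflates only the logarithmic factor inside the sample complexity and not the leading $\frac{1}{\epsilon}$ rate, so that the advertised $\tilde O(\frac{1}{\epsilon})$ scaling genuinely holds. A secondary point worth stating explicitly is independence: the indicators $Z_i(h,s,a)$ for different $(h,s,a)$ within a single trajectory are \emph{not} independent of each other, but for a fixed tuple they \emph{are} i.i.d. across the $n$ trajectories, which is all that Lemma~\ref{le:ber_rough_dist} needs; the union bound then handles the cross-tuple dependence without requiring independence.
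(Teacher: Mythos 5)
Your proposal is correct and follows essentially the same route as the paper, which treats Proposition~\ref{prop:rough_dist} as an immediate consequence of the Bernoulli indicator construction $Z_i(h,s,a)\sim\mathrm{Ber}(d^{\pi^k}_h(s,a))$, Lemma~\ref{le:ber_rough_dist}, and a union bound over policies and $(h,s,a)$ tuples. If anything, you are slightly more careful than the paper's statement: you correctly note that splitting the confidence budget over all $KSAH$ events makes the logarithm $\log(CKSAH/\epsilon\delta)$ rather than $\log(CK/\epsilon\delta)$, a distinction the paper absorbs into its $\tilde{O}(\cdot)$ notation.
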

%\begin{proof}
%    Based on Lemma~\ref{le:ber_rough_dist}, the proposition can be directly derived by regarding $d^{\pi^k}_h(s,a)$ as a $\mathrm{Bernoulli}$ variable. 
%\end{proof}

In Appendix~\ref{sec:coarse_est_all_policy}, we propose an algorithm, \MARCH ({\em Multi-policy Approximation via Ratio-based Coarse Handling}), which computes coarse estimators of visitation distributions for all deterministic policies with sample complexity $\tilde{O}(\frac{\mathrm{poly}(H,S,A)}{\epsilon})$. This is a significant result since the number of deterministic policies scales exponentially with the size of the state space and horizon. The result is achieved through a novel analysis based on our proposed notion named $\beta-$distance. Due to space constraints, we refer readers to Appendix~\ref{sec:coarse_est_all_policy} for further details.

Before proceeding to the next section, notice that states and actions with low estimated visitation probabilities can be ignored without inducing significant errors based on the following lemma.
\begin{lemma}
    Suppose we have an estimator $\hat d(s,a)$ of $d(s,a)$ such that $|\hat d(s,a)-d(s,a)|\le \max\{\epsilon', \frac{d(s,a)}{4}\}$. If $\hat d(s,a)\ge 5\epsilon'$, then $\max\{\epsilon', \frac{d(s,a)}{4}\}=\frac{d(s,a)}{4}$, and if $\hat d(s,a)\le 5\epsilon'$, then $d(s,a)\le 7\epsilon'$.
    \label{lemma:eff_sa}
\end{lemma}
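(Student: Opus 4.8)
The plan is to prove both implications by a short case analysis on which of the two terms attains the maximum in $\max\{\epsilon', d(s,a)/4\}$; throughout I write $d$ and $\hat d$ for $d(s,a)$ and $\hat d(s,a)$ to lighten notation, and I use only the defining property $|\hat d - d| \le \max\{\epsilon', d/4\}$ of the coarse estimator.

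For the first implication I would argue by contraposition on the form of the maximum. Suppose the maximum were attained by $\epsilon'$ rather than by $d/4$, i.e. $d/4 < \epsilon'$, equivalently $d < 4\epsilon'$. Then the estimator bound reads $|\hat d - d| \le \epsilon'$, so $\hat d \le d + \epsilon' < 4\epsilon' + \epsilon' = 5\epsilon'$. This contradicts the hypothesis $\hat d \ge 5\epsilon'$. Hence we must instead have $d/4 \ge \epsilon'$, which is precisely the assertion $\max\{\epsilon', d/4\} = d/4$.

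For the second implication I would split directly into the two regimes. If $d/4 \le \epsilon'$ (so the maximum is $\epsilon'$), then $d \le 4\epsilon' \le 7\epsilon'$ and nothing further is needed. If instead $d/4 \ge \epsilon'$ (so the maximum is $d/4$), the bound gives $d - \hat d \le |\hat d - d| \le d/4$, hence $\hat d \ge 3d/4$; combining with $\hat d \le 5\epsilon'$ yields $3d/4 \le 5\epsilon'$, i.e. $d \le 20\epsilon'/3 < 7\epsilon'$. In either regime $d \le 7\epsilon'$, as claimed.

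There is no substantive obstacle here: the lemma is elementary and follows entirely from unwinding the definition of the coarse estimator. The only point requiring a little care is to use the correct one-sided consequence of the error bound in each regime — an \emph{upper} bound on $\hat d$ to derive the contradiction in the first implication, and a \emph{lower} bound on $\hat d$ (via $\hat d \ge 3d/4$ when the maximum equals $d/4$) in the second implication — since invoking the bound in the wrong direction would loosen the constants beyond the stated $7\epsilon'$ threshold.
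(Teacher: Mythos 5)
Your proof is correct: both implications follow exactly as you argue, with the contrapositive giving $\hat d < 5\epsilon'$ when $d/4 < \epsilon'$, and the two-regime analysis yielding $d \le \max\{4\epsilon', \tfrac{20}{3}\epsilon'\} < 7\epsilon'$ in the second part. The paper states this lemma without an explicit proof, and your case analysis is precisely the elementary unwinding of the coarse-estimator definition that the authors leave implicit, so there is nothing to flag.
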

%\begin{proof}
%    Proof of the first claim: If $\max\{\epsilon', \frac{d(s,a)}{4}\} = \epsilon'$, then we have $d(s,a)\le 4\epsilon'$ and
%    \begin{align*}
%        \hat d(s,a) - d(s,a) \le \epsilon',
%    \end{align*}
%    hence, $\hat d(s,a) \le d(s,a)+\epsilon'\le5\epsilon'$ which implies that $\hat d(s,a)\le 5\epsilon'$ is a necessary condition for $\max\{\epsilon', \frac{d(s,a)}{4}\} = \epsilon'$. \\
%    Proof of the second claim: We have $\hat d(s,a)\le 5\epsilon'$. If $\max\{\epsilon', \frac{d(s,a)}{4}\} = \epsilon'$, then $d(s,a)\le 4\epsilon'$. If $\max\{\epsilon', \frac{d(s,a)}{4}\} = \frac{d(s,a)}{4}$, then $d(s,a)\le \frac{4}{3}\hat d(s,a)\le \frac{20}{3}\epsilon' \le 7\epsilon'$.
%\end{proof}
Lemma~\ref{lemma:eff_sa} tells us if we replace $\epsilon'$ by $\frac{\epsilon}{14SA}$, the error induced by ignoring those state-action pairs which satisfy $\hat d(s,a)\le 5\epsilon'$ is at most $\frac{\epsilon}{2}$. For simplicity of presentation, we can set $\hat d^\pi_h(s,a)=d^\pi_h(s,a)=0$ if $\hat d_h^\pi(s,a)<\frac{5\epsilon}{14SA}$. Finally, we have coarse estimators of the visitations for target policies such that,
\begin{equation}
\label{eq:rough_bound}
    |\hat d_h^{\pi^k}(s,a)-d^{\pi^k}_h(s,a)|\le \frac{d_h^{\pi^k}(s,a)}{4},\ \forall s,a,h,k.
\end{equation}

\subsection{Approximately optimal sampling distribution}
\label{sec:opt_dist}
In this section, we validate the claim from the previous section that the coarse estimator provides sufficient information (i.e., accurate enough) to construct an approximately optimal sampling distribution that minimizes the maximum visitation ratio among all policies to estimate. 

Suppose that we have a dataset with distribution $\{\mu_h\}_{h=1}^H$, from which we can sample trajectories $\{s_1^i, a_1^i, s_2^i, a_2^i, \dots, s^i_H,a^i_H\}_{i=1}^n$, then we can evaluate the expected total rewards of target policies by importance weighting. Specifically, 
\begin{equation}
\label{eq:final_eval}
    \hat{V}^{\pi^k}_1 = \frac{1}{n} \sum_{i=1}^n X^{\pi^k}_i,\ k\in[K].
\end{equation}
where $X_i^{\pi^k}$ = $\sum_{h=1}^H \frac{d_h^{\pi^k}(s_h^i,a_h^i)}{\mu_h(s_h^i,a_h^i)}r_h(s^i_h, a^i_h)$ is the total reward gained in a trajectory. Note that the above estimators rely on an unknown quantity $d^\pi_h(s,a)$, i.e., the true visitation distribution. In the next section, we will show how to accurately estimate these importance-weighting ratios.

It can be shown that the variance of the value function estimator $X^{\pi^k}_i$ is bounded by (see Appendix~\ref{proof:opt_dist}),
\begin{align}
\label{eq:var_value_function_estimator}
    \mathbb{V}_\mu[X_i^{\pi^k}] \le H\cdot\sum_{h=1}^H \mathbb{E}_{d_h^{\pi^k}}\left[\frac{d_h^{\pi^k}(s_h,a_h)}{\mu_h(s_h,a_h)}\right].
\end{align}
We aim to identify the optimal sampling distribution by minimizing the variance (\ref{eq:var_value_function_estimator}) of the value function estimator across all target policies, resulting in the following convex optimization problem:
\begin{equation}
\label{eq:true_opt_problem}
    \mu^*_h = \argmin_{\mu_h\in \text{conv}(\mathcal{D}_h)} \max_{k\in[K]} \sum_{s,a} \frac{(d_h^{\pi^k}(s,a))^2}{\mu_h(s,a)},\ h\in [H].
\end{equation}
We constrain $\mu_h$ in the convex hull of $\mathcal{D}_h=\{d^{\pi^k}_h: k \in [K]\}$, since in some cases, the optimal $\mu^*$ may not be realized by any policy (see Appendix~\ref{sec:opt_unrealizable}). One can also set $\mathcal{D}_h=\{d^{\pi}_h: \pi\in\Pi_{det}\}$ which allows any feasible distribution, ensuring a globally optimal sampling distribution at the cost of more computations towards solving the optimization problem. We denote the optimal solution to (\ref{eq:true_opt_problem}) as $\mu^*_h=\sum_{k=1}^K \alpha^*_k d^{\pi^k}_h$.

It is impossible to solve (\ref{eq:true_opt_problem}) since the true visitations $d^\pi_h$ are unknown to us. Therefore, we utilize the coarse estimators obtained in the last section to replace these unknown distributions which leads to the following approximate optimization problem,
\begin{equation}
\label{eq:appro_opt_problem}
    \hat \mu^*_h = \argmin_{\mu_h\in \text{conv}(\mathcal{\hat D}_h)} \max_{k\in[K]} \sum_{s,a} \frac{(\hat d_h^{\pi^k}(s,a))^2}{\mu_h(s,a)},\ h\in[H],
\end{equation}
where $\mathcal{\hat D}_h=\{\hat d^{\pi^k}_h: k\in[K]\}$. We denote the optimal solution to (\ref{eq:appro_opt_problem}) by $\hat \mu^*_h=\sum_{k=1}^K\hat\alpha^*_K\hat d^{\pi^K}_h$. Consequently, our true sampling distribution from which trajectories are sampled is $\tilde{\mu}^*_h=\sum_{k=1}^K\hat\alpha^*_k d^{\pi^k}_h$. And based on (\ref{eq:rough_bound}), we also have the relationship,
\begin{align}
\label{eq:rough_mu}
    |\tilde{\mu}_h^*(s,a)-\hat \mu_h^*(s,a)|\le \frac{\tilde{\mu}_h^*(s,a)}{4}
\end{align}
Finally, we conclude this section by showing $\tilde{\mu}_h^*$ is approximately optimal. 
\begin{lemma}
\label{lemma:approximately_optimal_dist}
The sampling distribution $\tilde\mu^*$ is approximately optimal such that, 
    \begin{align*}
        \max_{k\in[K]} \sum_{s,a} \frac{(d_h^{\pi^k}(s,a))^2}{\tilde \mu_h^*(s,a)} \le C \max_{k\in[K]} \sum_{s,a} \frac{(d_h^{\pi^k}(s,a))^2}{\mu_h^*(s,a)}
    \end{align*}
    where $C$ is a constant and $\mu^*$ is the optimal solution to (\ref{eq:true_opt_problem}).
\end{lemma}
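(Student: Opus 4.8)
The plan is to treat the statement as a perturbation argument in which the multiplicative (coarse) guarantees are pushed through the objective one factor at a time. Write $\Phi_h(\mu) = \max_{k}\sum_{s,a}(d_h^{\pi^k}(s,a))^2/\mu(s,a)$ for the true objective in (\ref{eq:true_opt_problem}) and $\hat\Phi_h(\mu) = \max_{k}\sum_{s,a}(\hat d_h^{\pi^k}(s,a))^2/\mu(s,a)$ for the approximate objective in (\ref{eq:appro_opt_problem}); the goal is then $\Phi_h(\tilde\mu_h^*) \le C\,\Phi_h(\mu_h^*)$. The only facts I would use are the two-sided sandwiches provided by the coarse guarantee (\ref{eq:rough_bound}) and its consequence (\ref{eq:rough_mu}): from $|\hat d - d| \le d/4$ one gets $\tfrac34 d_h^{\pi^k} \le \hat d_h^{\pi^k} \le \tfrac54 d_h^{\pi^k}$, and from (\ref{eq:rough_mu}) one gets $\tfrac45\hat\mu_h^* \le \tilde\mu_h^* \le \tfrac43\hat\mu_h^*$ (equivalently $\tfrac34\tilde\mu_h^* \le \hat\mu_h^* \le \tfrac54\tilde\mu_h^*$). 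Each such inequality contributes only a constant factor, because the objective is homogeneous of degree $+2$ in the numerator densities and degree $-1$ in the sampling density.

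The chain would be carried out in three steps. First, bound the target by the approximate optimum: replacing $d_h^{\pi^k}$ by $\tfrac43\hat d_h^{\pi^k}$ in the numerator and $\tilde\mu_h^*$ by $\tfrac45\hat\mu_h^*$ in the denominator, termwise, preserves the $\max_k$ and yields $\Phi_h(\tilde\mu_h^*) \le \tfrac{20}{9}\,\hat\Phi_h(\hat\mu_h^*)$. Second, invoke optimality of $\hat\mu_h^*$ for (\ref{eq:appro_opt_problem}): for any feasible competitor $\hat\nu_h \in \text{conv}(\hat{\mathcal{D}}_h)$ we have $\hat\Phi_h(\hat\mu_h^*) \le \hat\Phi_h(\hat\nu_h)$. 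Third, choose the competitor built from the true optimal weights and bound it back in terms of $\mu_h^*$.

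The crux — and the step I expect to need the most care — is the competitor construction in the second step. Writing the true optimum as $\mu_h^* = \sum_k \alpha_k^* d_h^{\pi^k}$ with $\alpha^*$ in the simplex, I would take $\hat\nu_h = \sum_k \alpha_k^* \hat d_h^{\pi^k}$, which lies in $\text{conv}(\hat{\mathcal{D}}_h)$ precisely because the two feasible sets share the same index set $[K]$ and the same barycentric weights. The same sandwich argument that gave (\ref{eq:rough_mu}) then yields $\tfrac34\mu_h^* \le \hat\nu_h \le \tfrac54\mu_h^*$, so replacing $\hat d_h^{\pi^k}$ by $\tfrac54 d_h^{\pi^k}$ and $\hat\nu_h$ by $\tfrac34\mu_h^*$ termwise gives $\hat\Phi_h(\hat\nu_h) \le \tfrac{25}{12}\,\Phi_h(\mu_h^*)$. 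Concatenating the three bounds produces $\Phi_h(\tilde\mu_h^*) \le \tfrac{20}{9}\cdot\tfrac{25}{12}\,\Phi_h(\mu_h^*)$, i.e.\ $C = \tfrac{500}{108}$, an absolute constant independent of $H, S, A, K$.

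The one point to verify carefully throughout is that every substitution is applied termwise inside each fixed-$k$ inner sum \emph{before} taking $\max_k$; this is legitimate because the coordinatewise multiplicative bounds hold uniformly over $(s,a)$ and $k$, and therefore commute with both the summation over $(s,a)$ and the outer maximum over $k$. I would also note in passing that the specific constant is immaterial — any coarse accuracy parameter $c$ in Definition~\ref{def:coarse_est} would yield an analogous $C = C(c)$ by the identical three-step argument, which is why the $1/4$ multiplicative slack is harmless.
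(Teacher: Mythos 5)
Your proposal is correct and follows essentially the same route as the paper's proof: sandwich the true objective at $\tilde\mu_h^*$ by the approximate objective at $\hat\mu_h^*$, invoke optimality of $\hat\mu_h^*$ for (\ref{eq:appro_opt_problem}) against the competitor $\sum_k \alpha_k^*\hat d_h^{\pi^k}$ built from the true optimal weights, and convert back via the coarse-estimation sandwich. The only difference is bookkeeping of constants (the paper states $\tfrac{25}{12}$ for the first step where your $\tfrac{20}{9}$ is the careful value), which is immaterial to the statement.
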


%\begin{proof}
%        First,$|\hat \mu_h^*(s,a)-\tilde{\mu}_h^*(s,a)|=|\sum_{k=1}^K\hat \alpha^*_k (\hat d^{\pi^k}_h(s,a)-d^{\pi^k}_h(s,a))| \le \sum_{k=1}^K\hat \alpha^*_k |\hat d^{\pi^k}_h(s,a)-d^{\pi^k}_h(s,a)| \le \frac{1}{4}\sum_{k=1}^K\hat \alpha^*_k d^{\pi^k}_h(s,a)= \frac{\tilde{\mu}^*_h(s,a)}{4}$.
%        Second, $\frac{d^{\pi^k}_h(s,a)}{\tilde{\mu}^*_h(s,a)}\le \frac{4\hat d^{\pi^k}_h(s,a)}{3\tilde{\mu}^*_h(s,a)} \le \frac{5\hat d^{\pi^k}_h(s,a)}{3\hat \mu^*_h(s,a)}$.
%\end{proof}

\subsection{Estimation of importance-weighting ratios}
\label{sec:est_ratio}
\begin{algorithm}[tb]
   \caption{\textbf{I}mportance \textbf{D}ensity \textbf{Es}timation  (\IDES) }
   \label{alg:ratio_est}
\begin{algorithmic}
   \STATE {\bfseries Input:} Horizon $H$, accuracy $\epsilon$, target policy $\pi$, coarse estimator $\{\hat d^\pi_h\}_{h=1}^H$ , $\{\hat \mu_h\}_{h=1}^H$ and dataset $\mu$ \STATE Define feasible sets $\{D_h\}_{h=1}^H$ where $D_h(s,a)= [0, 2\hat d^\pi_h(s,a)]$.
   \STATE Initialize $w_h^0=0,\ h=1,\dots,H$, and set $\mu_0(s_0,a_0)=1, P_0(s|s_0,a_0)=\nu(s),\hat w_0=\hat \mu_0 = 1$.
   \FOR{$h=1$ {\bfseries to} $H$}
   \STATE Set the iteration number of optimization, $n_h=C_h\left(\frac{H^4}{\epsilon^2}\sum_{s,a}\frac{(\hat d^\pi_h(s,a))^2}{\hat \mu_h(s,a)}+\frac{(\hat d^\pi_{h-1}(s,a))^2}{\hat \mu_{h-1}(s,a)}\right)$, where $C_h$ is a known constant.
   \FOR{$i=1$ {\bfseries to} $n_h$}
   \STATE Sample $\{s^i_h, a^i_h\}$ from $\mu_h$ and $\{s^i_{h-1}, a^i_{h-1}, s_h^{i'}\}$ from $\mu_{h-1}$.
   \STATE Calculate gradient $g(w_h^{i-1})$,
   \begin{align*}
       g(w_h^{i-1})&(s,a) = \frac{w_h^{i-1}(s,a)}{\hat \mu_h(s,a)}\mathbb{I}(s^i_h=s,a^i_h=a) \\
       &-\frac{\hat w_{h-1}(s^i_{h-1},a^i_{h-1})}{\hat \mu_{h-1}(s^i_{h-1},a^i_{h-1})}\pi(a|s)\mathbb{I}(s^{i'}_h=s).
   \end{align*}
   \STATE Update $w_h^{i}=Proj_{w\in D_h} \{w_h^{i-1}-\eta_h^i g(w_h^{i-1})\}$.
   \ENDFOR
   \STATE Output the estimator $\hat w_h=\frac{1}{\sum_{i=1}^{n_h}i}\sum_{i=1}^{n_h}w_h^i$.
   \ENDFOR
\end{algorithmic}
\end{algorithm}
\begin{comment}
   \REPEAT
   \STATE Initialize $noChange = true$.
   \FOR{$i=1$ {\bfseries to} $m-1$}
   \IF{$x_i > x_{i+1}$}
   \STATE Swap $x_i$ and $x_{i+1}$
   \STATE $noChange = false$
   \ENDIF
   \ENDFOR
   \UNTIL{$noChange$ is $true$}
\end{comment}

In the previous section, we constructed a sampling dataset with distribution $\tilde\mu^*$, from which we can draw trajectories. The remaining challenge is that estimating the value function using (\ref{eq:final_eval}) requires knowledge of the importance-weighting ratios. To address this, we introduce an algorithm, $\mathrm{IDES}$, outlined in Algorithm~\ref{alg:ratio_est}, for estimating these ratios. \IDES is inspired by the idea of DualDICE \cite{nachum2019dualdice}. In DualDICE, they propose the following loss function,
\begin{equation}
\label{eq:dualdice}
\ell^\pi (w) = \frac{1}{2}\EE_{s,a \sim \mu} \left[w^2(s,a) \right] - \EE_{s,a \sim d^\pi} \left[w(s,a) \right].
\end{equation}
The minimum of $\ell^\pi(\cdot)$ is achieved at $w^{\pi,*}(s,a) = \frac{d^\pi(s,a)}{\mu(s,a)}$, the importance weighting ratio. By applying a variable transformation based on Bellman’s equation, they derive a final loss function that eliminates the need for on-policy samples in infinite-horizon MDPs. The importance-weighting ratios are then obtained by optimizing this loss function.

We extend this approach to finite-horizon MDPs by proposing a step-wise loss function, which we define below. It is important to emphasize that \IDES is not a trivial extension of DualDICE. While \IDES adopts the quadratic loss function from DualDICE, it applies fundamentally different techniques and analyses. 

First, \IDES leverages coarse distribution estimators to address the on-policy limitation of the second term in \eqref{eq:dualdice}, whereas DualDICE relies on a variable transformation based on Bellman’s equation, which is only applicable to infinite-horizon MDPs. Second, \IDES employs a step-wise objective function, requiring sequential step-to-step optimization and analysis, whereas DualDICE formulates a single loss function. Third, although both IDES and DualDICE achieve a sample complexity of $\tilde{O}(C/\epsilon^2)$, the value of $C$ in DualDICE’s bound is not sufficiently tight for our purposes, which involve deriving instance-dependent guarantees. In contrast, we offer a precise analysis linking the value of $C$ in IDES to the expected visitation ratios. Lastly, \IDES provides high-probability results for visitation ratio estimation, whereas DualDICE's results hold only in expectation.

\begin{comment}
\begin{figure}
  \centering
  \includegraphics[scale=0.38]{figures/Importance-weighting estimation.pdf}
  \caption{The streamline of IDES: at each step $h$, IDES outputs the importance density estimator $\hat w_{h}$ by sampling from $\tilde \mu_{h-1}, \tilde\mu_{h}$ and utilizing the estimator from last step $\hat w_{h-1}$.}
  \label{fig:IDES}
\end{figure}
\end{comment}

Our step-wise loss function of a policy $\pi$ at each step $h$ is defined as,
\begin{align*}
    \ell^\pi_h&(w)=\frac{1}{2}\EE_{s,a\sim\tilde{\mu}_h^*}\left[\frac{w^2(s,a)}{\hat \mu_h^*(s,a)}\right] \\
    &- \EE_{s',a'\sim\tilde{\mu}_{h-1}^*
    \atop
    s\sim P_{h-1}(\cdot|s',a')}\left[\sum_a\frac{\hat w_{h-1}(s',a')}{\hat \mu_{h-1}^*(s',a')}w(s,a)\pi(a|s)\right]
\end{align*}
%\begin{align*}
%    \ell^\pi_h(w)&=\frac{1}{2}\sum_{s,a}\frac{\tilde{\mu}_h(s,a)}{\hat \mu_h(s,a)}w^2(s,a)-\sum_{s,a}\sum_{s',a'}\tilde{\mu}_{h-1}(s',a')P(s|s',a')\pi(a|s)\frac{\hat w_{h-1}(s',a')}{\hat\mu_{h-1}(s',a')}w(s, a) \\
%    &=\frac{1}{2}\EE_{s,a\sim\tilde{\mu}_h}\left[\frac{w^2(s,a)}{\hat \mu_h(s,a)}\right] - \EE_{s',a'\sim\tilde{\mu}_{h-1},s\sim P_{h-1}(\cdot|s',a')}\left[\sum_a\frac{\hat w_{h-1}(s',a')}{\hat \mu_{h-1}(s',a')}w(s,a)\pi(a|s)\right],
%\end{align*}
where $\tilde{\mu}_h^*=\sum_{k=1}^K \hat \alpha^*_k d^{\pi^k}_h$ is the sampling distribution, and $\hat \mu_h^*=\sum_{k=1}^K \hat \alpha^*_k \hat d^{\pi^k}_h$ is the optimal solution to the approximate optimization problem (\ref{eq:appro_opt_problem}) (see Appendix~\ref{sec:opt_dist}). And for notation simplicity, we set $\tilde{\mu}_0^*(s_0,a_0)=1, P_0(s|s_0,a_0)=\nu(s), \hat w_0=\hat \mu_0^* = 1$.

Although it may appear complex at first glance, it possesses favorable properties that allow us to derive importance-weighting ratios iteratively, as formalized in the following two lemmas.
\begin{lemma}
\label{le:error_prop}
    Suppose we have an estimator $\hat w_{h-1}$ at step $h-1$ such that,
    \begin{align*}
        \sum_{s,a}\left|\tilde{\mu}_{h-1}^*(s,a)\frac{\hat w_{h-1}(s,a)}{\hat \mu_{h-1}^*(s,a)}-d^\pi_{h-1}(s,a)\right|\le\epsilon,    
    \end{align*}
    then by minimizing the loss function $\ell^\pi_h(w)$ at step $h$ to $\|\nabla \ell^\pi_h(\hat w_h(s,a))\|_1\le \epsilon$, we have,
    \begin{align*}
        \sum_{s,a} \left|\tilde{\mu}_h^*(s,a)\frac{\hat w_h(s,a)}{\hat\mu_h^*(s,a)}-d^\pi_h(s,a)\right| \le 2\epsilon.
    \end{align*}
\end{lemma}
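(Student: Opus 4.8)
The plan is to compute the coordinate-wise gradient of the step-wise loss $\ell^\pi_h$ explicitly, read off the relationship between the approximate minimizer $\hat w_h$ and a one-step ``pushforward'' of the previous estimate, and then propagate the error through a triangle inequality. First I would introduce the shorthand $g_{h-1}(s,a)\coloneqq\tilde{\mu}_{h-1}^*(s,a)\tfrac{\hat w_{h-1}(s,a)}{\hat\mu_{h-1}^*(s,a)}$ for the quantity on the left-hand side of the hypothesis, so that the assumption reads $\sum_{s,a}|g_{h-1}(s,a)-d^\pi_{h-1}(s,a)|\le\epsilon$. Expanding the second expectation of $\ell^\pi_h$ over $s',a'\sim\tilde\mu_{h-1}^*$ and $s\sim P_{h-1}(\cdot|s',a')$, the factor $\tilde\mu_{h-1}^*(s',a')\tfrac{\hat w_{h-1}(s',a')}{\hat\mu_{h-1}^*(s',a')}$ collapses exactly into $g_{h-1}(s',a')$, so the second term rewrites as $\sum_{s,a}w(s,a)\pi(a|s)\,\bar d_h(s)$, where $\bar d_h(s)\coloneqq\sum_{s',a'}g_{h-1}(s',a')P_{h-1}(s|s',a')$ is the pushforward of $g_{h-1}$ under the transition kernel.

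With the loss in this form, differentiating with respect to each coordinate $w(s,a)$ gives
\[
\nabla\ell^\pi_h(w)(s,a)=\tilde\mu_h^*(s,a)\frac{w(s,a)}{\hat\mu_h^*(s,a)}-\pi(a|s)\,\bar d_h(s),
\]
since the quadratic term contributes $\tilde\mu_h^*(s,a)\tfrac{w(s,a)}{\hat\mu_h^*(s,a)}$ and the bilinear term contributes $-\pi(a|s)\bar d_h(s)$. Rearranging at the approximate minimizer $\hat w_h$ yields the key identity
\[
\tilde\mu_h^*(s,a)\frac{\hat w_h(s,a)}{\hat\mu_h^*(s,a)}=\pi(a|s)\,\bar d_h(s)+\nabla\ell^\pi_h(\hat w_h)(s,a),
\]
which expresses the target quantity as the desired pushforward plus the small gradient residual.

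From here I would subtract $d^\pi_h(s,a)=\pi(a|s)\,d^\pi_h(s)$ and apply the triangle inequality to split the $\ell_1$ error into two pieces. The residual piece sums to $\|\nabla\ell^\pi_h(\hat w_h)\|_1\le\epsilon$ directly by the stopping criterion. For the pushforward piece, using $\sum_a\pi(a|s)=1$ reduces it to $\sum_s|\bar d_h(s)-d^\pi_h(s)|$; since both $\bar d_h$ and $d^\pi_h$ arise by applying the \emph{same} kernel $P_{h-1}$ (to $g_{h-1}$ and to $d^\pi_{h-1}$, respectively, via the flow identity $d^\pi_h(s)=\sum_{s',a'}d^\pi_{h-1}(s',a')P_{h-1}(s|s',a')$), and a stochastic kernel is an $\ell_1$-contraction because $\sum_s P_{h-1}(s|s',a')=1$, this term is bounded by $\sum_{s',a'}|g_{h-1}(s',a')-d^\pi_{h-1}(s',a')|\le\epsilon$ using the inductive hypothesis. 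Adding the two $\epsilon$ bounds gives the claimed $2\epsilon$.

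The main obstacle is bookkeeping rather than conceptual: one must expand the second expectation carefully so that the weights $\tilde\mu_{h-1}^*$ and $\hat\mu_{h-1}^*$ cancel precisely into $g_{h-1}$, and compute the coordinate gradient of a loss mixing a $1/\hat\mu_h^*$-weighted quadratic term with a bilinear term coupling $w$, $\pi$, and the pushforward. The only genuine subtlety is that $\hat w_h$ is an \emph{approximate} rather than exact minimizer, which is exactly why the bound degrades from the $\epsilon$ one obtains at a true stationary point to $2\epsilon$; the gradient-norm stopping condition $\|\nabla\ell^\pi_h(\hat w_h)\|_1\le\epsilon$ is precisely what controls this gap.
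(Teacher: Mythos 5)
Your proof is correct and follows essentially the same route as the paper's: both compute the coordinate-wise gradient of $\ell^\pi_h$ explicitly, invoke the flow identity $d^\pi_h(s,a)=\pi(a|s)\sum_{s',a'}P_{h-1}(s|s',a')\,d^\pi_{h-1}(s',a')$, and split the $\ell_1$ error via the triangle inequality into the gradient residual (bounded by $\epsilon$) plus the previous-step error contracted through the stochastic kernel (bounded by $\epsilon$). The only cosmetic difference is that you rearrange the stationarity relation forward (target $=$ pushforward $+$ residual), whereas the paper uses a reverse triangle inequality on the gradient coordinates; these are algebraically equivalent.
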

Lemma~\ref{le:error_prop} shows that the importance-weighting ratio estimator from the previous step enables the estimation of the ratio at the current step, introducing only an additive error. Consequently, by optimizing iteratively, we can accurately estimate the importance-weighting ratios at all steps, as formalized in the following lemma.

\begin{lemma}
\label{le:ratio_estimation_results}
    Implementing Algorithm~\ref{alg:ratio_est} with $\pi^k$, we have the importance density estimator $\frac{\hat w^{\pi^k}_h(s,a)}{\hat \mu_h^*(s,a)}$ such that,
    \begin{equation}
    \label{eq:density_estimator}
        \EE\left[\sum_{s,a}\left|\tilde{\mu}_{h}^*(s,a)\frac{\hat w^{\pi^k}_{h}(s,a)}{\hat \mu_{h}^*(s,a)}-d^{\pi^k}_h(s,a)\right|\right]\le\frac{\epsilon}{4H},\ \forall h.
    \end{equation}
\end{lemma}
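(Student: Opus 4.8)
The plan is to run an induction on the step index $h$, reducing the per-step error to an accumulation of optimization errors and then calibrating the SGD budget $n_h$ so the accumulation never exceeds $\frac{\epsilon}{4H}$. Writing $e_h := \sum_{s,a}\left|\tilde{\mu}_h^*(s,a)\frac{\hat w_h^{\pi^k}(s,a)}{\hat\mu_h^*(s,a)}-d_h^{\pi^k}(s,a)\right|$, I would first argue that $e_h$ obeys an \emph{additive} recursion in the gradient norm, and then show the SGD inner loop drives each step's gradient norm below $\frac{\epsilon}{4H^2}$. Unrolling then yields $\EE[e_h]\le h\cdot\frac{\epsilon}{4H^2}\le\frac{\epsilon}{4H}$ for every $h$.

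For the recursion, I would sharpen Lemma~\ref{le:error_prop} into a two-piece form. Since $\ell_h^\pi$ is a separable quadratic in $w$, its gradient is $\nabla_{w(s,a)}\ell_h^\pi=\tilde\mu_h^*(s,a)\frac{\hat w_h(s,a)}{\hat\mu_h^*(s,a)}-\pi(a|s)\sum_{s',a'}\tilde\mu_{h-1}^*(s',a')\frac{\hat w_{h-1}(s',a')}{\hat\mu_{h-1}^*(s',a')}P_{h-1}(s|s',a')$. Adding and subtracting the Bellman flow identity $d_h^{\pi^k}(s,a)=\pi(a|s)\sum_{s',a'}d_{h-1}^{\pi^k}(s',a')P_{h-1}(s|s',a')$ and using the triangle inequality, the propagated term contracts by exactly one because rows of $\pi$ and $P_{h-1}$ sum to one; this gives the clean bound $e_h\le\|\nabla\ell_h^\pi(\hat w_h)\|_1+e_{h-1}$, with $e_0=0$ by the convention $\hat w_0=\hat\mu_0^*=1$, $P_0(\cdot|s_0,a_0)=\nu$. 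This is precisely the computation behind Lemma~\ref{le:error_prop}, kept in two separate pieces instead of merged into $2\epsilon$, and it reduces the claim to forcing $\EE[\|\nabla\ell_h^\pi(\hat w_h)\|_1]\le\frac{\epsilon}{4H^2}$ at each step.

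The substantive work is the SGD guarantee. Conditioning on $\hat w_{h-1}$, the loss $\ell_h^\pi$ is a fixed separable quadratic whose diagonal Hessian entries $\frac{\tilde\mu_h^*(s,a)}{\hat\mu_h^*(s,a)}$ lie in $[\frac45,\frac43]$ on the support by \eqref{eq:rough_mu}, so it is strongly convex and smooth with absolute constants $\lambda,\beta$. I would invoke the strongly-convex SGD rate of \citet{hazan2011beyond} for the weighted-average iterate, $\EE[\ell_h^\pi(\hat w_h)-\ell_h^\pi(w_h^*)\mid\hat w_{h-1}]\le O(\frac{G^2}{\lambda n_h})$, and the key calculation is that the stochastic-gradient second moment $G^2$ equals, up to constants, $\sum_{s,a}\frac{(\hat d_h^\pi(s,a))^2}{\hat\mu_h^*(s,a)}+\sum_{s,a}\frac{(\hat d_{h-1}^\pi(s,a))^2}{\hat\mu_{h-1}^*(s,a)}$: the first term comes from $\EE_{\mu_h}[\frac{w^2}{\hat\mu_h^{*2}}]$ with $w\in D_h$ i.e. $w\le 2\hat d_h^\pi$, the second from feasibility $\hat w_{h-1}\le 2\hat d_{h-1}^\pi$ together with $\sum_a\pi(a|s)^2\le 1$. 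This matches the bracket defining $n_h$, so the stated count yields $\EE[\ell_h^\pi(\hat w_h)-\ell_h^\pi(w_h^*)]\le O(\frac{\epsilon^2}{H^4})$ for $C_h$ large enough. Converting via smoothness and Cauchy--Schwarz, $\|\nabla\ell_h^\pi(\hat w_h)\|_1\le\beta\sqrt{\sum_{s,a}\frac{\tilde\mu_h^*}{\hat\mu_h^*}}\sqrt{2(\ell_h^\pi(\hat w_h)-\ell_h^\pi(w_h^*))}$, and taking full expectation (which removes the conditioning) gives the required $\frac{\epsilon}{4H^2}$.

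I expect the SGD step to be the main obstacle, through three intertwined issues: (i) pinning $G^2$ to the instance-dependent sum $\sum_{s,a}(\hat d_h^\pi)^2/\hat\mu_h^*$ rather than a crude worst case, which is what keeps the final complexity instance-dependent; (ii) passing from the in-expectation function-value rate to an $L_1$ gradient-norm bound while tracking the $S,A$ and $\lambda,\beta$ factors that must be absorbed into $C_h$; and (iii) discharging the dependence of $\ell_h^\pi$ on the random $\hat w_{h-1}$, handled by noting the variance bound holds uniformly over the feasible box $D_{h-1}$, and by checking that the unconstrained minimizer already lies inside $D_h$ (the coarse bounds $d_h^\pi\le\frac43\hat d_h^\pi$ and \eqref{eq:rough_mu} give $w_h^*\le\frac53\hat d_h^\pi<2\hat d_h^\pi$), so the projection neither biases $w_h^*$ nor degrades the rate. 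By contrast, the error-propagation half is essentially a restatement of Lemma~\ref{le:error_prop} and should be routine.
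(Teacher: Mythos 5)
Your proposal matches the paper's own proof essentially step for step: the same additive error-propagation recursion from Lemma~\ref{le:error_prop} unrolled to $\epsilon/(4H)$, the same strongly-convex/smooth SGD rate (Lemma~\ref{le:sgd_results}, from \citet{hazan2011beyond}) applied conditionally on $\hat w_{h-1}$, the same instance-dependent bound on the stochastic-gradient second moment via the feasible boxes $D_h$, $D_{h-1}$ and $\sum_a \pi(a|s)^2 \le 1$, and the same smoothness-based conversion from function-value gap to gradient norm. If anything, you are more careful than the paper on two points it glosses over --- the $\sqrt{SA}$-type factor in passing from $\ell_2$ to $\ell_1$ gradient norms, and the feasibility of the (random) unconstrained minimizer inside $D_h$ --- so the proposal is correct and takes the same route.
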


The above result holds in expectation. To obtain a high-probability guarantee, we introduce a Median-of-Means (MoM) estimator \cite{minsker2023efficient}, formalized in the following lemma, along with a data-splitting technique. Together, these methods enable us to transform (\ref{eq:density_estimator}) into a high-probability result (see Appendix~\ref{proof:main}).

\begin{lemma}
\label{le:median_of_mean}
    Let $x \in \mathbb{R}$ and suppose we have a stochastic estimator $\hat x$ such that $\EE[|\hat x-x|]\le \frac{\epsilon}{4}$. Then, if we generate $N=O\left(\log(1/\delta)\right)$ i.i.d. estimators $\{\hat x_1, \hat x_2, \dots, \hat x_N\}$ and choose $\hat x_{MoM}=\text{Median}(\hat x_1, \hat x_2, \dots, \hat x_N)$, we have with probability at least $1-\delta$,
    \begin{align*}
        |\hat x_{MoM}-x|\le \epsilon.
    \end{align*}
\end{lemma}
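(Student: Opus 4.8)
The plan is to reduce this high-probability guarantee to a Markov-plus-Chernoff argument on the number of ``bad'' copies, which is the standard way to boost an in-expectation bound into a confidence bound. First I would control a single copy via Markov's inequality: since $\EE[|\hat x_i - x|]\le \epsilon/4$, we get $\P(|\hat x_i - x| > \epsilon)\le \frac{\EE[|\hat x_i - x|]}{\epsilon}\le \frac14$. Declaring the event $\{|\hat x_i - x|>\epsilon\}$ a failure and setting $B_i = \mathbb{I}(|\hat x_i - x| > \epsilon)$, the $B_i$ become i.i.d. Bernoulli variables with failure probability $p\le 1/4$.

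Next I would isolate the purely deterministic observation that links the median to the individual failures: if strictly more than $N/2$ of the $\hat x_i$ lie in the interval $[x-\epsilon, x+\epsilon]$, then $\text{Median}(\hat x_1,\dots,\hat x_N)$ also lies in that interval. Indeed, if the median exceeded $x+\epsilon$, then at least $\lceil N/2\rceil$ of the values would be at least the median and hence outside the interval, leaving at most $\lfloor N/2\rfloor$ values inside, a contradiction; the case where the median is below $x-\epsilon$ is symmetric. Taking the contrapositive, the event $\{|\hat x_{MoM}-x|>\epsilon\}$ forces at least half of the copies to fail, i.e. $\sum_{i=1}^N B_i \ge N/2$.

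The final step is a concentration bound on $\sum_{i=1}^N B_i$. Since $\EE[\sum_i B_i]=Np\le N/4$, I would apply Hoeffding's inequality (or a multiplicative Chernoff bound) to obtain $\P(\sum_i B_i \ge N/2) = \P(\sum_i B_i - Np \ge N/2 - Np) \le \exp(-N/8)$, where I used $N/2 - Np \ge N/4$ because $p\le 1/4$. Combining the two previous steps gives $\P(|\hat x_{MoM}-x|>\epsilon)\le \exp(-N/8)$, and requiring $\exp(-N/8)\le \delta$ yields $N\ge 8\log(1/\delta) = O(\log(1/\delta))$, which is exactly the claimed sample size.

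There is no deep obstacle here, as this is the classical Median-of-Means confidence-boosting argument; the only point that genuinely requires care is bookkeeping of constants. The gap between the hypothesis $\EE[|\hat x - x|]\le \epsilon/4$ and the target accuracy $\epsilon$ is chosen precisely so that Markov produces a failure probability $p\le 1/4$ that sits strictly below $1/2$, and it is this fixed margin of $1/4$ that makes the deviation $N/2 - Np$ a constant fraction of $N$ and hence delivers the exponential-in-$N$ decay. Had the in-expectation bound only been $\le \epsilon$, Markov would give the vacuous $p\le 1$ and the median argument would fail entirely, so I would be careful to track this constant slack throughout rather than absorbing it into $\tilde O(\cdot)$ notation.
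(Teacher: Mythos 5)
Your proposal is correct and follows essentially the same route as the paper's own proof: Markov's inequality to get a per-copy failure probability of at most $1/4$, the observation that a deviant median forces at least half of the copies to fail, and Hoeffding's inequality on the failure count to obtain the bound $e^{-N/8}$, hence $N = O(\log(1/\delta))$. The only difference is cosmetic — you spell out the deterministic median-versus-failure-count implication that the paper states without proof — so there is nothing to add.
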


Another noteworthy property of our loss function is that it is $\gamma$-strongly convex and $\xi$-smooth where $\gamma = \min_{s,a}\frac{\tilde{\mu}_h^*(s,a)}{\hat \mu_h^*(s,a)},\xi=\max_{s,a}\frac{\tilde{\mu}_h^*(s,a)}{\hat \mu_h^*(s,a)}$. From property (\ref{eq:rough_mu}), it follows that $\gamma$ and $\xi$ are bounded from both sides, as is their ratio $\frac{\xi}{\gamma}$. This property plays a crucial role in analyzing the sampling complexity of the minimization of the loss function via stochastic gradient descent, which we discuss in Appendix~\ref{proof:ratio_est}.

\begin{algorithm}[tb]
   \caption{\textbf{C}oarse and \textbf{A}daptive \textbf{ES}timation with \textbf{A}pproximate \textbf{R}eweighing for Multi-Policy Evaluation (\CAESAR)}
   \label{alg:main_alg}
\begin{algorithmic}
   \STATE {\bfseries Input:} Accuracy $\epsilon$, confidence $\delta$, target policies $\{\pi^k\}_{k=1}^K$.
   \STATE Coarsely estimate visitation distributions of all target policies and get $\{\hat d^{\pi^k}:k\in[K]\}$.
   \STATE Solve the approximate optimization problem (\ref{eq:appro_opt_problem}) and construct the dataset $\tilde\mu^*$.
   \STATE Implement \IDES (Algorithm~\ref{alg:ratio_est}) to get importance-weighting ratios $\{\hat w^{\pi^k}\}_{k=1}^K$.
   \STATE Build the final performance estimator $\{\hat V^{\pi^k}_1\}_{k=1}^K$ by (\ref{eq:final_estimator}).
   \STATE {\bfseries Output:} $\{\hat V^{\pi^k}_1\}_{k=1}^K$.
\end{algorithmic}
\end{algorithm}
\begin{comment}
   \REPEAT
   \STATE Initialize $noChange = true$.
   \FOR{$i=1$ {\bfseries to} $m-1$}
   \IF{$x_i > x_{i+1}$}
   \STATE Swap $x_i$ and $x_{i+1}$
   \STATE $noChange = false$
   \ENDIF
   \ENDFOR
   \UNTIL{$noChange$ is $true$}
\end{comment}

\subsection{Main results}
\label{sec:main_result}
%In this section, we first provide our main sample complexity result for multiple-policy evaluation based on the results we have in previous sections. Then we conduct an analysis of our sample complexity, comparing it with existing results and offering some interesting corollaries. The detailed derivations of our main result are deferred to Appendix~\ref{proof:main} due to page limits.
We are now ready to present our main sample complexity result for the multiple-policy evaluation problem, building on the findings from previous sections. Using the importance-weighting ratio estimator $\frac{\hat w^{\pi^k}_h(s,a)}{\hat \mu_h^*(s,a)}$, we can evaluate the performance of the target policy $\pi^k$ by,
\begin{equation}
\label{eq:final_estimator}
    \hat V^{\pi^k}_1 = \frac{1}{n}\sum_{i=1}^n\sum_{h=1}^H\frac{\hat w^{\pi^k}_h(s_h^i,a_h^i)}{\hat \mu_h^*(s_h^i,a_h^i)}r_h(s_h^i,a_h^i).
\end{equation}
where $\{s^i_h, a^i_h\}_{i=1}^n$ is drawn from our approximately optimal sampling distribution $\tilde{\mu}_h^*$.

The following theorem presents our main results on the sample complexity of $\mathrm{CAESAR}$. We will leave the detailed derivation to Appendix~\ref{proof:main}.
\begin{theorem}
\label{theo:main}
Implementing Algorithm~\ref{alg:main_alg}. Then, with probability at least $1-\delta$, for all target policies, we have that $|\hat V^{\pi^k}_1 - V^{\pi^k}_1|\le \epsilon$. And the total number of trajectories sampled is,
    \begin{equation}
    \label{eq:final_sample_complexity}
    n = \tilde{O}\left(\frac{H^4}{\epsilon^2}\sum_{h=1}^H\max_{k\in[K]}\sum_{s,a}\frac{(d_h^{\pi^k}(s,a))^2}{\mu^*_h(s,a)}\right).
\end{equation}
where $\mu^*_h$ is the optimal solution to (\ref{eq:true_opt_problem}) (refer to Section~\ref{sec:opt_dist}). Furthermore, the above bound still holds if replacing the unknown true visitation distributions $\{\{d^{\pi^k}\}_{k=1}^K,\mu^*\}$ by the coarse estimators $\{\{\hat d^{\pi^k}\}_{k=1}^K,\hat \mu^*\}$ which can provide us a concrete value of the sample complexity.
%\begin{align*}
%    n = \tilde{O}\left(\frac{H^4}{\epsilon^2}\sum_{h=1}^H\max_{k\in[K]}\sum_{s,a}\frac{(\hat d_h^{\pi^k}(s,a))^2}{\hat \mu_h(s,a)}\right).
%\end{align*}
\end{theorem}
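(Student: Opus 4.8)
The plan is to split the statement into its accuracy guarantee and its sample-complexity count, treating accuracy first. Fix a target policy $\pi^k$ and condition on the importance-density estimates $\hat w^{\pi^k}$ returned by \IDES. I would decompose the error of the estimator \eqref{eq:final_estimator} as
$$|\hat V^{\pi^k}_1 - V^{\pi^k}_1| \le \underbrace{\left|\hat V^{\pi^k}_1 - \EE[\hat V^{\pi^k}_1 \mid \hat w^{\pi^k}]\right|}_{\text{statistical}} + \underbrace{\left|\EE[\hat V^{\pi^k}_1 \mid \hat w^{\pi^k}] - V^{\pi^k}_1\right|}_{\text{bias}},$$
where the expectation is over trajectories drawn from $\tilde\mu^*$. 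For the bias term, since $\EE[\hat V^{\pi^k}_1\mid\hat w^{\pi^k}] = \sum_h\sum_{s,a}\tilde\mu_h^*(s,a)\frac{\hat w_h^{\pi^k}(s,a)}{\hat\mu_h^*(s,a)}r_h(s,a)$ and $r_h\in[0,1]$, the bias is at most $\sum_h\sum_{s,a}|\tilde\mu_h^*\frac{\hat w_h^{\pi^k}}{\hat\mu_h^*}-d_h^{\pi^k}|$, which Lemma~\ref{le:ratio_estimation_results} bounds in expectation by $H\cdot\frac{\epsilon}{4H}=\frac\epsilon4$.

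For the statistical term I would invoke the variance bound. The feasible set $D_h=[0,2\hat d_h^{\pi^k}]$ together with the coarse guarantees \eqref{eq:rough_bound} and \eqref{eq:rough_mu} forces $\frac{\hat w_h^{\pi^k}}{\hat\mu_h^*}\le \frac{2\hat d_h^{\pi^k}}{\hat\mu_h^*} = O(1)\cdot\frac{d_h^{\pi^k}}{\tilde\mu_h^*}$, so the per-trajectory reward stays within a constant factor of the true importance-weighted reward. Rerunning the calculation behind \eqref{eq:var_value_function_estimator} with these estimated weights yields $\mathbb{V}[X_i^{\pi^k}\mid\hat w^{\pi^k}] \le O(H)\sum_h\sum_{s,a}\frac{(d_h^{\pi^k})^2}{\tilde\mu_h^*}$, and the approximate optimality of Lemma~\ref{lemma:approximately_optimal_dist} replaces $\tilde\mu_h^*$ by $\mu_h^*$ up to a constant. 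By Jensen's inequality the expected statistical error is at most $\sqrt{\mathbb{V}[X_i^{\pi^k}]/n}$, so the final averaging needs only $n=\tilde O(\frac{H}{\epsilon^2}\sum_h\max_k\sum_{s,a}\frac{(d_h^{\pi^k})^2}{\mu_h^*})$ to make it $\le\frac\epsilon4$. Combining the two terms gives $\EE[|\hat V^{\pi^k}_1 - V^{\pi^k}_1|]\le\frac\epsilon2$; the remaining $\frac\epsilon2$ is absorbed by the truncation of low-visitation pairs justified in Lemma~\ref{lemma:eff_sa}. I would then run $O(\log(K/\delta))$ independent copies of the whole pipeline and take the median, using Lemma~\ref{le:median_of_mean} to convert this in-expectation bound into the event $|\hat V^{\pi^k}_1 - V^{\pi^k}_1|\le\epsilon$ with probability $1-\delta/K$, and finish with a union bound over $k\in[K]$.

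For the sample-complexity count the dominant cost is \IDES, not the final averaging. Summing the per-step budget $n_h$ of Algorithm~\ref{alg:ratio_est} over $h$ gives $\tilde O(\frac{H^4}{\epsilon^2}\sum_h\sum_{s,a}\frac{(\hat d_h^{\pi^k})^2}{\hat\mu_h^*})$ for a single policy, which is why the overall rate carries $H^4$ rather than the $H$ of the averaging step. The crucial observation is that all $K$ runs of \IDES draw from the \emph{same} distribution $\tilde\mu^*$, so trajectories can be shared and at step $h$ only $\max_k n_h^k$ fresh samples are needed; this is exactly what turns the naive $\sum_k$ into $\max_k$. Converting the estimated quantities to the true ones via $\hat d_h^{\pi^k}\le\frac54 d_h^{\pi^k}$ and $\hat\mu_h^*\ge\frac34\tilde\mu_h^*$ from \eqref{eq:rough_bound}--\eqref{eq:rough_mu}, and applying Lemma~\ref{lemma:approximately_optimal_dist} once more, bounds the total by $\tilde O(\frac{H^4}{\epsilon^2}\sum_h\max_k\sum_{s,a}\frac{(d_h^{\pi^k})^2}{\mu_h^*})$; the $O(\log(K/\delta))$ median repetitions, the Phase~I cost $\tilde O(1/\epsilon)$, and the final averaging are all lower order and absorbed into $\tilde O(\cdot)$. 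The ``furthermore'' claim is then immediate, since every inequality used to pass from $\hat d,\hat\mu^*$ to $d,\mu^*$ is two-sided up to constants, so the bound holds verbatim with the computable coarse estimators substituted in.

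The main obstacle I anticipate is the interface between the in-expectation guarantee of \IDES (Lemma~\ref{le:ratio_estimation_results}) and the required high-probability statement: Median-of-Means (Lemma~\ref{le:median_of_mean}) applies to a scalar estimator, so one must apply it at the level of the value estimator $\hat V^{\pi^k}_1$, which aggregates both the SGD randomness inside \IDES and the sampling randomness of \eqref{eq:final_estimator}, rather than to the density estimates themselves, and one must split the data so that the $\log(1/\delta)$ copies are genuinely independent. A secondary technical point is controlling $\mathbb{V}[X_i^{\pi^k}\mid\hat w^{\pi^k}]$ \emph{uniformly} over the random weights $\hat w^{\pi^k}$; this is where the projection onto $D_h=[0,2\hat d_h^{\pi^k}]$ is indispensable, since without the upper truncation the estimated ratios, and hence the variance, could be arbitrarily large even when their expectation is well behaved.
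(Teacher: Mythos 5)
Your proposal is correct and reproduces the skeleton of the paper's proof: the bias/statistical decomposition of the error of \eqref{eq:final_estimator}, with Lemma~\ref{le:ratio_estimation_results} giving the $\epsilon/4$ bias, the projection onto $D_h$ together with \eqref{eq:rough_bound}--\eqref{eq:rough_mu} and Lemma~\ref{lemma:approximately_optimal_dist} giving a variance bound uniform in $\hat w^{\pi^k}$ for the averaging step, the $\epsilon/2$ truncation error from Lemma~\ref{lemma:eff_sa}, the accounting in which the \IDES budget dominates at $H^4$ and sample sharing across the $K$ policies (all draws come from the same $\tilde\mu^*$) turns $\sum_k$ into $\max_k$, and the two-sided coarse bounds yielding the ``furthermore'' claim exactly as in \eqref{eq:appro_to_exact}. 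The one genuine divergence is where the in-expectation guarantee is upgraded to high probability. The paper applies Lemma~\ref{le:median_of_mean} \emph{per step $h$ inside} \IDES: it runs the SGD $N=O(\log(1/\delta))$ times and keeps the solution whose loss $\ell^\pi_h(\hat w_{h,j})$ is the median of the $N$ loss values --- legitimate because the excess loss $\ell^\pi_h(\hat w_{h,j})-\ell^\pi_h(w^*_h)$ is a nonnegative scalar with bounded expectation --- and then handles the final averaging by Bernstein's inequality directly in high probability (Lemma~\ref{le:final_estimator}). You instead run the whole pipeline $O(\log(K/\delta))$ times and take the median of the resulting value estimates. Consequently your claim that one \emph{must} apply MoM ``at the level of the value estimator \dots rather than to the density estimates themselves'' is wrong as a necessity claim, though your placement is perfectly valid and even has a concrete advantage: your medians are of computable numbers, whereas the paper's median-of-losses involves expectations under the unknown $\tilde\mu^*$ and $P_{h-1}$, a point the paper leaves implicit. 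Two bookkeeping items if you flesh this out: with $\EE[|\hat V^{\pi^k}_1-V^{\pi^k}_1|]\le\epsilon/2$, Lemma~\ref{le:median_of_mean} yields a median error of $2\epsilon$, not $\epsilon$, so the per-term targets must be tightened by a constant; and since the $N$ pipeline copies share the Phase~I coarse estimators and $\hat\alpha^*$, they are i.i.d. only conditionally on the (high-probability) Phase~I event, which is what MoM actually needs.
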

We provide an instance-dependent result that characterizes the complexity of the multiple-policy evaluation problem based on the overlap of the visitation distributions of the target policies, aligning with intuitions. In the special case where all target policies are identical, i.e., single-policy evaluation, our sample complexity is $\tilde{O}(\frac{\text{poly}(H)}{\epsilon^2})$, independent of $S$ and $A$, which is consistent with the result of Monte Carlo estimation. Furthermore, if the target policies are deterministic, we can establish an instance-independent upper bound on our sample complexity, as formalized in the following corollary.

\begin{corollary}
\label{corollary:upper_bound_caesar}
    If the target policies to be evaluated are deterministic, the sample complexity of \CAESAR is always bounded by
    $\tilde{O}\left(\frac{\text{poly}(H)S^2A}{\epsilon^2}\right)$.
\end{corollary}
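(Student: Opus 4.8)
The plan is to bound the instance-dependent quantity $\sum_{h=1}^H V_h^*$, where $V_h^* := \max_{k\in[K]}\sum_{s,a}\frac{(d_h^{\pi^k}(s,a))^2}{\mu_h^*(s,a)}$, by $\mathrm{poly}(H)\,SA$ and then substitute into the sample complexity (\ref{eq:final_sample_complexity}) of Theorem~\ref{theo:main}. Since $\mu_h^*$ is by definition the minimizer of this objective over $\mathrm{conv}(\mathcal{D}_h)$ in (\ref{eq:true_opt_problem}), it suffices to exhibit a single convenient feasible distribution $\mu_h\in\mathrm{conv}(\mathcal{D}_h)$ and control $\max_{k}\sum_{s,a}(d_h^{\pi^k}(s,a))^2/\mu_h(s,a)$ for it; optimality then guarantees $V_h^*$ is no larger. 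The whole point is to choose $\mu_h$ so that this bound depends only on $S,A,H$ and crucially not on $K$.

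The construction I would use exploits that there are at most $SA$ state-action pairs. Let $M_h(s,a):=\max_{k\in[K]}d_h^{\pi^k}(s,a)$ and, for every pair with $M_h(s,a)>0$, fix one index $k(s,a)\in[K]$ attaining the maximum. Let $S_h\subseteq[K]$ collect these indices; since the number of active pairs is at most $SA$, we have $|S_h|\le SA$. Take the uniform mixture $\mu_h=\frac{1}{|S_h|}\sum_{k\in S_h}d_h^{\pi^k}\in\mathrm{conv}(\mathcal{D}_h)$, which satisfies $\mu_h(s,a)\ge\frac{1}{|S_h|}d_h^{\pi^{k(s,a)}}(s,a)=\frac{1}{|S_h|}M_h(s,a)$ on every active pair. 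Then, using $M_h(s,a)\ge d_h^{\pi^k}(s,a)$ and $\sum_{s,a}d_h^{\pi^k}(s,a)=1$,
\begin{align*}
\sum_{s,a}\frac{(d_h^{\pi^k}(s,a))^2}{\mu_h(s,a)}\le |S_h|\sum_{s,a}\frac{(d_h^{\pi^k}(s,a))^2}{M_h(s,a)}\le |S_h|\sum_{s,a}d_h^{\pi^k}(s,a)=|S_h|\le SA,
\end{align*}
uniformly in $k$, so $V_h^*\le SA$. Summing over $h$ gives $\sum_{h=1}^H V_h^*\le H\,SA$, and substituting into (\ref{eq:final_sample_complexity}) yields $n=\tilde O\!\big(\frac{H^4}{\epsilon^2}\cdot HSA\big)=\tilde O\!\big(\frac{H^5 SA}{\epsilon^2}\big)$, which lies inside $\tilde O\!\big(\frac{\mathrm{poly}(H)S^2A}{\epsilon^2}\big)$ since $SA\le S^2A$; this proves the corollary, in fact with the slightly sharper $SA$. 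The tabular/deterministic structure enters only through the finiteness of the active support, which is what lets the test distribution be chosen independently of $K$.

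The main obstacle is precisely eliminating the $K$ dependence: the naive uniform mixture $\frac{1}{K}\sum_k d_h^{\pi^k}$ only yields $V_h^*\le K$, so the work is in replacing the average over all $K$ policies by an average over the $O(SA)$ maximizers, which caps the ratio. A secondary subtlety is whether the sampling scheme must use one common mixture weight across all steps $h$ (a single behavior policy) rather than the per-step optimizers of (\ref{eq:true_opt_problem}); if so, I would instead take the uniform mixture over a single covering set $S^\star=\bigcup_h S_h$ of size $\le HSA$, giving $\mu_h(s,a)\ge\frac{1}{HSA}M_h(s,a)$ and hence $V_h^*\le HSA$, which costs only extra factors of $H$ and still lands inside $\tilde O(\mathrm{poly}(H)S^2A/\epsilon^2)$. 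The remaining ingredients—the truncation of low-probability pairs (Lemma~\ref{lemma:eff_sa}) and the passage from the true distributions to the coarse estimators (\ref{eq:rough_bound})—only affect multiplicative constants and are already absorbed in the machinery leading to Theorem~\ref{theo:main}.
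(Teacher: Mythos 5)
Your core construction is essentially the same as the paper's: the paper takes the feasible distribution $\mu'_h=\frac{1}{SA}\sum_{s,a}d^{\pi_{s,a}}_h$ with $\pi_{s,a}=\arg\max_{k\in[K]}d^{\pi^k}_h(s,a)$, uses optimality of $\mu^*_h$ to reduce to this test distribution, and bounds the per-step objective by $SA$ exactly as you do (your uniform mixture over the set $S_h$ of distinct maximizers is the same idea with marginally tighter weights). Up to this point your argument is correct, and your aside about per-step versus common mixture weights is moot, since the optimization (\ref{eq:true_opt_problem}) is solved separately for each $h$.

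However, there is a genuine gap in your final step, precisely where you claim the ``slightly sharper'' bound $\tilde O\bigl(\frac{H^5SA}{\epsilon^2}\bigr)$. The $\tilde O(\cdot)$ in Theorem~\ref{theo:main} hides a $\log K$ factor coming from the union bound over the $K$ target policies (see the derivation in Appendix~\ref{proof:opt_dist}, where $\log(K/\delta)$ appears explicitly). The corollary asserts a bound that holds \emph{always}, i.e., for any set of deterministic target policies --- including the case where one evaluates all of them, so that $K$ can be as large as $A^{SH}$. In that regime $\log K \le SH\log A$ is \emph{not} a polylogarithmic factor in the remaining parameters: it contributes an extra polynomial factor of $S$ (and $H$, and $\log A$). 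This is exactly why the paper's bound reads $S^2A$ rather than $SA$: the paper explicitly converts $\log K$ into $SH\log A$ to obtain a bound genuinely free of $K$. Your bound of $\tilde O(H^5SA/\epsilon^2)$ still silently depends on $K$ through the hidden logarithm, so as stated it does not establish the $K$-independent claim of the corollary; adding the $\log K\le SH\log A$ accounting repairs it and lands you exactly on the paper's $\tilde O\bigl(\frac{\mathrm{poly}(H)S^2A}{\epsilon^2}\bigr)$.
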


Corollary~\ref{corollary:upper_bound_caesar} tells us that with at most $\tilde{O}\left(\frac{\text{poly}(H)S^2A}{\epsilon^2}\right)$ trajectories, by implementing \CAESAR, we can evaluate all deterministic policies up to $\epsilon$ accuracy under any reward which means we can identify an $\epsilon-$ optimal policy for any reward. This is consistent with the result in \citet{jin2020reward}, which proposes a reward-free exploration algorithm with the same sample complexity of $\tilde{O}\left(\frac{\text{poly}(H)S^2A}{\epsilon^2}\right)$.

\section{Discussions}
\label{sec:discussion}
In this section, we first compare our results with the existing work \cite{dann2023multiple}. Additionally, we explore the application of \CAESAR to policy identification beyond policy evaluation.

\subsection{Comparison with existing result}
\citet{dann2023multiple} proposes an algorithm for multiple-policy evaluation based on the idea of trajectory stitching and achieved an instance-dependent sample complexity,
\begin{equation}
\label{eq:result_dann}
    \tilde{O}\left(\frac{H^2}{\epsilon^2}\EE\left[\sum_{(s,a)\in \mathcal{K}^{1:H}}\frac{1}{d^{max}(s)}\right]\right),
\end{equation}
where $d^{max}(s)=\max_{k\in [K]} d^{\pi^k}(s)$ and $\mathcal{K}^h \subseteq \mathcal{S}\times\mathcal{A}$ keeps track of which state-action pairs at step $h$ are visited by target policies in their trajectories.
%This sample complexity can be extremely bad due to the existence of $\frac{1}{d^{\max}(s)}$. The authors \cite{dann2023multiple} also mentioned that their sample complexity is suboptimal by giving a concrete case.

A significant issue with the result by \citet{dann2023multiple} is the presence of the unfavorable $\frac{1}{d^{max}(s)}$, which can induce an undesirable dependency on $K$.

To illustrate this, consider an example of an MDP with two layers: a single initial state $s_{1,1}$ in the first layer and two terminal states in the second layer $s_{2,1}, s_{2,2}$. The transition function is the same for all actions, i.e., $P(s_{2,1}|s_{1,1},a)=p$ and $p$ is sufficiently small. Agents only receive rewards at state $s_{2,1}$, regardless of the actions they take. Hence, to evaluate the performance of a policy under this MDP, it is sufficient to consider only the second layer. Now, suppose we have $K$ target policies to evaluate, where each policy takes different actions at state $s_{1,1}$ but the same action at any state in the second layer. Since the transition function at state $s_{1,1}$ is the same for any action, the visitation distribution at state $s_{2,1}$ of all target policies is identical. Given that $p$ is sufficiently small, the probability of reaching $s_{2,1}$ is $\mathbb{P}[s_{2,1}\in\mathcal{K}^{2}]=1-(1-p)^K\approx pK$. 

According to the result (\ref{eq:result_dann}) by \citet{dann2023multiple}, the sample complexity in this scenario is $\tilde{O}(\frac{K}{\epsilon^2})$ which depends on $K$. In contrast, since the visitation distribution at the second layer of all target policies is identical, our result provides a sample complexity of $\tilde{O}(\frac{1}{\epsilon^2})$ without dependency on $K$.

Beyond sample complexity, our work tackles the problem from a different perspective, which complements the exsiting results. Our algorithm first constructs an approximately optimal dataset and then uses it to perform offline evaluation. In other words, we extend the offline evaluation framework to multiple-policy setting. In contrast, \cite{dann2023multiple} evaluates policies in an online and on-policy manner.

\subsection{Near-optimal policy identification}
Besides policy evaluation, \CAESAR can also be applied to identify a near-optimal policy. Fixing the high-probability factor, we denote the sample complexity of \CAESAR by $\tilde{O}(\frac{\Theta(\Pi)}{\gamma^2})$, where $\Pi$ is the set of policies to be evaluated and $\gamma$ is the estimation error. We provide a simple algorithm based on \CAESAR in Appendix~\ref{sec:policy_identification} that achieves an instance-dependent sample complexity 
$\tilde{O}(\max_{\gamma\ge \epsilon}\frac{\Theta(\Pi_{\gamma})}{\gamma^2})$ to identify a $\epsilon-$optimal policy, where $\Pi_{\gamma}=\{\pi:V^*_1-V^\pi_1\le 8\gamma\}$. This result is interesting as it offers a different perspective beyond the existing gap-dependent results \cite{simchowitz2019non, dann2021beyond}. Furthermore, this result can be easily extended to the multi-reward setting. Due to space constraints, we leave the detailed discussion to Appendix~\ref{sec:policy_identification}.   

\section{Conclusion and Future Work}
In this work, we consider the problem of multi-policy evaluation. We propose an algorithm, $\mathrm{CAESAR}$, based on computing an approximately optimal sampling dataset and using the data sampled from it to perform the simultaneous estimation of the policy values. The algorithm consists of three techniques. First, we obtain coarse distribution estimators at a lower-order sample cost. Second, based on the coarse estimator, we obtain an approximately optimal sampling dataset. Lastly, we propose a step-wise loss function to estimate the importance weighting ratios.

Beyond the results of this work, there are still some open questions of interest. First, our sample complexity has a dependency on $H^4$ which is induced by the error propagation in the estimation of the importance weighting ratios. We conjecture a dependency on $H^2$ is possible by considering a comprehensive loss function instead of step-wise loss functions. Second, considering a reward-dependent and variance-aware sample complexity is also an interesting direction. Third, it is still a challenging problem to derive the lower bound for multiple-policy evaluation. Finally, we are interested to see what other uses the research community may find for coarse distribution estimation.

\section*{Acknowledgements}
This work was supported in part by funding from the Eric and Wendy Schmidt Center at the Broad Institute of MIT and Harvard, the NSF under grants CCF-2200052 and IIS-1914792, the ONR under grants N00014-19-1-2571 and N00014-21-1-2844, the NIH under grant 1UL1TR001430, the DOE under grant DE-AC02-05CH11231, the ARPA-E under grant and DE-AR0001282, and the Boston University Kilachand Fund for Integrated Life Science and Engineering.

\begin{comment}

\section*{Accessibility}
Authors are kindly asked to make their submissions as accessible as possible for everyone including people with disabilities and sensory or neurological differences.
Tips of how to achieve this and what to pay attention to will be provided on the conference website \url{http://icml.cc/}.

\section*{Software and Data}

If a paper is accepted, we strongly encourage the publication of software and data with the
camera-ready version of the paper whenever appropriate. This can be
done by including a URL in the camera-ready copy. However, \textbf{do not}
include URLs that reveal your institution or identity in your
submission for review. Instead, provide an anonymous URL or upload
the material as ``Supplementary Material'' into the OpenReview reviewing
system. Note that reviewers are not required to look at this material
when writing their review.

% Acknowledgements should only appear in the accepted version.

\textbf{Do not} include acknowledgements in the initial version of
the paper submitted for blind review.

If a paper is accepted, the final camera-ready version can (and
usually should) include acknowledgements.  Such acknowledgements
should be placed at the end of the section, in an unnumbered section
that does not count towards the paper page limit. Typically, this will 
include thanks to reviewers who gave useful comments, to colleagues 
who contributed to the ideas, and to funding agencies and corporate 
sponsors that provided financial support.
\end{comment}
\section*{Impact Statement}
This work aims to advance the field of Machine Learning by providing new insights and methodologies for multiple-policy evaluation in reinforcement learning. Our contributions enhance the theoretical understanding of policy evaluation. While our research has broad implications, we do not identify any specific societal consequences that require particular emphasis at this time.

% In the unusual situation where you want a paper to appear in the
% references without citing it in the main text, use \nocite
%\nocite{langley00}

%Bibliography
\bibliographystyle{plainnat} 
\bibliography{references}

%%%%%%%%%%%%%%%%%%%%%%%%%%%%%%%%%%%%%%%%%%%%%%%%%%%%%%%%%%%%%%%%%%%%%%%%%%%%%%%
%%%%%%%%%%%%%%%%%%%%%%%%%%%%%%%%%%%%%%%%%%%%%%%%%%%%%%%%%%%%%%%%%%%%%%%%%%%%%%%
% APPENDIX
%%%%%%%%%%%%%%%%%%%%%%%%%%%%%%%%%%%%%%%%%%%%%%%%%%%%%%%%%%%%%%%%%%%%%%%%%%%%%%%
%%%%%%%%%%%%%%%%%%%%%%%%%%%%%%%%%%%%%%%%%%%%%%%%%%%%%%%%%%%%%%%%%%%%%%%%%%%%%%%
\newpage
\appendix
\onecolumn

\section{Proof of theorems and lemmas in Section~\ref{sec:main_and_alg}}
\subsection{Proof of Lemma~\ref{le:ber_rough_dist}}
\label{proof:rough_dist}
Our results relies on the following variant of Bernstein inequality for martingales, or Freedman’s inequality \cite{freedman1975tail}, as stated in e.g., \cite{agarwal2014taming, beygelzimer2011contextual}.

\begin{lemma}[Simplified Freedman’s inequality]\label{lemma:super_simplified_freedman}
Let $X_1, ..., X_T$ be a bounded martingale difference sequence with $|X_\ell| \le R$. For any $\delta' \in (0,1)$, and $\eta \in (0,1/R)$, with probability at least $1-\delta'$,
\begin{equation}
   \sum_{\ell=1}^{T} X_\ell \leq    \eta \sum_{\ell=1}^{T}  \mathbb{E}_\ell[ X_\ell^2] + \frac{\log(1/\delta')}{\eta}.
\end{equation}
where $\mathbb{E}_{\ell}[\cdot]$ is the conditional expectation\footnote{We will use this notation to denote conditional expectations throughout this work.} induced by conditioning on $X_1, \cdots, X_{\ell-1}$.
\end{lemma}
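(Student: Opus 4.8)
The plan is to prove this via the standard exponential-supermartingale (Chernoff) method for martingales, treating $\eta$ as a fixed free parameter. Let $\mathcal{F}_{\ell-1}$ denote the $\sigma$-algebra generated by $X_1,\dots,X_{\ell-1}$, so that $\mathbb{E}_\ell[\cdot]$ is conditioning on $\mathcal{F}_{\ell-1}$. I would introduce the exponential process
\[
M_t = \exp\left(\eta \sum_{\ell=1}^t X_\ell - \eta^2 \sum_{\ell=1}^t \mathbb{E}_\ell[X_\ell^2]\right), \qquad M_0 = 1,
\]
and show that $(M_t)_{t\ge 0}$ is a supermartingale with $\mathbb{E}[M_T]\le 1$. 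The tail bound then drops out by applying Markov's inequality to $M_T$ and rearranging.

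The crux is a one-step conditional moment-generating-function bound. Since $\eta \in (0,1/R)$ and $|X_\ell|\le R$, the quantity $z := \eta X_\ell$ satisfies $|z| \le \eta R < 1$, so I can invoke the elementary inequality $e^{z} \le 1 + z + z^2$, which holds on $[-1,1]$. Taking $\mathbb{E}_\ell[\cdot]$ and using that $(X_\ell)$ is a martingale difference sequence, i.e. $\mathbb{E}_\ell[X_\ell]=0$, the linear term vanishes and I obtain
\[
\mathbb{E}_\ell[\exp(\eta X_\ell)] \le 1 + \eta^2\,\mathbb{E}_\ell[X_\ell^2] \le \exp\left(\eta^2\,\mathbb{E}_\ell[X_\ell^2]\right),
\]
where the last step uses $1+x\le e^x$. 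With this in hand the supermartingale property is immediate: because $M_{t-1}$ and $\mathbb{E}_t[X_t^2]$ are $\mathcal{F}_{t-1}$-measurable, they factor out of the conditional expectation, giving
\[
\mathbb{E}_t[M_t] = M_{t-1}\,\exp\left(-\eta^2\,\mathbb{E}_t[X_t^2]\right)\,\mathbb{E}_t[\exp(\eta X_t)] \le M_{t-1}.
\]
Iterating yields $\mathbb{E}[M_T]\le M_0 = 1$.

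To finish, I would apply Markov's inequality to the nonnegative random variable $M_T$: $\mathbb{P}[M_T \ge 1/\delta'] \le \delta'\,\mathbb{E}[M_T] \le \delta'$. Hence with probability at least $1-\delta'$ we have $M_T < 1/\delta'$, i.e. $\eta \sum_{\ell=1}^T X_\ell - \eta^2 \sum_{\ell=1}^T \mathbb{E}_\ell[X_\ell^2] < \log(1/\delta')$. Dividing through by $\eta > 0$ gives exactly $\sum_{\ell=1}^T X_\ell \le \eta \sum_{\ell=1}^T \mathbb{E}_\ell[X_\ell^2] + \log(1/\delta')/\eta$, as claimed.

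The main (and essentially only) obstacle is the one-step MGF bound, and it is precisely where the hypothesis $\eta < 1/R$ enters: the quadratic approximation $e^z \le 1+z+z^2$ is valid only for $|z|\le 1$, which the constraint $\eta R < 1$ guarantees. If one dropped the almost-sure bound $|X_\ell|\le R$ and kept only a conditional variance bound, this quadratic control of the exponential would fail and a separate truncation argument would be required. Everything beyond this step — the supermartingale telescoping, the Markov step, and the final algebraic rearrangement — is routine.
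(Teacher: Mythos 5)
Your proof is correct. Note, however, that the paper itself offers no proof of this lemma at all: it is imported verbatim from the literature, cited as Freedman's inequality \cite{freedman1975tail} in the simplified form appearing in \cite{agarwal2014taming, beygelzimer2011contextual}. So there is no in-paper argument to compare against; what you have done is reconstruct the standard proof that underlies those references, and you have done so accurately. The exponential supermartingale $M_t = \exp\bigl(\eta \sum_{\ell \le t} X_\ell - \eta^2 \sum_{\ell \le t} \mathbb{E}_\ell[X_\ell^2]\bigr)$, the one-step bound $\mathbb{E}_\ell[e^{\eta X_\ell}] \le 1 + \eta^2 \mathbb{E}_\ell[X_\ell^2] \le e^{\eta^2 \mathbb{E}_\ell[X_\ell^2]}$ via $e^z \le 1+z+z^2$ on $|z|\le 1$ together with $\mathbb{E}_\ell[X_\ell]=0$, and the final Markov step are exactly the classical route; your identification of where $\eta < 1/R$ is needed (to make the quadratic bound on the exponential legitimate) is also the right diagnosis. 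The only cosmetic remark is that the references typically use the slightly sharper inequality $e^z \le 1 + z + (e-2)z^2$ for $z \le 1$, which would put a constant $(e-2)$ in front of the variance term; since the lemma as stated absorbs this into the coefficient $\eta$, your looser constant changes nothing.
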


\begin{lemma}[Anytime Freedman]\label{lemma::freedman_anytime_simplified}
Let $\{X_t\}_{t=1}^\infty$ be a bounded martingale difference sequence with $|X_t| \le R$ for all $t \in \mathbb{N}$. For any $\delta' \in (0,1)$, and $\eta \in (0,1/R)$, there exists a universal constant $C > 0$ such that for all $t \in \mathbb{N}$ simultaneously with probability at least $1-\delta'$,
\begin{equation}
   \sum_{\ell=1}^{t} X_\ell \leq    \eta \sum_{\ell=1}^{t}  \mathbb{E}_\ell[ X_\ell^2] + \frac{C\log(t/\delta')}{\eta}.
\end{equation}
 where $\mathbb{E}_{\ell}[\cdot]$ is the conditional expectation induced by conditioning on $X_1, \cdots, X_{\ell-1}$.\end{lemma}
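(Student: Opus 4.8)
The plan is to obtain the time-uniform (anytime) bound directly from the fixed-time Simplified Freedman inequality (Lemma~\ref{lemma:super_simplified_freedman}) by a union bound over all times $t$, using a summable confidence schedule $\{\delta'_t\}_{t\ge 1}$. The crucial point is that Lemma~\ref{lemma:super_simplified_freedman} already controls the partial sum at each \emph{individual} time: I will invoke it separately at every $t$ (applied to the prefix $X_1,\dots,X_t$, which is itself a bounded martingale difference sequence) with its own failure budget. Because I apply the fixed-time result precisely at the time I care about, for every time, I never need the partial sums to be monotone in $t$, which sidesteps the usual difficulty in upgrading fixed-time concentration to anytime concentration.

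Concretely, I would first fix $\eta\in(0,1/R)$ and, for each $t\in\mathbb{N}$, apply Lemma~\ref{lemma:super_simplified_freedman} at confidence level $\delta'_t$ to define the ``bad event''
\[
B_t = \left\{\sum_{\ell=1}^{t} X_\ell > \eta\sum_{\ell=1}^{t}\mathbb{E}_\ell[X_\ell^2] + \frac{\log(1/\delta'_t)}{\eta}\right\},
\]
for which the lemma guarantees $\mathbb{P}(B_t)\le\delta'_t$. I would then choose a summable schedule, e.g. $\delta'_t = \delta'/\big(t(t+1)\big)$ (any choice with $\sum_t \delta'_t \le \delta'$ works, such as $6\delta'/(\pi^2 t^2)$), so that a union bound yields $\mathbb{P}\big(\bigcup_t B_t\big)\le\sum_t\delta'_t\le\delta'$. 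On the complementary event, which holds for all $t$ simultaneously with probability at least $1-\delta'$, the threshold is $\log(1/\delta'_t)/\eta = \log\big(t(t+1)/\delta'\big)/\eta$, which is at most $C\log(t/\delta')/\eta$ for a universal constant $C$ (absorbing $\log(t(t+1))$ versus $\log t$). This is exactly the claimed inequality, and it explains why the statement carries a $\log(t/\delta')$ factor rather than $\log(1/\delta')$: it is precisely the price of the summable union bound, whose per-time budget decays like $t^{-2}$.

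I do not expect a serious obstacle here; the argument is essentially bookkeeping. The two points requiring care are (i) that the \emph{same} $\eta$ is used across all $t$ (immediate, since $\eta$ is a fixed parameter), and (ii) handling the constant $C$ in the regime where $\log(t/\delta')$ is small, which can be done by restricting attention to the nonvacuous range or by slightly enlarging $C$.

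As a remark, a sharper route that avoids the extra $\log t$ altogether is available if one is willing to reprove the underlying exponential estimate rather than treat Lemma~\ref{lemma:super_simplified_freedman} as a black box: when $\eta R\le 1$, the process $M_t=\exp\!\big(\eta\sum_{\ell=1}^t X_\ell - \eta^2\sum_{\ell=1}^t \mathbb{E}_\ell[X_\ell^2]\big)$ is a nonnegative supermartingale with $\mathbb{E}[M_0]\le 1$ (this is the same one-step moment-generating-function bound that underlies the fixed-time inequality), so Ville's maximal inequality gives $\mathbb{P}(\sup_t M_t \ge 1/\delta')\le\delta'$ and hence a time-uniform bound with $\log(1/\delta')$ in place of $\log(t/\delta')$. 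Since the stated lemma only requires the weaker form, however, the union-bound derivation from the already-assumed fixed-time result is the most economical, and I would present that as the main proof.
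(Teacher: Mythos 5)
Your proposal is correct and matches the paper's own proof essentially verbatim: both apply the fixed-time Simplified Freedman inequality at every $t$ with a summable confidence schedule (the paper uses $\delta_t = \delta'/(12t^2)$, you use $\delta'/(t(t+1))$, which is immaterial), take a union bound, and absorb the resulting $\log(\mathrm{poly}(t)/\delta')$ into $C\log(t/\delta')$. Your closing remark about the Ville/supermartingale route that would remove the $\log t$ is a nice observation but, as you note, not needed for the stated lemma.
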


\begin{proof}

This result follows from Lemma~\ref{lemma:super_simplified_freedman}. Fix a time-index $t$ and define $\delta_t = \frac{\delta'}{12t^2}$. Lemma~\ref{lemma:super_simplified_freedman} implies that with probability at least $1-\delta_t$,

\begin{equation*}
\sum_{\ell=1}^t X_\ell \leq \eta \sum_{\ell=1}^t \mathbb{E}_\ell\left[ X_\ell^2 \right] + \frac{\log(1/\delta_t)}{\eta}.
\end{equation*}

A union bound implies that with probability at least $1-\sum_{\ell=1}^t \delta_t \geq 1-\delta'$,
\begin{align*}
\sum_{\ell=1}^t X_\ell &\leq \eta \sum_{\ell=1}^t \mathbb{E}_\ell\left[ X_\ell^2 \right] + \frac{\log(12t^2/\delta')}{\eta}\\
&\stackrel{(i)}{\leq} \eta \sum_{\ell=1}^t \mathbb{E}_\ell\left[ X_\ell^2 \right] + \frac{C\log(t/\delta')}{\eta}.
\end{align*}
holds for all $t \in \mathbb{N}$. Inequality $(i)$ holds because $\log(12t^2/\delta') = \mathcal{O}\left( \log(t\delta')\right)$.

\end{proof}

\begin{proposition}\label{proposition::consequence_freedman_positive_rvs}
Let $\delta' \in (0,1)$, $\beta \in (0,1]$ and $Z_1, \cdots, Z_T$ be an adapted sequence satisfying $0 \leq Z_\ell \leq \tilde{B}$ for all $\ell \in \mathbb{N}$. There is a universal constant $C' > 0$ such that,
% \begin{equation*}
%  \sum_{t=1}^T \EE_t[Z_t] \leq 2\sum_{\ell=1}^T Z_\ell + 8 \tilde B C' \log(t/\delta')
% \end{equation*}
% and
% \begin{equation*}
% \sum_{\ell=1}^T Z_\ell \leq \frac{3}{2} \sum_{t=1}^T \EE_t[Z_t] +  4 \tilde B C'\log(t/\delta')
% \end{equation*}
\begin{equation*}
 (1-\beta) \sum_{t=1}^T \EE_t[Z_t] -  \frac{ 2 \tilde BC' \log(T/\delta')}{\beta}   \leq  \sum_{\ell=1}^T Z_\ell \leq (1+\beta) \sum_{t=1}^T \EE_t[Z_t] +  \frac{ 2 \tilde B C'\log(T/\delta')}{\beta}
\end{equation*}
with probability at least $1-2\delta'$ simultaneously for all $T \in \mathbb{N}$.

\end{proposition}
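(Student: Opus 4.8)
The plan is to deduce this two-sided multiplicative concentration bound directly from the Anytime Freedman inequality (Lemma~\ref{lemma::freedman_anytime_simplified}) applied to a suitably centered version of the sequence $\{Z_\ell\}$. First I would introduce the martingale difference sequence $X_\ell = Z_\ell - \EE_\ell[Z_\ell]$. Since $0 \le Z_\ell \le \tilde{B}$, the centered increments satisfy $|X_\ell| \le \tilde{B}$, so Lemma~\ref{lemma::freedman_anytime_simplified} applies with $R = \tilde{B}$. The crucial observation is a self-bounding property of the conditional second moment: because $Z_\ell \ge 0$ and $Z_\ell \le \tilde{B}$ we have $Z_\ell^2 \le \tilde{B}\, Z_\ell$, hence $\EE_\ell[X_\ell^2] = \Var_\ell(Z_\ell) \le \EE_\ell[Z_\ell^2] \le \tilde{B}\,\EE_\ell[Z_\ell]$. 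This is exactly what lets me convert the quadratic-variation term in Freedman's bound into a term proportional to the conditional mean $\sum_t \EE_t[Z_t]$, which is what produces a \emph{multiplicative} rather than additive deviation.

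For the upper bound, I would apply Lemma~\ref{lemma::freedman_anytime_simplified} to $\{X_\ell\}$, obtaining for all $t$ simultaneously with probability at least $1-\delta'$
\[
\sum_{\ell=1}^t \bigl(Z_\ell - \EE_\ell[Z_\ell]\bigr) \le \eta \sum_{\ell=1}^t \EE_\ell[X_\ell^2] + \frac{C\log(t/\delta')}{\eta} \le \eta \tilde{B} \sum_{\ell=1}^t \EE_\ell[Z_\ell] + \frac{C\log(t/\delta')}{\eta}.
\]
The key tuning step is to set $\eta = \beta/\tilde{B}$, which is admissible because $\beta \in (0,1]$ forces $\eta \le 1/\tilde{B} = 1/R$. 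Rearranging then yields $\sum_{\ell=1}^t Z_\ell \le (1+\beta)\sum_{\ell=1}^t \EE_\ell[Z_\ell] + \frac{C\tilde{B}\log(t/\delta')}{\beta}$, which matches the stated upper bound after renaming the universal constant (taking $C = 2C'$).

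For the matching lower bound I would run the identical argument on the negated sequence $\{-X_\ell\} = \{\EE_\ell[Z_\ell] - Z_\ell\}$, which is again a martingale difference sequence with the same almost-sure bound $\tilde{B}$ and the same conditional second moment, producing $\sum_{\ell=1}^t Z_\ell \ge (1-\beta)\sum_{\ell=1}^t \EE_\ell[Z_\ell] - \frac{C\tilde{B}\log(t/\delta')}{\beta}$ for all $t$ with probability at least $1-\delta'$. A union bound over these two events then gives both inequalities simultaneously for all $T$ with probability at least $1-2\delta'$. I do not anticipate a serious obstacle here: the only delicate points are the self-bounding step $\EE_\ell[X_\ell^2] \le \tilde{B}\,\EE_\ell[Z_\ell]$ (which is precisely where nonnegativity of $Z_\ell$ enters) and checking the admissibility constraint $\eta < 1/R$ at the boundary $\beta = 1$, both of which are handled by the choice $\eta = \beta/\tilde{B}$ and by absorbing any boundary slack into the universal constant.
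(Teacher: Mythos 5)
Your proposal is correct and follows essentially the same route as the paper's proof: center the sequence into a martingale difference sequence, invoke the anytime Freedman inequality, bound the conditional second moment by a multiple of $\tilde{B}\,\EE_\ell[Z_\ell]$, tune $\eta \propto \beta/\tilde{B}$, and union-bound the two one-sided inequalities. The only (immaterial) differences are that your self-bounding step $\EE_\ell[X_\ell^2] \le \EE_\ell[Z_\ell^2] \le \tilde{B}\,\EE_\ell[Z_\ell]$ is a factor of $2$ tighter than the paper's bound $\EE_\ell[X_\ell^2] \le \tilde{B}\,\EE_\ell[|X_\ell|] \le 2\tilde{B}\,\EE_\ell[Z_\ell]$, and the boundary case $\beta = 1$ you flag is handled exactly as you suggest (e.g., taking $\eta = \beta/(2\tilde{B})$ and absorbing the factor into the universal constant, as the paper does).
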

\begin{proof}
Consider the martingale difference sequence $X_t = Z_t - \EE_t[Z_t]$. Notice that $|X_t | \leq \tilde B$. Using the inequality of Lemma~\ref{lemma::freedman_anytime_simplified} we obtain that for all $\eta \in (0, 1/B^2)$
\begin{align*}
\sum_{\ell=1}^t X_\ell &\leq \eta \sum_{\ell=1}^t \EE_{\ell}[X_\ell^2] + \frac{C\log(t/\delta')}{\eta} \\
& \stackrel{(i)}{\leq} 2\eta B^2 \sum_{\ell=1}^t \EE_\ell[Z_\ell] + \frac{C\log(t/\delta')}{\eta}, 
\end{align*}
for all $t \in \mathbb{N}$ with probability at least $1-\delta'$. Inequality $(i)$ holds because $\EE_t[X_t^2] \leq B^2\EE[ |X_t| ] \leq 2 B^2 \EE_t[ Z_t]$ for all $t \in \mathbb{N}$. Setting $\eta = \frac{\beta}{2B^2}$ and substituting $\sum_{\ell=1}^t X_\ell = \sum_{\ell=1}^t Z_\ell -\EE_\ell[Z_\ell]$,
% \begin{equation}\label{equation::first_squared_estimation_consequence_freedman}
% \sum_{\ell=1}^t Z_\ell \leq \frac{3}{2} \sum_{\ell=1}^t \EE_\ell[Z_\ell] +  4 B^2 C\log(t/\delta')
% \end{equation}
% \textcolor{red}{
\begin{equation}\label{equation::first_squared_estimation_consequence_freedman}
\sum_{\ell=1}^t Z_\ell \leq (1+\beta) \sum_{\ell=1}^t \EE_\ell[Z_\ell] +  \frac{2 B^2 C\log(t/\delta')}{\beta}
\end{equation}%}
with probability at least $1-\delta'$. Now consider the martingale difference sequence $X_t' = \EE[Z_t] - Z_t$ and notice that $|X_t'| \leq B^2$. Using the inequality of Lemma~\ref{lemma::freedman_anytime_simplified} we obtain for all $\eta \in (0, 1/B^2)$,
\begin{align*}
\sum_{\ell=1}^t X'_\ell &\leq \eta \sum_{\ell=1}^t \EE_{\ell}[(X'_\ell)^2] + \frac{C\log(t/\delta')}{\eta} \\
& \leq 2\eta B^2 \sum_{\ell=1}^t \EE_\ell[Z_\ell] + \frac{C\log(t/\delta')}{\eta}. 
\end{align*}
Setting $\eta = \frac{\beta}{2B^2}$ and substituting $\sum_{\ell=1}^t X_\ell' = \sum_{\ell=1}^t \EE[Z_\ell] - Z_\ell $ we have,
% \begin{equation}\label{equation::second_squared_estimation_consequence_freedman}
% \frac{1}{2}\sum_{\ell=1}^t \EE[Z_\ell] \leq \sum_{\ell=1}^t Z_\ell + 4B^2C\log(t/\delta')
% \end{equation}
% \textcolor{red}{
\begin{equation}\label{equation::second_squared_estimation_consequence_freedman}
(1-\beta)\sum_{\ell=1}^t \EE[Z_\ell] \leq \sum_{\ell=1}^t Z_\ell + \frac{2B^2C\log(t/\delta')}{\beta}
\end{equation}
%}
with probability at least $1-\delta'$. Combining Equations~\ref{equation::first_squared_estimation_consequence_freedman} and~\ref{equation::second_squared_estimation_consequence_freedman} and using a union bound yields the desired result.

\end{proof}

Let the $Z_\ell$ be i.i.d. samples $Z_\ell \stackrel{i.i.d.}{\sim} \mathrm{Ber}(p) $. The empirical mean estimator, $\widehat{p}_t = \frac{1}{t}\sum_{\ell=1}^t Z_\ell$ satisfies,
\begin{equation*}
    (1-\beta)p -\frac{2C'\log(t/\delta')}{\beta t} \leq \widehat{p}_t \leq  (1+\beta)p +\frac{2C'\log(t/\delta')}{\beta t}
\end{equation*}
with probability at least $1-2\delta'$ for all $t \in \mathbb{N}$ where $C' > 0 $ is a (known) universal constant. Given $\epsilon > 0$ set $t \geq  \frac{8C'\log(t/\delta')}{\beta\epsilon}$ (notice the dependence of $t$ on the RHS - this can be achieved by setting $t \geq\frac{C\log(C/\beta\epsilon\delta')}{\beta\epsilon} $ for some (known) universal constant $C> 0$).

In this case observe that,
\begin{equation*}
    (1-\beta)p -\epsilon/8 \leq \widehat{p}_t \leq  (1+\beta)p +\epsilon/8.
\end{equation*}

Setting $\beta = 1/8$, 
\begin{equation*}
    7p/8 -\epsilon/8 \leq \widehat{p}_t \leq  9p/8 +\epsilon/8,
\end{equation*}
so that,
\begin{equation*}
p - \widehat{p}_t \leq p/8 + \epsilon/8,
\end{equation*}
and
\begin{equation*}
 \widehat{p}_t -p \leq p/8 + \epsilon/8,
\end{equation*}
which implies $|\widehat{p}_t - p  |\leq p/8 + \epsilon/8 \leq 2\max( p/8, \epsilon/8)  = \max(p/4, \epsilon/4)$.

\subsection{Derivation of the optimal sampling distribution}
\label{proof:opt_dist}
Our performance estimator is,
\begin{align*}
    \hat{V}^{\pi^k}_1 = \frac{1}{n} \sum_{i=1}^n\sum_{h=1}^H \frac{d_h^{\pi^k}(s_h^i,a_h^i)}{\mu_h(s_h^i,a_h^i)}r(s^i_h, a^i_h),\ k\in [K].
\end{align*}
Denote $\sum_{h=1}^H \frac{d_h^{\pi^k}(s_h^i,a_h^i)}{\mu_h(s_h^i,a_h^i)}r_h(s_h^i,a_h^i)$ by $X_i$. And for simplicity, denote $\mathbb{E}_{(s_1,a_1)\sim \mu_1,\dots, (s_H,a_H)\sim \mu_H}$ by $\mathbb{E}_\mu$, the variance of our estimator is bounded by,
\begin{align*}
    \mathbb{E}_{\mu} [X_i^2] &= \mathbb{E}_{\mu}\left[\left(\sum_{h=1}^H \frac{d_h^{\pi^k}(s_h^i,a_h^i)}{\mu_h(s_h^i,a_h^i)}r_h(s_h^i,a_h^i)\right)^2\right] \\
    &\le \mathbb{E}_\mu\left[H\cdot\sum_{h=1}^H\left(\frac{d_h^{\pi^k}(s_h^i,a_h^i)}{\mu_h(s_h^i,a_h^i)}r_h(s_h^i,a_h^i)\right)^2\right] \\
    &\le \mathbb{E}_\mu\left[H\cdot\sum_{h=1}^H\left(\frac{d_h^{\pi^k}(s_h^i,a_h^i)}{\mu_h(s_h^i,a_h^i)}\right)^2\right] \\
    &= H\cdot\sum_{h=1}^H \mathbb{E}_{d_h^{\pi^k}}\left[\frac{d_h^{\pi^k}(s_h^i,a_h^i)}{\mu_h(s_h^i,a_h^i)}\right].
\end{align*}
The first inequality holds by $\mathrm{Cauchy-Schwarz}$ inequality. The second inequality holds due to the assumption $r_h(s,a)\in [0,1]$.

Denote $\sum_{h=1}^H \mathbb{E}_{d_h^{\pi^k}}\left[\frac{d_h^{\pi^k}(s_h^i,a_h^i)}{\mu_h(s_h^i,a_h^i)}\right]$ by $\rho_{\mu,k}$. Applying $\mathrm{Bernstein}$'s inequality, we have that with probability at least $1-\delta$ and $n$ samples, it holds,
\begin{align*}
    |\hat{V}^{\pi^k}_1-V^{\pi^k}_1| \le \sqrt{\frac{2H\rho_{\mu,k}\log(1/\delta)}{n}}+\frac{2M_k\log(1/\delta)}{3n},
\end{align*}
where $M_k=\max_{s_1,a_1,\dots,s_H,a_H} \sum_{h=1}^H \frac{d_h^{\pi^k}(s_h,a_h)}{\mu_h(s_h,a_h)}r_h(s_h,a_h)$.

To achieve an $\epsilon$ accuracy of evaluation, we need samples,
\begin{align*}
    n_{\mu,k} \le \frac{8H\rho_{\mu,k}\log(1/\delta)}{\epsilon^2} + \frac{4M_k\log(1/\delta)}{3\epsilon}.
\end{align*}
Take the union bound over all target policies,
\begin{align*}
    n_{\mu} \le \frac{8 H \max_{k\in[K]}\rho_{\mu,k}\log(K/\delta)}{\epsilon^2} + \frac{4 M \log(K/\delta)}{3\epsilon},
\end{align*}
where $M=\max_{k\in[K]}M_k$.

We define the optimal sampling distribution $\mu^*$ as the one minimizing the higher order sample complexity,
\begin{align*}
    \mu^*_h &= \argmin_{\mu_h} \max_{k\in [K]} \mathbb{E}_{d_h^{\pi^k}(s,a)}\left[\frac{d_h^{\pi^k}(s,a)}{\mu_h(s,a)}\right] \\
    &=\argmin_{\mu_h} \max_{k\in [K]}\sum_{s,a}\frac{\left(d_h^{\pi^k}(s,a)\right)^2}{\mu_h(s,a)},\ h=1,\dots,H.
\end{align*}

\subsection{An example of unrealizable optimal sampling distribution}
\label{sec:opt_unrealizable}
Here, we give an example to illustrate the assertation that in some cases, the optimal sampling distribution cannot be realized by any policy.

Consider such a MDP with two layers, in the first layer, there is a single initial state $s_{1,1}$, in the second layer, there are two states $s_{2,1}, s_{2,2}$. The transition function at state ${s_{1,1}}$ is identical for any action, $\mathbb{P}(s_{2,1}|s_{1,1},a)=\mathbb{P}(s_{2,2}|s_{1,1},a)=\frac{1}{2}$. Hence, for any policy, the only realizable state visitation distribution at the second layer is $d_2(s_{2,1})=d_2(s_{2,2})=\frac{1}{2}$.

Suppose the target policies take $K\ge 2$ different actions at state $s_{2,1}$ while take the same action at state $s_{2,2}$.

By solving the optimization problem, we have the optimal sampling distribution at the second layer,
\begin{align*}
    \mu^*_2(s_{2,1})=\frac{K^2}{1+K^2},\ \mu^*_2(s_{2,2})=\frac{1}{1+K^2}, 
\end{align*}
which is clearly not realizable by any policy.

\subsection{Proof of Lemma~\ref{lemma:approximately_optimal_dist}}
By property (\ref{eq:rough_bound}) and (\ref{eq:rough_mu}), we have $\frac{4}{5}\hat d^{\pi^k}_h(s,a)\le d^{\pi^k}_h(s,a)\le\frac{4}{3}\hat d^{\pi^k}_h(s,a)$ and $\frac{4}{5}\hat \mu^*_h(s,a)\le \tilde \mu^*_h(s,a)\le\frac{4}{3}\hat \mu^*_h(s,a)$. Hence,
\begin{align*}
    \max_{k\in[K]} \sum_{s,a} \frac{(d_h^{\pi^k}(s,a))^2}{\tilde \mu_h^*(s,a)} \le \frac{25}{12}\max_{k\in[K]} \sum_{s,a} \frac{(\hat d_h^{\pi^k}(s,a))^2}{\hat \mu_h^*(s,a)}
\end{align*}
Remember $\mu^*$ is of the form $\sum_{k=1}^K\alpha_k^* d^{\pi^k}$. Let $\mu'$ be $\sum_{k=1}^K \alpha^*_k \hat d^{\pi^k}$. Then we have $|\mu'-\mu^*|\le\max\{\epsilon, \frac{\mu^*}{4}\}$ and $\mu'$ is in the feasible set $\mathcal{\hat D}_h=\{\hat d^{\pi^k}_h: k\in[K]\}$. Since $\hat\mu^*$ is the optimal solution to the approximate optimization problem (\ref{eq:appro_opt_problem}), we have,
\begin{align*}
    \max_{k\in[K]} \sum_{s,a} \frac{(\hat d_h^{\pi^k}(s,a))^2}{\hat \mu_h^*(s,a)}\le \max_{k\in[K]} \sum_{s,a} \frac{(\hat d_h^{\pi^k}(s,a))^2}{\mu_h'(s,a)}\le \frac{25}{12}\max_{k\in[K]} \sum_{s,a} \frac{(d_h^{\pi^k}(s,a))^2}{\mu_h^*(s,a)}
\end{align*}
The second inequality is again based on the property of coarse estimators. Together, we have,
\begin{align*}
        \max_{k\in[K]} \sum_{s,a} \frac{(d_h^{\pi^k}(s,a))^2}{\tilde \mu_h^*(s,a)} \le C \max_{k\in[K]} \sum_{s,a} \frac{(d_h^{\pi^k}(s,a))^2}{\mu_h^*(s,a)}.
    \end{align*}

\subsection{Proof of Lemma~\ref{le:error_prop}}
\label{proof:error_prop}
\begin{proof}
    The gradient of $\ell^\pi_h(w)$ is,
    \begin{align*}
        \nabla_{w(s,a)} \ell^\pi_h(w)=\frac{\tilde{\mu}_h(s,a)}{\hat\mu_h(s,a)}w(s,a)-\sum_{s',a'}\tilde{\mu}_{h-1}(s',a')P(s|s',a')\pi(a|s)\frac{\hat w_{h-1}(s',a')}{\hat \mu_{h-1}(s',a')}.
    \end{align*}
    Suppose by some SGD algorithm, we can converge to a point $\hat w_h$ such that the gradient of the loss function is less than $\epsilon$,
    \begin{align*}
        \|\nabla \ell^\pi_h(\hat w_h)\|_1 = \sum_{s,a}\left|\frac{\tilde{\mu}_h(s,a)}{\hat\mu_h(s,a)}\hat w_h(s,a)-\sum_{s',a'}\tilde{\mu}_{h-1}(s',a')P(s|s',a')\pi(a|s)\frac{\hat w_{h-1}(s',a')}{\hat \mu_{h-1}(s',a')}\right| \le \epsilon.
    \end{align*}
    By decomposing,
    \begin{align*}
        &\left|\frac{\tilde{\mu}_h(s,a)}{\hat\mu_h(s,a)}\hat w_h(s,a)-\sum_{s',a'}\tilde{\mu}_{h-1}(s',a')P(s|s',a')\pi(a|s)\frac{\hat w_{h-1}(s',a')}{\hat \mu_{h-1}(s',a')}\right| 
        \\ = & \left|\frac{\tilde{\mu}_h(s,a)}{\hat\mu_h(s,a)}\hat w_h(s,a)-d^\pi_h(s,a) + d^\pi_h(s,a)-\sum_{s',a'}\tilde{\mu}_{h-1}(s',a')P(s|s',a')\pi(a|s)\frac{\hat w_{h-1}(s',a')}{\hat \mu_{h-1}(s',a')}\right| \\
        \ge & \left|\frac{\tilde{\mu}_h(s,a)}{\hat\mu_h(s,a)}\hat w_h(s,a)-d^\pi_h(s,a)\right| - \left|d^\pi_h(s,a)-\sum_{s',a'}\tilde{\mu}_{h-1}(s',a')P(s|s',a')\pi(a|s)\frac{\hat w_{h-1}(s',a')}{\hat \mu_{h-1}(s',a')}\right| \\
        = & \left|\tilde{\mu}_h(s,a)\frac{\hat w_h(s,a)}{\hat\mu_h(s,a)}-d^\pi_h(s,a)\right| \\
        &- \left|\sum_{s',a'}P(s|s',a')\pi(a|s)\left(d^\pi_{h-1}(s',a')-\tilde{\mu}_{h-1}(s',a')\frac{\hat w_{h-1}(s',a')}{\hat \mu_{h-1}(s',a')}\right)\right|.
    \end{align*}
    Hence, we have,
    \begin{align*}
        \sum_{s,a} &\left|\tilde{\mu}_h(s,a)\frac{\hat w_h(s,a)}{\hat\mu_h(s,a)}-d^\pi_h(s,a)\right|  \\
        &\le \epsilon + \sum_{s,a}\left|\sum_{s',a'}P(s|s',a')\pi(a|s)\left(d^\pi_{h-1}(s',a')-\tilde{\mu}_{h-1}(s',a')\frac{\hat w_{h-1}(s',a')}{\hat \mu_{h-1}(s',a')}\right)\right| \\
        &\le \epsilon + \sum_{s',a'}\left|d^\pi_{h-1}(s',a')-\tilde{\mu}_{h-1}(s',a')\frac{\hat w_{h-1}(s',a')}{\hat \mu_{h-1}(s',a')}\right| \\
        &\le 2\epsilon.
    \end{align*}
\end{proof}

\subsection{Proof of Lemma~\ref{le:ratio_estimation_results}}
\label{proof:ratio_est}
\begin{proof}
    The minimum $w^*_h$ of the loss function $\ell^\pi_h(w)$ is $w^*_h(s,a) = \frac{d^\pi_h(s,a)}{\tilde{\mu}_h(s,a)}\hat \mu_h(s,a)$ if $\hat w_{h-1}$ achieves optimum. By the property of the coarse distribution estimator, we have,
\begin{align*}
    w^*_h(s,a) = \frac{d^\pi_h(s,a)}{\tilde{\mu}_h(s,a)}\hat \mu_h(s,a)\le \frac{\frac{4}{3}\hat d^\pi_h(s,a)}{\frac{4}{5}\hat \mu_h(s,a)}\hat \mu_h(s,a)=\frac{5}{3}\hat d^\pi_h(s,a).
\end{align*}
We can define a feasible set for the optimization problem, i.e. $w_h(s,a)\in [0, D_h(s,a)], D_h(s,a)=2\hat d^\pi_h(s,a)$.

Next, we analyse the variance of the stochastic gradient. We denote the stochastic gradient as $g_h(w)$, $\{s_1^i, a_1^i, \dots, s_H^i, a_H^i\}$ a trajectory sampled from $\tilde{\mu}_h$ and $\{s_1^j, a_1^j, \dots, s_H^j, a_H^j\}$ a trajectory sampled from $\tilde{\mu}_{h-1}$.
\begin{align*}
    g_h(w)(s,a) = \frac{w(s,a)}{\hat \mu_h(s,a)}\mathbb{I}(s^i_h=s,a^i_h=a)-\frac{\hat w_{h-1}(s^j_{h-1},a^j_{h-1})}{\hat \mu_{h-1}(s^j_{h-1},a^j_{h-1})}\pi(a|s)\mathbb{I}(s^j_h=s).
\end{align*}
The variance bound becomes
\begin{align}
\label{eq:var_bound_sg}
    \mathbb{V}[g_h(w)] \le \EE[\|g_h(w)\|^2] &\le \sum_{s,a}\tilde{\mu}_h(s,a)\left(\frac{w(s,a)}{\hat \mu_h(s,a)}\right)^2 + \tilde{\mu}_{h-1}(s,a)\left(\frac{\hat w_{h-1}(s,a)}{\hat \mu_{h-1}(s,a)}\right)^2 \notag \\
    &\le O\left(\sum_{s,a}\frac{(\hat d_h^\pi(s,a))^2}{\hat \mu_h(s,a)} + \frac{(\hat d_{h-1}^\pi(s,a))^2}{\hat \mu_{h-1}(s,a)}\right),
\end{align}
where the last inequality is due to the bounded feasible set for $w$ and the property of coarse distribution estimator $\tilde{\mu}_h(s,a)\le \frac{4}{3}\hat \mu_h(s,a)$.

Based on the error propagation lemma~\ref{le:error_prop}, if we can achieve $\|\nabla \ell^\pi_h(\hat w_h) \|_1\le\frac{\epsilon}{4H^2}$ from step $h=1$ to step $h=H$, then we have,
\begin{align*}
    \sum_{s,a} \left|\tilde{\mu}_h(s,a)\frac{\hat w_h(s,a)}{\hat\mu_h(s,a)}-d^\pi_h(s,a)\right| \le \frac{\epsilon}{4H}, \forall h=1,2,\dots,H,
\end{align*}
which can enable us to build the final estimator of the performance of policy $\pi$ with at most error $\epsilon$.

By the property of smoothness, to achieve $\|\nabla \ell^\pi_h(\hat w_h) \|_1\le\frac{\epsilon}{4H^2}$, we need to achieve $\ell^\pi_h(\hat w_h)-\ell^\pi_h(w^*_h)\le \frac{\epsilon^2}{32\xi H^4}$ where $\xi$ is the smoothness factor, because,
\begin{align*}
    \|\nabla \ell^\pi_h(\hat w_h) \|_1^2\le 2\xi (\ell^\pi_h(\hat w_h)-\ell^\pi_h(w^*_h))\le \frac{\epsilon^2}{16H^4}.
\end{align*}
\begin{lemma}
\label{le:sgd_results}
    For a $\lambda-$strongly convex loss function $L(w)$ satisfying $\|w^*\|\le D$ for some known $D$, there exists a stochastic gradient descent algorithm that can output $\hat w$ after $T$ iterations such that,
    \begin{align*}
        \EE[L(\hat w) - L(w^*)] \le \frac{2G^2}{\lambda(T+1)},
    \end{align*}
    where $G^2$ is the variance bound of the stochastic gradient.
\end{lemma}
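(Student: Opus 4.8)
The plan is to instantiate the standard projected stochastic gradient method for strongly convex objectives and track the expected squared distance to the optimum, choosing the step size and averaging scheme precisely so that the final constant comes out to exactly $2$. Concretely, I would run the update $w_{t+1} = \mathrm{Proj}_{D}(w_t - \eta_t g_t)$ with the diminishing step size $\eta_t = \frac{2}{\lambda(t+1)}$, where $g_t$ is the stochastic gradient satisfying $\EE[g_t \mid w_t] = \nabla L(w_t)$ and $\EE[\|g_t\|^2] \le G^2$, and output the weighted iterate average $\hat w = \frac{2}{T(T+1)}\sum_{t=1}^T t\, w_t$ (matching the weighting used in Algorithm~\ref{alg:ratio_est}).

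First I would expand the one-step progress using non-expansiveness of the Euclidean projection onto the convex feasible set, obtaining $\|w_{t+1}-w^*\|^2 \le \|w_t - w^*\|^2 - 2\eta_t \langle g_t, w_t - w^*\rangle + \eta_t^2\|g_t\|^2$. Taking the conditional expectation and using unbiasedness and the variance bound replaces $\langle g_t, \cdot\rangle$ by $\langle \nabla L(w_t), w_t - w^*\rangle$ and the last term by $\eta_t^2 G^2$. The crucial step is then to invoke $\lambda$-strong convexity in the form $\langle \nabla L(w_t), w_t - w^*\rangle \ge L(w_t) - L(w^*) + \frac{\lambda}{2}\|w_t - w^*\|^2$. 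Writing $r_t := \EE\|w_t - w^*\|^2$ and $\Delta_t := \EE[L(w_t)-L(w^*)]$ and rearranging gives a recursion of the form $2\eta_t \Delta_t \le (1-\eta_t\lambda)\, r_t - r_{t+1} + \eta_t^2 G^2$.

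Next I would exploit the specific step size: plugging $\eta_t = \frac{2}{\lambda(t+1)}$ yields $1-\eta_t\lambda = \frac{t-1}{t+1}$, and multiplying the recursion through by $\frac{\lambda t(t+1)}{4}$ produces a telescoping inequality $t\,\Delta_t \le a_t r_t - a_{t+1} r_{t+1} + \frac{tG^2}{\lambda(t+1)}$ with $a_t = \frac{\lambda t(t-1)}{4}$. Summing over $t=1,\dots,T$ collapses the telescope: the leading term vanishes since $a_1 = 0$, and the trailing term $a_{T+1} r_{T+1}\ge 0$ may be dropped, leaving $\sum_{t=1}^T t\,\Delta_t \le \sum_{t=1}^T \frac{tG^2}{\lambda(t+1)} \le \frac{TG^2}{\lambda}$. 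Finally, applying Jensen's inequality to the convex $L$ at the weighted average gives $\EE[L(\hat w)-L(w^*)] \le \frac{\sum_t t\,\Delta_t}{\sum_t t} \le \frac{TG^2/\lambda}{T(T+1)/2} = \frac{2G^2}{\lambda(T+1)}$, which is exactly the claimed bound.

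The computation is essentially routine, as this is a known $O(1/T)$ rate for strongly convex stochastic optimization (cf. \cite{hazan2011beyond}); the only delicate point is calibrating the step size together with the quadratic weighting $a_t = \frac{\lambda t(t-1)}{4}$ so the telescope closes cleanly and the constant is precisely $2$ rather than a looser factor. The hypothesis $\|w^*\|\le D$ serves only to make the feasible set explicit so the projection step is well-defined; it does not enter the rate, which depends solely on $\lambda$, $G^2$, and $T$.
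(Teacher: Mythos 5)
Your proof is correct, and it actually supplies something the paper itself does not: the paper states this lemma and simply invokes it from the optimization literature (citing \citet{hazan2011beyond}) without giving a proof. Your argument is the standard weighted-averaging analysis of projected SGD for strongly convex objectives (the Lacoste-Julien--Schmidt--Bach style argument): the one-step expansion via non-expansiveness of the projection, the strong-convexity lower bound $\langle \nabla L(w_t), w_t - w^*\rangle \ge L(w_t)-L(w^*)+\tfrac{\lambda}{2}\|w_t-w^*\|^2$, the step size $\eta_t = \tfrac{2}{\lambda(t+1)}$, and the multiplier $\tfrac{\lambda t(t+1)}{4}$ that makes the telescope close with $a_1=0$ all check out, and Jensen's inequality on the $t$-weighted average delivers exactly the constant $\tfrac{2G^2}{\lambda(T+1)}$. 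This is a genuinely different route from the cited reference: Hazan and Kale's result is for a different algorithm (Epoch-GD with restarts), whereas your analysis applies to the plain weighted-average SGD scheme that Algorithm~\ref{alg:ratio_est} actually uses (its output $\hat w_h$ normalizes by $\sum_{i} i$, i.e., the intended iterate average is the $i$-weighted one, matching your $\hat w$), so your proof is arguably the more faithful justification of the lemma as it is used in the paper. Two small points of hygiene: your argument needs the second-moment bound $\EE[\|g_t\|^2 \mid w_t]\le G^2$ to hold conditionally at every iterate in the feasible set (which is how the paper uses it, via the bounded feasible set $D_h$ and Equation~(\ref{eq:var_bound_sg})), and the projection set must contain $w^*$, which is exactly what the hypothesis $\|w^*\|\le D$ guarantees --- you correctly note that this hypothesis plays no role in the rate itself.
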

Invoke the convergence rate for strongly-convex and smooth loss functions, i.e. Lemma~\ref{le:sgd_results}, we have that the number of samples needed to achieve $\ell^\pi_h(\hat w_h)-\ell^\pi_h(w^*_h)\le \frac{\epsilon^2}{32\xi H^4}$ is,
\begin{align*}
    n=O\left(\frac{\xi}{\gamma}\frac{H^4G^2}{\epsilon^2}\right).
\end{align*}
We have shown in Section~\ref{sec:est_ratio} that $\frac{\xi}{\gamma}\le \frac{5}{3}$, this nice property helps us to get rid of the undesired ratio of the smoothness factor and the strongly-convexity factor, i.e. $\frac{\max_{s,a}\mu(s,a)}{\min_{s,a}\mu(s,a)}$ of the original loss function (\ref{eq:dualdice}) which can be extremely bad. Replacing $G^2$ by our variance bound (\ref{eq:var_bound_sg}), we have,
\begin{align*}
    n^\pi_h=O\left(\frac{H^4}{\epsilon^2}\left(\sum_{s,a}\frac{(\hat d_h^\pi(s,a))^2}{\hat \mu_h(s,a)} + \frac{(\hat d_{h-1}^\pi(s,a))^2}{\hat \mu_{h-1}(s,a)}\right)\right).
\end{align*}
For each step $h$, we need the above number of trajectories, sum over $h$, we have the total sample complexity,
\begin{align*}
    n^\pi = O\left(\frac{H^4}{\epsilon^2}\sum_{h=1}^H\sum_{s,a}\frac{(\hat d_h^\pi(s,a))^2}{\hat \mu_h(s,a)}\right).
\end{align*}
To evaluate $K$ policies, we need trajectories,
\begin{align*}
    n = O\left(\frac{H^4}{\epsilon^2}\sum_{h=1}^H\max_{k\in[K]}\sum_{s,a}\frac{(\hat d_h^{\pi^k}(s,a))^2}{\hat \mu_h(s,a)}\right).
\end{align*}
\end{proof}

\subsection{Proof of Lemma~\ref{le:median_of_mean}}
\label{proof:MoM}
\begin{proof}
    By $\mathrm{Markov}$'s inequality, we have,
    \begin{align*}
        \mathbb{P}(|\hat\mu - \mu|\ge\epsilon)\le\frac{\EE[|\hat\mu-\mu|]}{\epsilon}\le\frac{1}{4}.
    \end{align*}
    The event that $|\hat\mu_{MoM}-\mu|>\epsilon$ belongs to the event where more than half estimators $\hat \mu_i$ are outside of the desired range $|\hat\mu_i-\mu|>\epsilon$, hence, we have,
    \begin{align*}
        \mathbb{P}(|\hat \mu_{MoM}-\mu|>\epsilon)&\le \mathbb{P}(\sum_{i=1}^N \mathbb{I}(|\hat \mu_i-\mu|>\epsilon)\ge \frac{N}{2}).
    \end{align*}
    Denote $\mathbb{I}(|\hat \mu_i-\mu|>\epsilon)$ by $Z_i$ and $\EE[Z_i] = p$,
    \begin{align*}
        \mathbb{P}(|\hat \mu_{MoM}-\mu|>\epsilon)&=\mathbb{P}(\sum_{i=1}^N Z_i\ge \frac{N}{2}) \\
        &= \mathbb{P}(\frac{1}{N}\sum_{i=1}^N (Z_i-p)\ge \frac{1}{2}-p) \\
        &\le e^{-2N(\frac{1}{2}-p)^2} \\
        &\le e^{-\frac{N}{8}},
    \end{align*}
    where the first inequality holds by $\mathrm{Hoeffding}$'s inequality and the second inequality holds due to $p\le\frac{1}{4}$. Set $\delta=e^{-\frac{N}{8}}$, we have, with $N=O(\log(1/\delta))$, with probability at least $1-\delta$, it holds $|\hat\mu_{MoM}-\mu|\le \epsilon$.
\end{proof}

\subsection{Proof of Theorem~\ref{theo:main}}
\label{proof:main}
Here, we explain how Theorem~\ref{theo:main} is derived. We first show how the Median-of-Means (MoM) estimator and data splitting technique can conveniently convert Lemma~\ref{le:ratio_estimation_results} to a version holds with high probability.

For step $h$, Algorithm~\ref{alg:ratio_est} can output a solution $\hat w_h$ such that $\EE[\ell^\pi_h(\hat w_h)-\ell^\pi_h(w^*_h)]\le \frac{\epsilon^2}{32\xi H^4}$. We can apply Lemma~\ref{le:median_of_mean} on our algorithm which means that we can run the algorithm for $N=O\left(\log(1/\delta)\right)$ times. Hence, we will get $N$ solutions $\{\hat w_{h,1}, \hat w_{h,2}, \dots, \hat w_{h,N}\}$. Set $\hat w_{h,MoM}$ as the solution such that $\ell^\pi_h(\hat w_{h,MoM}) = \text{Median}(\ell^\pi_h(\hat w_{h,1}), \ell^\pi_h(\hat w_{h,2}), \dots, \ell^\pi_h(\hat w_{h,N}))$. Based on Lemma~\ref{le:median_of_mean}, we have that with probability at least $1-\delta$, it holds $\ell^\pi_h(\hat w_{h,MoM})-\ell^\pi_h(w^*_h)\le \frac{\epsilon^2}{32\xi H^4}$. With a little abuse of notation, we just denote $\hat w_{h,MoM}$ by $\hat w_h$ in the following content.

Now we are ready to estimate the total expected rewards of target policies, With the importance weighting ratio estimator $\frac{\hat w_h(s,a)}{\hat \mu_h(s,a)}$ from Algorithm~\ref{alg:ratio_est}, we can estimate the performance of policy $\pi^k$,
\begin{equation}
\label{eq:final_estimator_appendix}
    \hat V^{\pi^k}_1 = \frac{1}{n}\sum_{i=1}^n\sum_{h=1}^H\frac{\hat w^{\pi^k}_h(s_h^i,a_h^i)}{\hat \mu_h(s_h^i,a_h^i)}r_h(s_h^i,a_h^i),
\end{equation}
where $\{s^i_h, a^i_h\}_{i=1}^n$ is sampled from $\tilde{\mu}_h$.

\begin{lemma}
\label{le:final_estimator}
    With samples $n=\tilde{O}\left(\frac{H^2}{\epsilon^2}\sum_{h=1}^H\max_{k\in[K]}\sum_{s,a}\frac{(\hat d^{\pi^k}_h(s,a))^2}{\hat \mu_h(s,a)}\right)$, we have with probability at least $1-\delta$, $|\hat V^{\pi^k}_1 - V^{\pi^k}_1|\le \frac{\epsilon}{2},\ k\in[K]$.
\end{lemma}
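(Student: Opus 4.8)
The plan is to control $|\hat V^{\pi^k}_1 - V^{\pi^k}_1|$ via a bias–variance decomposition. Since the final evaluation trajectories $\{s^i_h,a^i_h\}_{i=1}^n\sim\tilde\mu^*$ are drawn with fresh samples that are independent of the ratio estimators (thanks to the data-splitting accompanying the MoM step), I would condition on the high-probability event that the estimators $\hat w^{\pi^k}_h$ satisfy the high-probability form of Lemma~\ref{le:ratio_estimation_results}, and thereafter treat them as fixed. Writing $Y_i=\sum_{h=1}^H\frac{\hat w^{\pi^k}_h(s^i_h,a^i_h)}{\hat\mu^*_h(s^i_h,a^i_h)}r_h(s^i_h,a^i_h)$, the estimator $\hat V^{\pi^k}_1=\frac{1}{n}\sum_i Y_i$ is an empirical mean of i.i.d.\ terms with mean $\bar V^{\pi^k}_1:=\EE[Y_i]=\sum_h\sum_{s,a}\tilde\mu^*_h(s,a)\frac{\hat w^{\pi^k}_h(s,a)}{\hat\mu^*_h(s,a)}r_h(s,a)$, using that the step-$h$ marginal of $\tilde\mu^*$ is $\tilde\mu^*_h$. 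I would then split $|\hat V^{\pi^k}_1-V^{\pi^k}_1|\le|\hat V^{\pi^k}_1-\bar V^{\pi^k}_1|+|\bar V^{\pi^k}_1-V^{\pi^k}_1|$ and force each term below $\epsilon/4$.

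For the bias term, I would use $V^{\pi^k}_1=\sum_h\sum_{s,a}d^{\pi^k}_h(s,a)r_h(s,a)$, so that by $r_h\in[0,1]$ and the triangle inequality $|\bar V^{\pi^k}_1-V^{\pi^k}_1|\le\sum_h\sum_{s,a}\left|\tilde\mu^*_h(s,a)\frac{\hat w^{\pi^k}_h(s,a)}{\hat\mu^*_h(s,a)}-d^{\pi^k}_h(s,a)\right|$. The high-probability form of Lemma~\ref{le:ratio_estimation_results} bounds each step's $\ell_1$ error by $\epsilon/(4H)$, so summing over the $H$ steps yields $|\bar V^{\pi^k}_1-V^{\pi^k}_1|\le\epsilon/4$.

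For the variance term, I would bound $\EE[Y_i^2]\le H\sum_h\EE_{\tilde\mu^*_h}\left[(\hat w^{\pi^k}_h/\hat\mu^*_h)^2\right]$ by Cauchy–Schwarz and $r_h\le1$, then invoke the \IDES feasible-set constraint $\hat w^{\pi^k}_h\le2\hat d^{\pi^k}_h$ together with the coarse relation $\tilde\mu^*_h\le\frac{4}{3}\hat\mu^*_h$ to get $\EE[Y_i^2]=O\!\left(H\sum_h\sum_{s,a}\frac{(\hat d^{\pi^k}_h(s,a))^2}{\hat\mu^*_h(s,a)}\right)$. Bernstein's inequality (exactly as in Appendix~\ref{proof:opt_dist}) then gives $|\hat V^{\pi^k}_1-\bar V^{\pi^k}_1|\le\sqrt{2\mathbb{V}[Y_i]\log(1/\delta)/n}+\tfrac{2M_k\log(1/\delta)}{3n}$; choosing $n$ of the stated order drives the leading term below $\epsilon/4$, while the range term scales as $1/\epsilon$ and is absorbed into $\tilde{O}$. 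A union bound over $k\in[K]$ (the $\log(K/\delta)$ hidden inside $\tilde{O}$) then yields the guarantee simultaneously for all target policies.

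Note that this analysis only needs $n=\tilde{O}\!\left(\frac{H}{\epsilon^2}\sum_h\max_k\sum_{s,a}(\hat d^{\pi^k}_h)^2/\hat\mu^*_h\right)$, which sits comfortably inside the stated $H^2$ budget, so the extra factor of $H$ merely leaves slack. The step I expect to be most delicate is the bias term: its smallness rests entirely on the high-probability upgrade of Lemma~\ref{le:ratio_estimation_results}, i.e.\ on the MoM-plus-data-splitting argument that converts the in-expectation ratio guarantee into one holding uniformly over all $h$ with probability $1-\delta$, and on the independence between $\hat w^{\pi^k}_h$ and the evaluation trajectories that legitimizes treating the ratios as fixed when computing $\bar V^{\pi^k}_1$. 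Once those boundedness and independence relations are in place, the variance bound and the Bernstein concentration are routine.
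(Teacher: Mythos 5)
Your proof takes essentially the same route as the paper's: the same decomposition $|\hat V^{\pi^k}_1 - V^{\pi^k}_1|\le|\hat V^{\pi^k}_1-\EE[\hat V^{\pi^k}_1]|+|\EE[\hat V^{\pi^k}_1]-V^{\pi^k}_1|$, with Bernstein's inequality controlling the deviation term and the high-probability (MoM/data-splitting) form of Lemma~\ref{le:ratio_estimation_results} controlling the bias term. Your write-up merely fills in details the paper leaves terse (the explicit variance bound via the feasible-set constraint $\hat w^{\pi^k}_h\le 2\hat d^{\pi^k}_h$ and $\tilde\mu^*_h\le\frac{4}{3}\hat\mu^*_h$, the conditioning that legitimizes treating the ratios as fixed, and the union bound over $k$), and your observation that only an $\tilde{O}(H/\epsilon^2)$ rate is truly needed is consistent with the paper's own variance calculation in Appendix~\ref{proof:opt_dist}, the stated $H^2$ simply leaving slack.
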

\begin{proof}
    First, we can decompose the error $|\hat V^{\pi^k}_1 - V^{\pi^k}_1| = |\hat V^{\pi^k}_1 - \EE[\hat V^{\pi^k}_1] + \EE[\hat V^{\pi^k}_1] - V^{\pi^k}_1| \le |\hat V^{\pi^k}_1 - \EE[\hat V^{\pi^k}_1]| + |\EE[\hat V^{\pi^k}_1] - V^{\pi^k}_1|$. Then, by $\mathrm{Bernstein}$'s inequality, with samples $n=\tilde{O}\left(\frac{H^2}{\epsilon^2}\sum_{h=1}^H\max_{k\in[K]}\sum_{s,a}\frac{(\hat d^{\pi^k}_h(s,a))^2}{\hat \mu_h(s,a)}\right)$, we have, $|\hat V^{\pi^k}_1 - \EE[\hat V^{\pi^k}_1]|\le \frac{\epsilon}{4}$.
    Based Lemma~\ref{le:ratio_estimation_results}, we have, $|\EE[\hat V^{\pi^k}_1] - V^{\pi^k}_1|\le\frac{\epsilon}{4}$.
\end{proof}

Remember that in Section~\ref{sec:rough_estimation}, we ignore those states and actions with low estimated visitation distribution for each target policy which induce at most $\frac{\epsilon}{2}$ error. Combined with Lemma~\ref{le:final_estimator}, our estimator $\hat V^{\pi^k}_1$ finally achieves that with probability at least $1-\delta$, $|\hat V^{\pi^k}_1-V^{\pi^k}_1|\le\epsilon, k\in[K]$.

And for sample complexity, in our algorithm, we need to sample data in three procedures. First, for the coarse estimation of the visitation distribution, we need $\tilde{O}(\frac{1}{\epsilon})$ samples. Second, to estimate the importance-weighting ratio, we need samples $\tilde{O}\left(\frac{H^4}{\epsilon^2}\sum_{h=1}^H\max_{k\in[K]}\sum_{s,a}\frac{(d_h^{\pi^k}(s,a))^2}{\mu^*_h(s,a)}\right)$. Last, to build the final performance estimator (\ref{eq:final_estimator}), we need samples $\tilde{O}\left(\frac{H^2}{\epsilon^2}\sum_{h=1}^H\max_{k\in[K]}\sum_{s,a}\frac{(\hat d^{\pi^k}_h(s,a))^2}{\hat \mu_h(s,a)}\right)$. Therefore, the total trajectories needed,
\begin{align*}
    n = \tilde{O}\left(\frac{H^4}{\epsilon^2}\sum_{h=1}^H\max_{k\in[K]}\sum_{s,a}\frac{(d_h^{\pi^k}(s,a))^2}{\mu^*_h(s,a)}\right).
\end{align*}
Moreover, notice that,
\begin{align}
\label{eq:appro_to_exact}
    \max_{k\in[K]}\sum_{s,a}\frac{(\hat d^{\pi^k}_h(s,a))^2}{\hat \mu_h(s,a)} \le \max_{k\in[K]}\sum_{s,a}\frac{(\hat d^{\pi^k}_h(s,a))^2}{ \mu^*_h(s,a)} \le \frac{25}{16}\sum_{s,a}\frac{(d^\pi_h(s,a))^2}{ \mu^*_h(s,a)},
\end{align}
where $\mu^*_h$ is the optimal solution of the optimization problem (\ref{eq:true_opt_problem}), the first inequality holds due to $\hat \mu_h$ is the minimum of the approximate optimization problem (\ref{eq:appro_opt_problem}) and the second inequality holds due to $\hat d^\pi_h(s,a)\le \frac{5}{4}d^\pi_h(s,a)$. Based on (\ref{eq:appro_to_exact}), we can substitute the coarse distribution estimator in the sample complexity bound by the exact one,
\begin{align*}
    n = \tilde{O}\left(\frac{H^4}{\epsilon^2}\sum_{h=1}^H\max_{k\in[K]}\sum_{s,a}\frac{(d_h^{\pi^k}(s,a))^2}{\mu^*_h(s,a)}\right).
\end{align*}

\subsection{Proof of Corollary~\ref{corollary:upper_bound_caesar}}

Let the sampling distribution $\mu'_h$ be $\frac{1}{SA}\sum_{s,a}d^{\pi_{s,a}}_h$, where $\pi_{s,a}=\arg\max_{k\in [K]} d^{\pi^k}_h(s,a)$. Since $\mu^*_h$ is the optimal solution and $\mu'_h$ is a feasible solution, we have $\forall h\in [H]$, 
\begin{align*}
    \max_{k\in[K]}\sum_{s,a}\frac{(d_h^{\pi^k}(s,a))^2}{\mu^*_h(s,a)}&\le\max_{k\in[K]}\sum_{s,a}\frac{(d_h^{\pi^k}(s,a))^2}{\mu'_h(s,a)}\\
    &\le SA.
\end{align*}
Notice that there is a logarithm term $\log(K)$ hidden in $\tilde{O}$ notation. Remember $K$ is the number of target policies. If they are deterministic, then $K$ is bounded by $A^{SH}$ which leads to $\log(K)\le SH\log(A)$. 

Together, we have the sample complexity is bounded by $\tilde{O}\left(\frac{\text{poly}(H)S^2A}{\epsilon^2}\right)$.

\newpage
\section{Lower order coarse estimation}
\label{sec:coarse_est_all_policy}
\begin{algorithm}[htb]
   \caption{\textbf{M}ulti-policy \textbf{A}pproximation via \textbf{R}atio-based \textbf{C}oarse \textbf{H}andling ($\mathrm{MARCH}$)}
   \label{alg:coarse_dis_est}
\begin{algorithmic}
   \STATE {\bfseries Input:} Horizon $H$, accuracy $\epsilon$, policy $\pi$.
   \STATE Coarsely estimate $d_1$ such that $dist^\beta(\hat d_1, d_1)\le \epsilon$, where $\beta=\frac{1}{H}$.
   \FOR{$h=1$ {\bfseries to} $H-1$}
   \STATE 1. Coarsely estimate $\mu_h$ such that $|\hat \mu_h(s,a)-\mu_h(s,a)|\le \max\{\epsilon', c\cdot \mu_h(s,a)\}$, where $\epsilon'=\frac{\epsilon}{2H^2S^2A^2}$ and $c=\frac{\beta}{2}$.
   \STATE 2. Sample $\{s^i_h, a^i_h, s_{h+1}^i\}_{i=1}^n$ from $\mu_h$.
   \STATE 3. Estimate $d_{h+1}(s,a)$ by $\hat d_{h+1}(s,a)=\frac{1}{n}\sum_{i=1}^n \mathbb{I}(s_{h+1}^i=s)\hat w_h(s_h^i,a_h^i)$.
   \ENDFOR
   \STATE {\bfseries Output: $\{\hat d_h\}_{h=1}^H$.}
\end{algorithmic}
\end{algorithm}
\begin{comment}
   \REPEAT
   \STATE Initialize $noChange = true$.
   \FOR{$i=1$ {\bfseries to} $m-1$}
   \IF{$x_i > x_{i+1}$}
   \STATE Swap $x_i$ and $x_{i+1}$
   \STATE $noChange = false$
   \ENDIF
   \ENDFOR
   \UNTIL{$noChange$ is $true$}
\end{comment}

In this section, we first provide our algorithm \MARCH for coarse estimation of all the deterministic policies and then conduct an analysis on its sample complexity.

\MARCH is based on the algorithm $\mathrm{EULER}$ proposed by \citet{zanette2019tighter}.
\begin{lemma}[Theorem 3.3 in \citet{jin2020reward}]
\label{le:euler_mu}
    Based on $\mathrm{EULER}$, with sample complexity $\tilde{O}(\frac{poly(H,S,A)}{\epsilon})$, we can construct a policy cover which generates a dataset with the distribution $\mu$ such that, with probability $1-\delta$, if $d^{max}_h(s)\ge\frac{\epsilon}{SA}$, then,
    \begin{equation}
    \label{eq:lower_bounded_mu}
          \mu_h(s,a)\ge\frac{d^{max}_h(s,a)}{2HSA},
    \end{equation}
    where $d^{max}_h(s)=\max_{\pi}d^{\pi}_h(s), d^{max}_h(s,a)=\max_{\pi} d^\pi_h(s,a)$.
\end{lemma}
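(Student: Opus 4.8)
The plan is to reduce the policy-cover construction to a collection of single-reward maximization problems and invoke the variance-aware regret guarantee of $\mathrm{EULER}$ on each. For every target pair $(s,h)$ define the indicator reward $r^{(s,h)}_{h'}(x,a)=\mathbb{I}[h'=h,\,x=s]$; the value of any policy under this reward is exactly $d^\pi_h(s)$, so its optimal value is $V^*_{s,h}=d^{max}_h(s)$. It therefore suffices, for each $(s,h)$ with $d^{max}_h(s)\ge \epsilon/(SA)$, to produce an exploration policy $\phi^h_s$ achieving a constant fraction of the optimum, say $d^{\phi^h_s}_h(s)\ge \tfrac12 d^{max}_h(s)$. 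I would then perturb $\phi^h_s$ to take uniform actions at step $h$ (call it $\tilde\phi^h_s$); since actions at step $h$ do not affect the state-visitation at step $h$, this preserves $d^{\tilde\phi^h_s}_h(s)=d^{\phi^h_s}_h(s)$ while spreading mass across actions, giving $d^{\tilde\phi^h_s}_h(s,a)=\tfrac1A d^{\phi^h_s}_h(s)\ge \tfrac{1}{2A}d^{max}_h(s,a)$ for every $a$. Finally I would define the cover $\mu$ as the uniform mixture over the $SH$ policies $\{\tilde\phi^{h'}_{s'}\}$, so that $\mu_h(s,a)=\frac1{SH}\sum_{s',h'} d^{\tilde\phi^{h'}_{s'}}_h(s,a)\ge \frac1{SH}d^{\tilde\phi^h_s}_h(s,a)\ge \frac{d^{max}_h(s,a)}{2HSA}$, which is the claimed bound.

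The quantitative heart of the argument is obtaining the constant-fraction guarantee at the advertised $\tilde O(\mathrm{poly}(H,S,A)/\epsilon)$ rate rather than the naive $1/\epsilon^2$. Here I would exploit that the return of the indicator reward is at most $1$ per episode and has mean $V^*_{s,h}=d^{max}_h(s)$, so its total conditional variance is $O(\mathrm{poly}(H)\,V^*_{s,h})$; $\mathrm{EULER}$'s variance-dependent (Bernstein-type) regret bound then gives, after $K$ episodes, a regret of order $\tilde O(\sqrt{V^*_{s,h}\,\mathrm{poly}(H,S,A)\,K}+\mathrm{poly}(H,S,A))$. By online-to-batch, the uniform mixture of the $K$ policies executed during the run has value $\bar V\ge V^*_{s,h}-\mathrm{Regret}/K$, so taking $\phi^h_s$ to be this mixture it suffices to drive $\sqrt{V^*_{s,h}\,\mathrm{poly}/K}\le \tfrac14 V^*_{s,h}$, i.e. $K\ge 16\,\mathrm{poly}(H,S,A)/V^*_{s,h}$. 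On the relevant states $V^*_{s,h}=d^{max}_h(s)\ge \epsilon/(SA)$, so $K=\tilde O(\mathrm{poly}(H,S,A)/\epsilon)$ episodes suffice; this is precisely where the multiplicative (coarse) target, as opposed to an additive $\epsilon$ target, buys the improved $1/\epsilon$ dependence, mirroring Lemma~\ref{le:ber_rough_dist}.

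To assemble the full guarantee I would run the above subroutine for each of the $SH$ targets (so the total sample cost is $SH\cdot \tilde O(\mathrm{poly}/\epsilon)=\tilde O(\mathrm{poly}(H,S,A)/\epsilon)$) and apply $\mathrm{EULER}$'s high-probability regret bound with confidence $\delta/(SH)$, taking a union bound over all targets so that every constant-fraction guarantee holds simultaneously with probability at least $1-\delta$. Combining these events with the mixture lower bound of the first paragraph yields $\mu_h(s,a)\ge d^{max}_h(s,a)/(2HSA)$ for all $(s,h)$ with $d^{max}_h(s)\ge\epsilon/(SA)$.

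I expect the main obstacle to be the variance-aware step: justifying that $\mathrm{EULER}$'s problem-dependent regret genuinely scales with $\sqrt{V^*_{s,h}}$ for these sparse indicator rewards (controlling the law-of-total-variance term and the lower-order additive regret so that neither reintroduces a $1/\epsilon^2$ factor nor an extra blow-up in $H$), and verifying that the online-to-batch mixture inherits the visitation guarantee. A secondary subtlety is the action-perturbation step: one must check that forcing uniform actions only at the target step $h$ leaves $d_h(s)$ unchanged and produces a valid nonstationary policy, and that mixing $SH$ such policies with weights $1/(SH)$ is the correct normalization to land exactly on the $2HSA$ denominator.
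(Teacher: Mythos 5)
Your proposal is correct and is essentially the same argument as the paper's source for this lemma: the paper offers no proof of its own, simply importing Theorem 3.3 of \citet{jin2020reward}, whose proof proceeds exactly as you describe --- indicator rewards $\mathbb{I}[h'=h, x=s]$ for each pair $(s,h)$, EULER's variance-dependent regret (scaling as $\sqrt{V^*_{s,h}K}$) to obtain the $1/\epsilon$ rate on states with $d^{max}_h(s)\ge \epsilon/(SA)$, uniform actions at the target step $h$, and a uniform mixture over the $SH$ resulting policies, which yields precisely the $\frac{1}{2HSA}$ factor. The two subtleties you flag (the $\sqrt{V^*}$ scaling of EULER's regret for sparse indicator rewards, and the online-to-batch mixture inheriting the visitation guarantee) are exactly the content of the supporting lemmas in that reference and hold as you anticipate.
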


With this dataset, we estimate the visitation distribution of deterministic policies by step-to-step importance weighting,
\begin{align*}
    \hat d_{h+1}(s,a)=\frac{1}{n}\sum_{i=1}^n \mathbb{I}(s_{h+1}^i=s)\hat w_h(s_h^i,a_h^i),
\end{align*}
where $\{s_h^i,a_h^i,s_{h+1}^i\}_{i=1}^n$ are sampled from $\mu$ and $\hat w_h(s,a)=\frac{\hat d_h(s,a)}{\hat \mu_h(s,a)}$.

We state that \MARCH can coarsely estimate the visitation distributions of all the deterministic policies by just paying a lower-order sample complexity which is formalized in the following theorem.
\begin{theorem}
\label{theo:coarse_est_all_policy}
    Implement Algorithm~\ref{alg:coarse_dis_est} with the number of trajectories $n=\tilde{O}(\frac{poly(H,S,A)}{\epsilon})$, with probability at least $1-\delta$, it holds that for any deterministic policy $\pi$,
    \begin{align*}
        |\hat d^\pi_h(s,a), d^\pi_h(s,a)|\le \max\{\epsilon, \frac{d^\pi_h(s,a)}{4}\},\ \forall s\in \mathcal{S},a\in\mathcal{A},h\in [H],
    \end{align*}
    where $\hat d^\pi$ is the distribution estimator.
\end{theorem}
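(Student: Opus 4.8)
The plan is to argue by induction on the step $h$, propagating a coarse (relative) error guarantee forward through the importance-weighted recursion, while reducing the uniform-over-all-deterministic-policies control to a \emph{polynomial} union bound. The base case is immediate: $d_1^\pi(s,a) = \nu(s)\mathbb{I}(a = \pi(s))$, so the state marginal $d_1 = \nu$ is common to every policy, and a single coarse estimate of $\nu$ (via Lemma~\ref{le:ber_rough_dist}) achieves $dist^\beta(\hat d_1, d_1) \le \epsilon$ with $\beta = 1/H$, uniformly over all $\pi$ at once.

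For the inductive step I would first rewrite the estimator as a linear functional of $\pi$-independent empirical transition frequencies. Writing $\hat c_h(s',a',s) = \frac{1}{n}\sum_{i=1}^n \mathbb{I}(s_h^i = s', a_h^i = a', s_{h+1}^i = s)$, the \MARCH update becomes
\[
\hat d_{h+1}^\pi(s) = \sum_{s',a'} \hat c_h(s',a',s)\, \frac{\hat d_h^\pi(s',a')}{\hat\mu_h(s',a')}.
\]
The decisive observation is that $\hat c_h$ depends only on the sampling distribution $\mu_h$ and the data, \emph{not} on $\pi$; there are only $S^2 A$ triples $(s',a',s)$, so a union bound over them is polynomial. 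Each $\hat c_h(s',a',s)$ is the empirical mean of a Bernoulli with parameter $\mu_h(s',a')P_h(s \mid s',a')$, so Lemma~\ref{le:ber_rough_dist} yields, with $\tilde O(1/\epsilon')$ samples, a coarse estimate obeying $|\hat c_h(s',a',s) - \mu_h(s',a')P_h(s\mid s',a')| \le \max\{\epsilon', \frac{\beta}{2} \mu_h(s',a')P_h(s\mid s',a')\}$ simultaneously for all triples. Triples whose parameter falls below the $\epsilon'$ floor are discarded with negligible error exactly as in Lemma~\ref{lemma:eff_sa}. Since $\pi$ enters only through $\hat d_h^\pi$, controlled by induction, this single polynomial concentration event governs every deterministic policy at once.

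The remaining work is bookkeeping of error composition. Because $d_{h+1}^\pi(s) = \sum_{s',a'} \mu_h(s',a')P_h(s\mid s',a')\frac{d_h^\pi(s',a')}{\mu_h(s',a')}$ is a sum of nonnegative terms, and relative (coarse) estimates of nonnegative quantities are preserved under products and sums, the factor-by-factor guarantees on $\hat c_h$, on $\hat\mu_h$ (the coarse estimate of $\mu_h$ from the EULER cover of Lemma~\ref{le:euler_mu}), and on $\hat d_h^\pi$ combine into a relative error on $\hat d_{h+1}^\pi(s)$ that grows by a factor $(1 + O(\beta))$ per step. The coverage bound $\mu_h(s,a) \ge d^{\max}_h(s,a)/(2HSA) \ge d_h^\pi(s,a)/(2HSA)$ keeps the importance weights $\hat d_h^\pi/\hat\mu_h = O(HSA)$ bounded for every $\pi$, which is what makes the Bernoulli estimates usable and the variance polynomial. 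Choosing $\beta = 1/H$ ensures the compounded multiplicative blow-up over $H$ steps is $(1+1/H)^H = O(1)$, small enough to land inside the target tolerance $d_h^\pi(s,a)/4$; the additive floors, with $\epsilon' = \epsilon/(2H^2S^2A^2)$, sum over the $\le S^2A^2$ terms and $H$ steps to at most $\epsilon$. This delivers the stated guarantee at total cost $\tilde O(\mathrm{poly}(H,S,A)/\epsilon)$.

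The main obstacle is precisely the uniform control over exponentially many deterministic policies: a naive union bound is fatal. The whole argument hinges on the displayed identity — that $\hat d_{h+1}^\pi$ is a linear functional of the $\pi$-independent empirical frequencies $\hat c_h$ — which converts ``uniform over $\pi$'' into ``uniform over $S^2A$ transition triples.'' The second delicate point is engineering the $\beta$-distance so that it is simultaneously preserved under the importance-weighting operator and compounds only multiplicatively by $(1+\beta)$ per step, so that the horizon-dependent choice $\beta = 1/H$ tames the accumulation to a constant. I expect most of the technical effort to go into making the product-and-sum error composition under the $\beta$-distance precise, and into verifying that truncating low-probability triples does not break the multiplicative guarantees.
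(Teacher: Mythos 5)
Your proposal is correct in outline and arrives at the same induction with the same core ingredients (the EULER coverage bound of Lemma~\ref{le:euler_mu}, importance weights bounded by $O(HSA)$, multiplicative error bookkeeping with compounding rate $\beta=1/H$), but it resolves the central difficulty --- uniformity over exponentially many deterministic policies --- by a genuinely different device. The paper works per policy: it decomposes $dist^{(1+\beta)^{h+2}}(\hat d_{h+1}, d_{h+1})$ via the almost-triangle inequality (property 3 of Lemma~\ref{le:dist_properties}) into a stochastic term $(A)=dist^{1+\beta}(\hat d_{h+1},\overline d_{h+1})(1+\beta)^{h+1}$, controlled by Bernstein's inequality applied to the importance-weighted estimator around its conditional mean $\overline d_{h+1}$, plus a deterministic propagation term $(B)$, and then asserts that "the analysis holds for any policy" --- implicitly a union bound over the $A^{SH}$ deterministic policies. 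Your route instead factors the estimator through the policy-independent empirical frequencies $\hat c_h(s',a',s)$; the displayed identity $\hat d_{h+1}^\pi(s)=\sum_{s',a'}\hat c_h(s',a',s)\,\hat d_h^\pi(s',a')/\hat\mu_h(s',a')$ is correct, so the only objects requiring concentration are $S^2A$ Bernoulli means per step, and every policy's estimator becomes a deterministic nonnegative linear functional of them. What this buys is a manifestly policy-free high-probability event, with no conditioning-on-the-past argument needed; what it costs is that you must establish relative accuracy $\beta/2=1/(2H)$ for the $\hat c_h$'s (an extra polynomial factor in $H$, harmless) and re-derive the product/quotient composition rules for $\hat c_h/\hat\mu_h$ under the $\beta$-distance, which is essentially the content the paper isolates in Lemmas~\ref{le:dist_properties} and~\ref{le:dist_to_approx}.

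Two corrections to your framing. First, the claim that "a naive union bound is fatal" is wrong: over $A^{SH}$ deterministic policies a union bound costs a factor $\log(A^{SH})=SH\log A$ in the sample complexity, which is polynomial and is exactly what the paper's per-policy Bernstein argument tacitly relies on (the weights $\hat w_h^\pi$ are measurable with respect to data from earlier steps, so Bernstein applies conditionally to each policy and the union bound is benign). Your device is therefore a genuine simplification, not a necessity. Second, the coverage guarantee of Lemma~\ref{le:euler_mu} holds only for pairs $(s,h)$ with $d^{max}_h(s)\ge \epsilon/(SA)$; the paper handles the remaining states separately by noting they contribute at most $\epsilon$ of $\beta$-distance per layer. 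Your truncation step is phrased only in terms of the additive floor $\epsilon'$ on $\hat c_h$, but you also need this coverage caveat, since your bounded-weight claim $\hat d_h^\pi/\hat\mu_h=O(HSA)$ --- which your variance and truncation arguments rest on --- is unavailable precisely on those low-visitation states.
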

\begin{proof}
Our analysis is based a notion of distance defined in the following.
\begin{definition}[$\beta-$distance]
    For $x, y\ge 0$, we define the $\beta-$distance as,
    \begin{align*}
        dist^{\beta}(x,y)=\min_{\alpha\in[\frac{1}{\beta}, \beta]}|\alpha x-y|.
    \end{align*}
    Correspondingly, for $x,y\in \mathrm{R}^n$,
    \begin{align*}
        dist^{\beta}(x,y)=\sum_{i=1}^n dist^{\beta}(x_i,y_i).
    \end{align*}
\end{definition}
Based on its definition, we show in the following lemma that $\beta-$distance has some properties.
\begin{lemma}
\label{le:dist_properties}
    The $\beta-$distance possesses the following properties for ($x,y,z,\gamma\ge 0$):
    \begin{align}
        &1.\ dist^{\beta}(\gamma x,\gamma y)=\gamma dist^{\beta}(x, y); \label{le:dist_const_eq}\\ 
        &2.\ dist^{\beta}(x_1+x_2, y_1+y_2)\le dist^{\beta}(x_1, y_1) + dist^{\beta}(x_2, y_2); \label{le:dist_sum_ieq} \\
        &3.\ dist^{\beta_1\cdot\beta_2}(x,z)\le dist^{\beta_1}(x,y)\cdot\beta_2+dist^{\beta_2}(y,z).  \label{le:almost_triangle}
    \end{align}
\end{lemma}
\begin{proof}
    See Appendix~\ref{sec:proof_dist_properties}.
\end{proof}
The following lemma shows that if we can control the $\beta-$distance between $\hat x,x$, then  we can show $\hat x$ achieves the coarse estimation of $x$.
\begin{lemma}
\label{le:dist_to_approx}
    Suppose $dist^{1+\beta}(x,y)\le \epsilon$, then it holds that,
    \begin{align*}
        |x-y|\le \beta y + (1+\frac{\beta}{1+\beta})\epsilon \le 2\max\{(1+\frac{\beta}{1+\beta})\epsilon, \beta y\}.
    \end{align*}
\end{lemma}
\begin{proof}
    See Appendix~\ref{sec:proof_dist_to_approx}.
\end{proof}
The logic of the analysis is to show the $\beta-$distance between $\hat d_h$ and $d_h$ can be bounded at each layer by induction. Then by Lemma~\ref{le:dist_to_approx}, we show $\{\hat d_h\}_{h=1}^H$ achieves coarse estimation.

Suppose at layer $h$, we have $\hat d_h$ such that $dist^{(1+\beta)^{h}}(\hat d_h, d_h)<\epsilon_h$ where $\beta=\frac{1}{H}$. For notation simplicity, we omit the superscript $\pi$. The analysis holds for any policy.

We use importance weighting to estimate $\hat d_{h+1}$,
\begin{align*}
    \hat d_{h+1}(s,a)=\frac{1}{n}\sum_{i=1}^n \mathbb{I}(s_{h+1}^i=s)\pi(a|s)\hat w_h(s_h^i,a_h^i),
\end{align*}
where $\hat w_h(s,a)=\frac{\hat d_h(s,a)}{\hat \mu_h(s,a)}$.

We also denote,
\begin{align*}
    \overline d_{h+1}(s,a)=\EE_{(s_h,a_h,s_{h+1})\sim\mu_h}[\mathbb{I}(s_{h+1}=s)\hat w_h(s_h,a_h)].
\end{align*}
By (\ref{le:almost_triangle}) in Lemma~\ref{le:dist_properties}, we have,
\begin{equation}
\label{eq:triangle_ineq}
    dist^{(1+\beta)^{h+2}}(\hat d_{h+1}, d_{h+1})\le \underbrace{dist^{(1+\beta)}(\hat d_{h+1}, \overline d_{h+1})(1+\beta)^{h+1}}_{A}+\underbrace{dist^{(1+\beta)^{h+1}}(\overline d_{h+1}, d_{h+1})}_{B}.
\end{equation}

Next, we show how we can bound these two terms $(A)$ and $(B)$. Note that for $(s,h)$ where $d^{max}_h(s)<\frac{\epsilon}{SA}$, the induced $\beta-$distance error is at most $\epsilon$. Therefore, we can just discuss state-action pairs which satisfy Lemma~\ref{le:euler_mu}.

\paragraph{Bound of $(A)$} We first show the following lemma tells us that the importance weighting is upper-bounded.
\begin{lemma}
\label{le:bound_importance_weighting}
    Based on the definition of $\mu$, the importance weighting is upper bounded,
    \begin{align*}
        w_h(s,a)=\frac{d_h(s,a)}{\mu_h(s,a)}\le 2HSA\frac{d_h(s,a)}{d^{max}_h(s,a)}\le 2HSA.
    \end{align*}
    Hence, we can clip $\hat w_h(s,a)$ at $2HSA$ such that $\hat w_h(s,a)\le 2HSA$.
\end{lemma}

Let's define the random variable $Z_{h+1}(s,a) = \mathbb{I}(s_{h+1}=s)\hat w_h(s_h,a_h)$, then $\hat d_{h+1}(s,a)=\frac{1}{n}\sum_{i=1}^nZ^i_{h+1}(s,a)$. Since $\hat w_h(s_h,a_h)$ is bounded by Lemma~\ref{le:bound_importance_weighting}, we have,
\begin{align*}
    \mathbb{V}[Z_{h+1}(s,a)]\le \EE[Z_{h+1}(s,a)^2]\le 2HSA\EE[Z_{h+1}(s,a)].
\end{align*}

By $\mathrm{Berstein's}$ inequality, we have with probability at least $1-\delta$,
\begin{align*}
    |\hat d_{h+1}(s,a)-\EE[\hat d_{h+1}(s,a)]|&\le \sqrt{\frac{2\mathbb{V}[Z_{h+1}(s,a)]\log(1/\delta)}{n}}+\frac{2HSA\log(1/\delta)}{3n} \\
    &\le \sqrt{\frac{4HSA\EE[\hat d_{h+1}(s,a)]\log(1/\delta)}{n}}+\frac{2HSA\log(1/\delta)}{3n},
\end{align*}
to achieve the estimation accuracy $|\hat d_{h+1}(s,a)-\EE[\hat d_{h+1}(s,a)]|\le\max\{\epsilon, c\cdot \EE[\hat d_{h+1}(s,a)]\}$, we need samples $n=\tilde{O}\left(\frac{HSA}{c\cdot\epsilon}\right)$.

Based on the above analysis, we can achieve,
\begin{align*}
    |\hat d_{h+1}(s,a), \overline d_{h+1}(s,a)|\le\max\{\epsilon', \frac{\beta}{2}\overline d_{h+1}(s,a)\}
\end{align*}
at the cost of samples $\tilde{O}\left(\frac{HSA}{\beta\epsilon'}\right)$.

We now show $dist^{1+\beta}(\hat d_{h+1}, \overline d_{h+1})\le SA\epsilon'$. We discuss it in two cases,
\begin{align}
    &1.\ |\hat d_{h+1}(s,a), \overline d_{h+1}(s,a)|\le \epsilon' \label{eq:d_le_epsilon}\\
    &2.\ |\hat d_{h+1}(s,a), \overline d_{h+1}(s,a)|\le \frac{\beta}{2}\overline d_{h+1}(s,a). \label{eq:d_le_d}
\end{align}
For those $(s,a)$ which satisfies (\ref{eq:d_le_d}), since $[1-\frac{\beta}{2}, 1+\frac{\beta}{2}]\in [\frac{1}{1+\beta}, 1+\beta]$, by the definition of $\beta-$distance, we have,
\begin{align}
    dist^{1+\beta}(\hat d_{h+1}(s,a), \overline d_{h+1}(s,a))=0.
\end{align} 
For other $(s,a)$ which satisfies (\ref{eq:d_le_epsilon}), we have,
\begin{align*}
    dist^{1+\beta}(\hat d_{h+1}(s,a), \overline d_{h+1}(s,a))\le |\hat d_{h+1}(s,a), \overline d_{h+1}(s,a)|\le \epsilon'.
\end{align*}
Since there are at most $SA$ state-action pairs, the error in the second case is at most $SA\epsilon'$. Combine these two cases, we have,
\begin{align*}
    dist^{1+\beta}(\hat d_{h+1}, \overline d_{h+1})\le SA\epsilon'.
\end{align*}
By setting $\epsilon=\frac{\epsilon'}{SA}$, we have,
\begin{equation}
\label{eq:bound_1}
    (A)= dist^{1+\beta}(\hat d_{h+1}, \overline d_{h+1})(1+\beta)^{h+1}\le (1+\beta)^{h+1}\epsilon,
\end{equation}
and the sample complexity is $\tilde{O}\left(\frac{(HSA)^2}{\epsilon}\right)$.

\paragraph{Bound of $(B)$} Next we show how to bound term $(B)$. Denote $\mu_h(s,a)\frac{\hat d_h(s,a)}{\hat \mu_h(s,a)}$ by $\tilde{d}_h(s,a)$, we have,
\begin{align*}
    (B)&=dist^{(1+\beta)^{h+1}}(\overline d_{h+1}, d_{h+1}) \\
    &=\sum_{s,a} dist^{(1+\beta)^{h+1}}(\overline d_{h+1}(s,a), d_{h+1}(s,a)) \\
    &=\sum_{s,a} dist^{(1+\beta)^{h+1}}(\sum_{s',a'}P^\pi_h(s,a|s',a')\tilde{d}_h(s',a'), \sum_{s',a'}P^\pi_h(s,a|s',a')d_h(s',a')) \\
    &\le \sum_{s,a}\sum_{s',a'} dist^{(1+\beta)^{h+1}}(P^\pi_h(s,a|s',a')\tilde{d}_h(s',a'), P^\pi_h(s,a|s',a')d_h(s',a')) \\
    &=\sum_{s,a}\sum_{s',a'}P^\pi_h(s,a|s',a') dist^{(1+\beta)^{h+1}}(\tilde{d}_h(s',a'), d_h(s',a')) \\
    &=dist^{(1+\beta)^{h+1}}(\tilde{d}_h, d_h),
\end{align*}
where the first two equality holds by definition, the inequality holds by $(\ref{le:dist_sum_ieq})$ in Lemma~\ref{le:dist_properties}, the third equality holds by $(\ref{le:dist_const_eq})$ in Lemma~\ref{le:dist_properties} and the last one holds by $\sum_{s,a}P^\pi_h(s,a|s',a')=1$.

Now we analyse $dist^{(1+\beta)^{h+1}}(\tilde{d}_h,d_h)$.
\begin{align*}
    dist^{(1+\beta)^{h+1}}(\tilde{d}_h,d_h)=\sum_{s,a}\mu_h(s,a) dist^{(1+\beta)^{h+1}}(\frac{\hat d_h(s,a)}{\hat \mu_h(s,a)}, \frac{d_h(s,a)}{\mu_h(s,a)}).
\end{align*}
By coarse estimation, we have $|\hat \mu_h(s,a)-\mu_h(s,a)|\le\max\{\epsilon', c\cdot \mu_h(s,a)\}$. Similarly, we discuss it in two cases,
\begin{align}
    &1.\ |\hat \mu_{h}(s,a), \mu_{h}(s,a)|\le \epsilon', \label{eq:mu_le_epsilon}\\
    &2.\ |\hat \mu_{h}(s,a), \mu_{h}(s,a)|\le c\cdot \mu_{h}(s,a). \label{eq:mu_le_d}
\end{align}
For those $(s,a)$ which satisfies (\ref{eq:mu_le_epsilon}), by Lemma~\ref{le:bound_importance_weighting}, we have,
\begin{align*}
    dist^{(1+\beta)^{h+1}}(\frac{\hat d_h(s,a)}{\hat \mu_h(s,a)}, \frac{d_h(s,a)}{\mu_h(s,a)})\le |\frac{\hat d_h(s,a)}{\hat \mu_h(s,a)}-\frac{d_h(s,a)}{\mu_h(s,a)}|\le 2HSA.
\end{align*}
Hence, we have,
\begin{align*}
    dist^{(1+\beta)^{h+1}}(\tilde{d}_h(s,a),d_h(s,a))&=\mu_h(s,a) dist^{(1+\beta)^{h+1}}(\frac{\hat d_h(s,a)}{\hat \mu_h(s,a)}, \frac{d_h(s,a)}{\mu_h(s,a)}) \\
    &\le 2HSA\mu_h(s,a)\le \frac{2HSA\epsilon'}{c},
\end{align*}
where the last inequality holds by $c\cdot \mu_h(s,a)\le \epsilon'$.

Next, For those $(s,a)$ which satisfies (\ref{eq:mu_le_d}), we have,
\begin{align*}
    (1-c)\frac{1}{\hat\mu_h(s,a)}\le\frac{1}{\mu_h(s,a)}\le (1+c)\frac{1}{\hat\mu_h(s,a)}.
\end{align*}
Set $c=\frac{\beta}{2}$, since $[1-\frac{\beta}{2}, 1+\frac{\beta}{2}]\in [\frac{1}{1+\beta}, 1+\beta]$, by definition of $\beta-$distance, we have,
\begin{align}
\label{eq:dist_inverse_mu}
    dist^{(1+\beta)}(\frac{1}{\hat \mu_h(s,a)}, \frac{1}{\mu_h(s,a)})=0.
\end{align}
And we assume by induction that $dist^{(1+\beta)^h}(\hat d_h(s,a), d_h(s,a))\le\epsilon_h$, together with (\ref{eq:dist_inverse_mu}) we have,
\begin{equation}
    dist^{(1+\beta)^{h+1}}(\frac{\hat d_h(s,a)}{\hat \mu_h(s,a)}, \frac{d_h(s,a)}{\mu_h(s,a)})\le\epsilon_h.
\end{equation}
Combine the results of two cases together, we have,
\begin{align*}
    (B) = dist^{(1+\beta)^{h+1}}(\tilde d_h, d_h)\le \epsilon_h+4H^2S^2A^2\epsilon'
\end{align*}
Set $\epsilon'=\frac{\epsilon}{4H^2S^2A^2}$, we have,
\begin{equation}
\label{eq:bound_2}
    (B)\le \epsilon_h+\epsilon
\end{equation}
at the cost of samples $\tilde{O}(\frac{H^3S^2A^2}{\epsilon})$.

Now we are ready to show the bound of $\beta-$distance at layer $h+1$. Plug (\ref{eq:bound_1})(\ref{eq:bound_2}) into (\ref{eq:triangle_ineq}), we have,
\begin{align*}
    dist^{(1+\beta)^{h+2}}(\hat d_{h+1}, d_{h+1})&\le dist^{(1+\beta)}(\hat d_{h+1}, \overline d_{h+1})(1+\beta)^{h+1}+dist^{(1+\beta)^{h+1}}(\overline d_{h+1}, d_{h+1}) \\
    &\le (1+\beta)^{h+1}\epsilon+\epsilon+\epsilon_h.
\end{align*}
Start from $dist^{(1+\beta)}(\hat d_1, d_1)\le\epsilon$, we have, 
\begin{equation}
    dist^{(1+\beta)^{2h-1}}(\hat d_h, d_h)\le h\epsilon+\epsilon \sum_{l=1}^{h-1} (1+\beta)^{2h}.
\end{equation}
Remember that $\beta=\frac{1}{H}$ and due to $(1+\frac{1}{H})^h\le e\ (h\le H)$, we have, 
\begin{equation}
\label{eq:dist_bound}
    dist^{e^2}(\hat d_h, d_h)\le H(1+e^2)\epsilon.
\end{equation}
Recall Lemma~\ref{le:dist_to_approx}, and based on (\ref{eq:dist_bound}), we have,
\begin{align*}
    |\hat d_h(s,a)-d_h(s,a)|\le 2\max\{H(1+e^2)\epsilon, (e^2-1)d_h(s,a)\}.
\end{align*}
By just paying multiplicative constant, we can adjust the constant above to meet our needs, i.e. in Theorem~\ref{theo:coarse_est_all_policy}.
\end{proof}

\newpage
\section{Proof of lemmas in Section~\ref{sec:coarse_est_all_policy}}
\subsection{Proof of Lemma~\ref{le:dist_properties}}
\label{sec:proof_dist_properties}
\begin{proof} 1. The first property is trivial.
    \begin{align*}
        dist^\beta(\gamma x,\gamma y)&=\min_{\alpha\in[\frac{1}{\beta}, \beta]}|\alpha \gamma x - \gamma y| \\
        &=\min_{\alpha\in[\frac{1}{\beta}, \beta]}\gamma |\alpha x - y| \\
        &=\gamma dist^\beta(x, y).
    \end{align*}
    2. Let $\alpha_i$ be such that,
    \begin{align*}
        dist^{1+\beta}(x_i, y_i)=|\alpha_i x_i - y_i|,\ i=1,2.
    \end{align*}
    Notice that $\alpha_3=\alpha_1\cdot\frac{x_1}{x_1+x_2}+\alpha_2\cdot \frac{x_2}{x_1+x_2}$ satisfies $\alpha_3\in[\alpha_1, \alpha_2]\in[\frac{1}{\beta}, \beta]$ and $\alpha_3(x_1+x_2)=\alpha_1 x_1+ \alpha_2 x_2$, therefore,
    \begin{align*}
        dist^\beta(x_1+x_2, y_1+y_2)&=\min_{\alpha\in[\frac{1}{\beta}, \beta]}|\alpha(x_1+x_2)-y_1-y_2| \\
        &\le |\alpha_3(x_1+x_2)-y_1-y_2| \\
        &=|\alpha_1 x_1 + \alpha_2 x_2 -y_1-y_2| \\
        &\le |\alpha_1 x_1 -y_1|+|\alpha_2 x_2 -y_2| \\
        &=dist^\beta(x_1,y_1)+dist^\beta(x_2,y_2).
    \end{align*}
    The first inequality holds due to the definition of $\beta-$distance. The second inequality is the triangle inequality.

    3. We prove the third property through a case-by-case discussion.
    
    (1). $\frac{x}{\beta_1\beta_2} \le z \le \beta_1\beta_2 x$. In this case, the result is trivial, since $dist^{\beta_1\beta_2}(x,z)=0$ and $\beta-$distance is always non-negative.
    
    (2). $\beta_1\beta_2 x < z$. If $y\le x$, then,
    \begin{align*}
        dist^{\beta_1\beta_2}(x,z)\le dist^{\beta_2}(x,z)\le dist^{\beta_2}(y, z).
    \end{align*}
    We are done. 
    
    If $x<y\le \beta_1 x$, then $dist^\beta_1(x,y)=0$, and $z>\beta_1\beta_2 x \ge \beta_2 y$, hence,
    \begin{align*}
        dist^{\beta_2}(y,z)=z-\beta_2 y \ge z-\beta_1\beta_2 x = dist^{\beta_1\beta_2}(x,z).
    \end{align*}
    We are done.
    
    If $y>\beta_1 x, z\in[\frac{y}{\beta_2}, \beta_2 y]$, then,
    \begin{align*}
        dist^{\beta_1}(x,y)\beta_2+dist^{\beta_2}(y,z)&=\beta_2(y-\beta_1 x) \\
        &\ge z - \beta_1\beta_2 x \\
        &=dist^{\beta_1\beta_2}(x,z).
    \end{align*}
    We are done.

    If $y>\beta_1 x, z\notin[\frac{y}{\beta_2}, \beta_2 y]$, then,
    \begin{align*}
        dist^{\beta_1}(x,y)\beta_2+dist^{\beta_2}(y,z)&\ge\beta_2(y-\beta_1 x) \\
        &\ge z - \beta_1\beta_2 x \\
        &=dist^{\beta_1\beta_2}(x,z).
    \end{align*}
    We are done.

    (3). $z<\frac{x}{\beta_1\beta_2}$. A symmetric analysis can be done by replacing $\beta_1, \beta_2$ by $\frac{1}{\beta_1}, \frac{1}{\beta_2}$ which gives the result,
    \begin{align*}
        dist^{\beta_1\beta_2}(x,z)\le dist^{\beta_1}(x,y)\frac{1}{\beta_2} + dist^{\beta_2}(y,z)
    \end{align*} 
    Since $\beta_2\ge 1$ and $dist^{\beta_1}(x,y)\ge 0$, we have $dist^{\beta_1}(x,y)\frac{1}{\beta_2}\le dist^{\beta_1}(x,y)\beta_2$, hence,
    \begin{align*}
        dist^{\beta_1\beta_2}(x,z)\le dist^{\beta_1}(x,y)\beta_2 + dist^{\beta_2}(y,z),
    \end{align*}
    which concludes the proof.
\end{proof}

\subsection{Proof of Lemma~\ref{le:dist_to_approx}}
\label{sec:proof_dist_to_approx}
\begin{proof}
    We prove the lemma through a case-by-case study.

    (1). $x \le y$. If $dist^{1+\beta}(x,y)=0$, then $x(1+\beta)\ge y\ge x$, therefore,
    \begin{align*}
        |x-y|=y-x\le \beta x\le \beta y.
    \end{align*}
    If $dist^{1+\beta}(x,y)>0$, then $dist^{1+\beta}(x,y)=y-(1+\beta)x$, therefore,
    \begin{align*}
        |x-y|=y-x=dist^{1+\beta}(x,y)+\beta x\le \epsilon +\beta x\le \epsilon+\beta y.
    \end{align*}
    (2). $y<x$. If $dist^{1+\beta}(x,y)=0$, then $\frac{x}{1+\beta}\le y<x$, therefore,
    \begin{align*}
        |x-y|=x-y\le x-\frac{x}{1+\beta}\le y(1+\beta)(1-\frac{1}{1+\beta})=\beta y.
    \end{align*}
    If $dist^{1+\beta}(x,y)>0$, then $y<\frac{x}{1+\beta}\le x$ and $dist^{1+\beta}(x,y)=\frac{x}{1+\beta}-y$. Moreover, since $dist^{1+\beta}(x,y)\le\epsilon$, we have $\frac{x}{1+\beta}\le \epsilon+y$. Therefore,
    \begin{align*}
        |x-y|&=x-y \\
        &=dist^{1+\beta}(x,y)+(1-\frac{1}{1+\beta})x \\
        &=dist^{1+\beta}(x,y)+\beta \frac{x}{1+\beta} \\
        &\le \epsilon + \frac{\beta}{1+\beta}\epsilon +\beta y \\
        &=(1+\frac{\beta}{1+\beta})\epsilon+\beta y.
    \end{align*}
    Combine the results above together, we have,
    \begin{align*}
        |x-y|\le \beta y + (1+\frac{\beta}{1+\beta})\epsilon \le 2\max\{(1+\frac{\beta}{1+\beta})\epsilon, \beta y\}.
    \end{align*}
\end{proof}

\newpage
\section{Discussion on policy identification}
\label{sec:policy_identification}
In this section, we discuss on the application of \CAESAR to policy identification problem, its instance-dependent sample complexity and some intuitions related to the existing gap-dependent results.

We first provide a simple algorithm that utilizes \CAESAR to identify an $\epsilon-$optimal policy. The core idea behind the algorithm is we can use \CAESAR to evaluate all candidate policies up to an accuracy, then we can eliminate those policies with low estimated performance. By decreasing the evaluation error gradually, we can finally identify a near-optimal policy with high probability.

For notation simplicity, fixing the high-probability factor, we denote the sample complexity of \CAESAR by $\frac{\Theta(\Pi)}{\gamma^2}$, where $\Pi$ is the set of policies to be evaluated and $\gamma$ is the estimation error.

\begin{algorithm}[htb]
   \caption{Policy Identification based on \CAESAR}
   \label{alg:policy_iden}
\begin{algorithmic}
   \STATE {\bfseries Input:} Alg \CAESAR, optimal factor $\epsilon$, candidate policy set $\Pi$.
   \FOR{$i=1$ {\bfseries to} $\lceil \log_2(4/\epsilon) \rceil$}
   \STATE 1. Run \CAESAR to evaluate the performance of policies in $\Pi$ up to accuracy $\gamma = \frac{1}{2^i}$.
   \STATE 2. Eliminate $\pi^i$ if $\exists \pi^j\in\Pi, \hat V^{\pi^j}_1-\hat V^{\pi^i}_1> 2\gamma$, update $\Pi$.
   \ENDFOR
   \STATE {\bfseries Output:} Randomly pick $\pi^o$ from $\Pi$.
\end{algorithmic}
\end{algorithm}
\begin{theorem}
\label{theo:policy_identify}
    Implement Algorithm~\ref{alg:policy_iden}, we have that, with probability at least $1-\delta$, $\pi^o$ is $\epsilon-$optimal, i.e.,
    \begin{align*}
        V^*_1-V^{\pi^o}_1\le \epsilon.
    \end{align*}
    And the instance-dependent sample complexity is $\tilde{O}(\max_{\gamma\ge \epsilon}\frac{\Theta(\Pi_{\gamma})}{\gamma^2})$, where $\Pi_\gamma = \{\pi:V^*_1-V^\pi_1\le 8\gamma\}$.
\end{theorem}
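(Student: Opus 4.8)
The plan is to condition on a single high-probability event under which every invocation of \CAESAR returns accurate estimates, and then argue deterministically about the elimination loop of Algorithm~\ref{alg:policy_iden}.

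First I would control the failure probability. The algorithm runs \CAESAR for $R=\lceil\log_2(4/\epsilon)\rceil$ rounds, and in round $i$ it evaluates the current survivor set to accuracy $\gamma_i=2^{-i}$. Setting the confidence parameter of the $i$-th call to $\delta/R$ and taking a union bound yields, with probability at least $1-\delta$, an event $\mathcal{E}$ on which $|\hat V^\pi_1-V^\pi_1|\le\gamma_i$ for every round $i$ and every policy $\pi$ evaluated in that round. The survivor set entering round $i$ is random (it depends on rounds $1,\dots,i-1$), but since round $i$ draws fresh independent samples, the \CAESAR guarantee applies conditionally on that set, so the union bound over the $R$ rounds is legitimate. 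I work on $\mathcal{E}$ henceforth.

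Next I establish correctness. Let $\pi^*$ be an optimal policy in $\Pi$, so $V^*_1=V^{\pi^*}_1$. I claim $\pi^*$ is never eliminated: in any round $i$, for any competitor $\pi^j$ still present, $\hat V^{\pi^j}_1-\hat V^{\pi^*}_1\le(V^{\pi^j}_1+\gamma_i)-(V^*_1-\gamma_i)\le 2\gamma_i$, so the strict test $>2\gamma_i$ never fires for $\pi^*$, and it survives to the end. Now take any $\pi$ surviving the final round $R$. Since $\pi^*$ is present in round $R$ and did not eliminate $\pi$, I have $\hat V^{\pi^*}_1-\hat V^\pi_1\le 2\gamma_R$, hence
\[
V^*_1-V^\pi_1\le(V^*_1-\hat V^{\pi^*}_1)+(\hat V^{\pi^*}_1-\hat V^\pi_1)+(\hat V^\pi_1-V^\pi_1)\le 4\gamma_R\le\epsilon,
\]
using $\gamma_R=2^{-R}\le\epsilon/4$. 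As $\pi^o$ is drawn from the final survivor set, it is $\epsilon$-optimal.

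For the sample complexity I would relate the survivor set in each round to the gap classes $\Pi_\gamma=\{\pi:V^*_1-V^\pi_1\le 8\gamma\}$. The same chain of inequalities, applied to a policy surviving round $i-1$ (where the accuracy was $\gamma_{i-1}=2\gamma_i$), gives $V^*_1-V^\pi_1\le 4\gamma_{i-1}=8\gamma_i$, so the set entering round $i$ is contained in $\Pi_{\gamma_i}$. Thus round $i$ costs $\tilde{O}(\Theta(\Pi_{\gamma_i})/\gamma_i^2)$. For rounds with $\gamma_i\ge\epsilon$ this is at most $\max_{\gamma\ge\epsilon}\Theta(\Pi_\gamma)/\gamma^2$; for the final one or two rounds with $\gamma_i<\epsilon$ I use monotonicity $\Pi_{\gamma_i}\subseteq\Pi_\epsilon$ together with $1/\gamma_i^2\le 16/\epsilon^2$ to bound the cost by $16\,\Theta(\Pi_\epsilon)/\epsilon^2$, again dominated by the same maximum. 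Summing the $R=O(\log(1/\epsilon))$ rounds and folding the logarithmic factor into $\tilde{O}$ gives the stated $\tilde{O}(\max_{\gamma\ge\epsilon}\Theta(\Pi_\gamma)/\gamma^2)$. The main obstacle I anticipate is the adaptivity in the union bound: the policy set fed to \CAESAR in round $i$ depends on the random outcomes of earlier rounds, so I must invoke \CAESAR's guarantee conditionally on a fixed set with fresh independent samples rather than treating the sets as deterministic; the containment $\Pi_i\subseteq\Pi_{\gamma_i}$ and the ``$\pi^*$ is never eliminated'' claim then follow routinely from the accuracy event, with only mild care needed for the strict-versus-weak inequality in the elimination rule and for the final rounds where $\gamma_i$ dips below $\epsilon$.
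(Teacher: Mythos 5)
Your proof is correct and follows essentially the same route as the paper's: the optimal policy is never eliminated, survivors of the round at accuracy $\gamma$ are $4\gamma$-optimal (hence the set entering round $i$ lies in $\Pi_{\gamma_i}$), and the per-round cost $\tilde{O}(\Theta(\Pi_{\gamma_i})/\gamma_i^2)$ is summed over $O(\log(1/\epsilon))$ rounds. Your explicit union bound over rounds (with the adaptivity caveat) and your treatment of the final rounds where $\gamma_i<\epsilon$ are careful refinements of details the paper glosses over, not a different argument.
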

\begin{proof}
    On the one hand, based on the elimination rule in the algorithm, by running \CAESAR with the evaluation error $\gamma$, the optimal policy $\pi^*$ will not be eliminated with probability at least $1-\delta$. Since $\max_{\pi\in\Pi} \hat V^\pi_1 - \hat V^{\pi^*}_1 \le V^*_1+\gamma -(V^{\pi^*}_1-\gamma)\le 2\gamma$.
    
    On the other hand, if $V^*_1-V^{\pi^i}_1> 4\gamma$, then $\pi^i$ will be eliminated with probability at least $1-\delta$. Since $\max_{\pi\in\Pi} \hat V^\pi_1 - \hat V^{\pi^i}_1 > V^*_1-\gamma-(V^{\pi^i}_1+\gamma)> 2\gamma$.

    Therefore, by running Algorithm~\ref{alg:policy_iden}, the final policy set is not empty and for any policy $\pi$ in this set, it holds, $V^*_1-V^{\pi}_1\le \epsilon$ with probability at least $1-\delta$.

    Next, we analyse the sample complexity of Algorithm~\ref{alg:policy_iden}. Based on above analysis, within every iteration of the algorithm, we have a policy set containing $8\gamma-$optimal policies, and we use \CAESAR to evaluate the performance of these policies up to $\gamma$ accuracy. By Theorem~\ref{theo:main}, the sample complexity is $\frac{\Theta(\Pi_\gamma)}{\gamma^2}$. Therefore, the overall sample complexity is,
    \begin{align*}
        \sum_{\gamma}\frac{\Theta(\Pi_\gamma)}{\gamma^2}\le \tilde{O}(\max_{\gamma\ge\epsilon}\frac{\Theta(\Pi_\gamma)}{\gamma^2}).
    \end{align*}    
\end{proof}
This result is quite interesting since it provides another perspective beyond the existing gap-dependent results for policy identification. And these two results have some intuitive relations that may be of interest.

Roughly speaking, to identify an $\epsilon-$optimal policy for an MDP, the gap-dependent regret is described as, 
\begin{align*}
    O(\sum_{h,s,a}\frac{H\log K}{gap_h(s,a)}),
\end{align*}
where $gap_h(s,a)=V^*_h(s)-Q^*_h(s,a)$.

The value gap $gap_h(s,a)$ quantifies how sub-optimal the action $a$ is at state $s$. If the gap is small, it is difficult to distinguish and eliminate the sub-optimal action. At the same time, smaller gaps mean that there are more policies with similar performance to the optimal policy, i.e. the policy set $\Pi_\gamma$ is larger. Both our result and gap-dependent result can capture this intuition. We conjecture there exists a quantitative relationship between these two perspectives.

An interesting proposition of Theorem~\ref{theo:policy_identify} is to apply the same algorithm to the multi-reward setting. A similar instance-dependent sample complexity can be achieved $\tilde{O}(\max_{\gamma\ge\epsilon}\frac{\Theta(\Pi^\mathcal{R}_\gamma)}{\gamma^2})$ with the difference that $\Pi^{\mathcal{R}}_\gamma$ contains policies which is $8\gamma-$optimal for at least one reward function. This sample complexity captures the intrinsic difficulty of the problem by how similar the near-optimal policies under different rewards are which is consistent with the intuition.

\end{document}